



\documentclass[sigconf]{aamas}


\usepackage{balance} 

\usepackage[utf8]{inputenc} 
\usepackage[T1]{fontenc}    
\usepackage{booktabs}       
\usepackage{amsfonts}       
\usepackage{nicefrac}       
\usepackage{microtype}      

\usepackage{mysymbols}
\usepackage{attrib}
\usepackage{amsmath}
\usepackage{amsthm}
\usepackage{stmaryrd}
\usepackage{latexsym}
\usepackage{mathtools}
\usepackage{bm}
\usepackage{xcolor}
\usepackage{graphicx}
\usepackage{caption}
\usepackage{subcaption}
\usepackage{thmtools,thm-restate}
\theoremstyle{plain}

\newtheorem{remark}{Remark}

\theoremstyle{definition}

\usepackage{algorithm}
\usepackage{algorithmic}
\usepackage{cleveref}
\usepackage{enumitem}
\usepackage{multirow}
\usepackage{multicol}
\usepackage{wrapfig}

\DeclareMathOperator*{\argmax}{argmax}



\setcopyright{ifaamas}
\acmConference[AAMAS '23]{Proc.\@ of the 22nd International Conference
on Autonomous Agents and Multiagent Systems (AAMAS 2023)}{May 29 -- June 2, 2023}
{London, United Kingdom}{A.~Ricci, W.~Yeoh, N.~Agmon, B.~An (eds.)}
\copyrightyear{2023}
\acmYear{2023}
\acmDOI{}
\acmPrice{}
\acmISBN{}



\acmSubmissionID{???}


\title[MARL for AMR]{Multi-Agent Reinforcement Learning for Adaptive Mesh Refinement}


\author{Jiachen Yang}
\affiliation{
  \institution{LLNL}
  \city{Livermore, CA}
  \country{USA}
}
\email{yang40@llnl.gov}

\author{Ketan Mittal}
\affiliation{
  \institution{LLNL}
  \city{Livermore, CA}
  \country{USA}
}
\email{mittal3@llnl.gov}

\author{Tarik Dzanic}
\affiliation{
  \institution{Texas A\&M University}
  \city{College Station, TX}
  \country{USA}
}
\email{tdzanic@tamu.edu}

\author{Socratis Petrides}
\affiliation{
  \institution{LLNL}
  \city{Livermore, CA}
  \country{USA}
  }
\email{petrides1@llnl.gov}

\author{Brendan Keith}
\affiliation{
  \institution{Brown University}
  \city{Providence, RI}
  \country{USA}
  }
\email{brendan_keith@brown.edu}

\author{Brenden Petersen}
\affiliation{
  \institution{LLNL}
  \city{Livermore, CA}
  \country{USA}
  }
\email{petersen33@llnl.gov}

\author{Daniel Faissol}
\affiliation{
  \institution{LLNL}
  \city{Livermore, CA}
  \country{USA}
  }
\email{faissol1@llnl.gov}

\author{Robert Anderson}
\affiliation{
  \institution{LLNL}
  \city{Livermore, CA}
  \country{USA}
  }
\email{anderson110@llnl.gov}


\begin{abstract}
Adaptive mesh refinement (AMR) is necessary for efficient finite element simulations of complex physical phenomenon, as it allocates limited computational budget based on the need for higher or lower resolution, which varies over space and time. We present a novel formulation of AMR as a fully-cooperative Markov game, in which each element is an independent agent who makes refinement and de-refinement choices based on local information. We design a novel deep multi-agent reinforcement learning (MARL) algorithm called Value Decomposition Graph Network (VDGN), which solves the two core challenges that AMR poses for MARL: posthumous credit assignment due to agent creation and deletion, and unstructured observations due to the diversity of mesh geometries. For the first time, we show that MARL enables anticipatory refinement of regions that will encounter complex features at future times, thereby unlocking entirely new regions of the error-cost objective landscape that are inaccessible by traditional methods based on local error estimators. Comprehensive experiments show that VDGN policies significantly outperform error threshold-based policies in global error and cost metrics. We show that learned policies generalize to test problems with physical features, mesh geometries, and longer simulation times that were not seen in training. We also extend VDGN with multi-objective optimization capabilities to find the Pareto front of the tradeoff between cost and error.
\end{abstract}



\keywords{multi-agent reinforcement learning; adaptive mesh refinement; numerical analysis; graph neural network}


         
\newcommand{\BibTeX}{\rm B\kern-.05em{\sc i\kern-.025em b}\kern-.08em\TeX}


\begin{document}


\pagestyle{fancy}
\fancyhead{}


\maketitle 


\section{Introduction}

The finite element method (FEM) \citep{brenner2007mathematical} is instrumental to numerical simulation of partial differential equations (PDEs) in computational science and engineering \citep{reddy2010finite,monk2003finite}.
For multi-scale systems with large variations in local features, such as combinations of regions with large gradients that require high resolution and regions with flat solutions where coarse resolution is sufficient, an efficient trade-off between solution accuracy and computational cost requires the use of adaptive mesh refinement (AMR).
The goal of AMR is to adjust the finite element mesh resolution dynamically during a simulation, by refining regions that can contribute the most to improvement in accuracy relative to computational cost.

For evolutionary (i.e., time-dependent) PDEs in particular, a long-standing challenge is to find anticipatory refinement strategies that optimize a long-term objective, such as an efficient tradeoff between final solution accuracy and cumulative degrees of freedom (DoF).
Anticipatory refinement strategies would preemptively refine regions of the mesh that will contain solution features (e.g., large gradients) right before these features actually occur.
This is hard for existing approaches to achieve.
Traditional methods for AMR rely on estimating local refinement indicators (e.g., local error \citep{zienkiewicz1992superconvergent}) and heuristic marking strategies (e.g., greedy error-based marking) \citep{bangerth2013adaptive, Cerveny2019}.
Recent data-driven methods for mesh refinement apply supervised learning to learn a fast neural network estimator of the solution from a fixed dataset of pre-generated high-resolution solutions \citep{obiols2022nunet,wallwork2022e2n}.
However, greedy strategies based on local information cannot produce an optimal sequence of anticipatory refinement decisions in general, as they do not have sufficient information about features that may occur at subsequent time steps, while supervised methods do not directly optimize a given long-term objective.
These challenges can be addressed by formulating AMR as a sequential decision-making problem and using reinforcement learning (RL) \citep{sutton2018reinforcement} to optimize a given objective directly.
However, current single-agent RL approaches for AMR either do not easily support refinement of multiple elements per solver step \citep{yang2021reinforcement}; faces different definitions of the environment transition at training and test time \citep{foucart2022deep}; or work by selecting a single global error threshold \citep{gillette2022learning}, which poses difficulties for anticipatory refinement.

In this work, we present the first formulation of AMR as a Markov game \citep{owen1982game,littman1994markov} and propose a novel fully-cooperative deep multi-agent reinforcement learning (MARL) algorithm \citep{foerster2018deep,hernandez2019survey,yang2021cooperation} called Value Decomposition Graph Network (VDGN), shown in \Cref{fig:VDGN_top}, to train a team of independently and simultaneously acting agents, each of which is a decision-maker for an element, to optimize a global performance metric and find anticipatory refinement policies.
Because refinement and de-refinement actions at each step of the AMR Markov game leads to the creation and deletion of agents, we face the posthumous credit assignment problem \citep{cohen2021use}: agents who contributed to a future reward are not necessarily present at the future time to experience it.
We show that VDGN, by virtue of centralized training with decentralized execution \citep{oliehoek2016concise}, addresses this key challenge.
By leveraging graph networks as the inductive bias \citep{battaglia2018relational}, VDGN supports meshes with varying number of elements at each time step and can be applied to meshes of arbitrary size, depth, and geometry.
Moreover, graph attention layers \citep{velivckovic2018graph} in VDGN enable each element to receive information within a large local neighborhood, so as to anticipate incoming or outgoing solution features and learn to take preemptive actions.
Experimentally, using the advection equation as a pedagogical benchmark problem, we show for the first time that MARL: (a) displays anticipatory refinement behavior;
(b) generalizes to different initial conditions, initial mesh resolutions, simulation durations, and mesh geometries;
(c) significantly improves error and cost metrics compared to local error threshold-based policies, and unlocks new regions of the error-cost optimization landscape. 
Furthermore, we augment VDGN with a multi-objective optimization method to train a single policy that discovers the Pareto front of error and cost.

\begin{figure*}[t]
    \centering
    \includegraphics[width=1.0\linewidth]{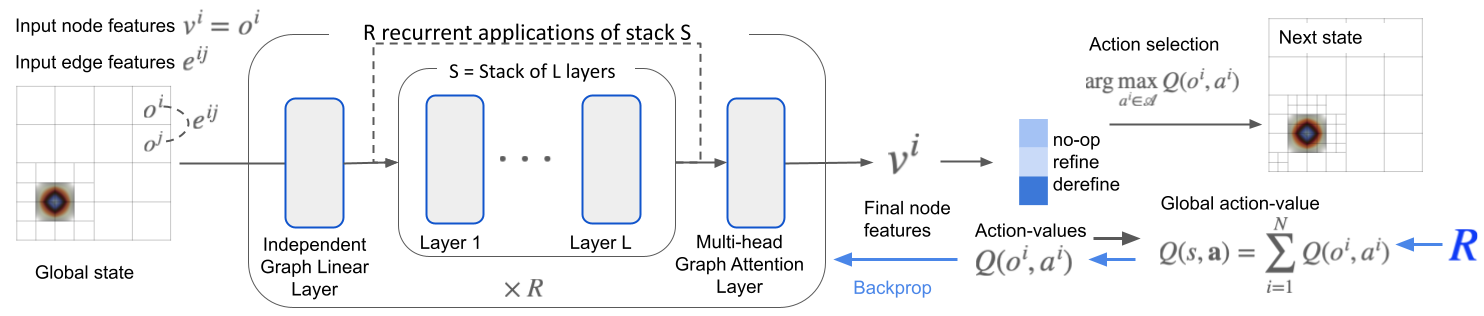}
    \caption{A state-action-next state transition in the Markov game for AMR. Shown in detail are the action selection (rightward) and learning (leftward) processes in the Value Decomposition Graph Network.}
    \label{fig:VDGN_top}
    \vspace{-5pt}
\end{figure*}

\section{Preliminaries}
\label{sec:preliminaries}

\subsection{Finite Element Method}
\label{sec:fem}

The finite element method \citep{brenner2007mathematical} models the domain of a PDE with a mesh that consists of nonoverlapping geometric elements.
Using the weak formulation and choosing basis functions to represent the PDE solution on these elements, one obtains a system of linear equations that can be numerically solved.
The shape and sizes of elements determine solution error, while the computational cost is primarily determined by the number of elements.
Improving the trade-off between error and cost is the objective of AMR.
This work focuses purely on $h$-refinement, in which refinement of a coarse element produces multiple new fine elements (e.g., isotropic refinement of a 2D quadrilateral element produces four new elements), whereas de-refinement of a group of fine elements, restores an original coarse element. 
In both cases, the original coarse element or fine elements are removed from the FEM discretization.

\vspace{-5pt}
\subsection{Notation} 
\label{sec:notation}

Each element is identified with an agent, denoted by $i \in \lbrace 1,\dotsc, n_t \rbrace$, where $n_t$ is variable across time step $t=0,1,\dotsc,T_{\max}$ within each episode of length $T_{\max}$, and $n_t \in [N_{\max}]$, where $N_{\max}$ is constrained by $\text{depth}_{\max}$ to be $n_x \cdot n_y \cdot 4^{\text{depth}_{\max}}$ in the case of quadrilateral elements.
Let each element $i$ have its own individual observation space $\Scal^i$ and action space $\Acal^i$.
Let $s$ denote the global state, $o^i$ and $a^i$ denote agent $i$'s individual observation and action, respectively, and $a \defeq (a^1,\dotsc,a^{n_t})$ denote the joint action by $n_t$ agents.
In the case of AMR, all agents have the same observation and action spaces.
Let $R$ denote a single global reward for all agents and let $P$ denote the environment transition function, both of which are well-defined for any number of agents $n_t \in [N_{\max}]$.
Let $\gamma \in (0,1]$ be the discount factor.
The FEM solver time at episode step $t$ is denoted by $\tau$ (we omit the dependency $\tau(t)$ for brevity), which advances by $\tau_{\text{step}}$ in increments of $d\tau=0.002$ during each discrete step $t \rightarrow t+1$ up to final simulation time $\tau_f$.

For element $i$ at time $t$, let $u^i_t$ and $\uhat^i_t$ denote the true and numerical solution, respectively, and let $c^i_t \defeq \lVert u^i_t - \uhat^i_t \rVert_2$ denote the $\mathcal{L}_2$ norm of the error.
Let $c_t \defeq \sqrt{\sum_{i=1}^{n_t} (c^i_t)^2}$ denote the global error of a mesh with $n_t$ elements.
Let $\text{depth}(i) = 0,1,\dotsc,\text{depth}_{\max}$ denote the refinement depth of element $i$.
Let $d_t \defeq \sum_{s=0}^t \text{DoF}_s$ denote the cumulative degrees of freedom (DoF) of the mesh up to step $t$, which is a measure of computational cost, and let $d_{\text{thres}}$ be a threshold (i.e., constraint) on the cumulative DoF seen during training.

\subsection{Value Decomposition Network}
\label{sec:vdn}

In the paradigm of centralized training with decentralized execution (CTDE) \citep{oliehoek2016concise}, global state and reward information is used in centralized optimization of a team objective at training time, while decentralized execution allows each agent to take actions conditioned only on their own local observations, independently of other agents, both at training time and at test time.
One simple yet effective way to implement this in value-based MARL is Value Decomposition Networks (VDN) \citep{sunehag2018value}.
VDN learns within the class of global action-value functions $Q(s,a)$ that decompose additively:
\begin{align}\label{eq:vdn}
    Q(s,a) \defeq \sum_{i=1}^n Q^i(s^i,a^i) \, ,
\end{align}
where $Q^i$ is an individual utility function representing agent $i$'s contribution to the joint expected return.

This decomposition is amenable for use in Q-learning \citep{watkins1992q}, as it satisfies the \textit{individual-global-max} (IGM) condition:
\begin{align}\label{eq:igm}
    \argmax_{\abf \in \Pi_{i=1}^n \Acal^i} Q(s,a) = \left[ 
    \argmax_{a^1 \in \Acal^1} Q^1(s^1,a^1), \dotsc,
    \argmax_{a^n \in \Acal^n} Q^n(s^n,a^n)
    \right] \, .
\end{align}
This means the individual maxima of $Q^i$ provide the global maximum of the joint $Q$ function for the Q-learning update step \citep{watkins1992q,mnih2015human}, which scales linearly rather than exponentially in the number of agents.
Using function approximation for $Q^i_{\theta}$ with parameter $\theta$, the VDN update equations using replay buffer $\Bcal$ are:
\begin{align}
    \theta &\leftarrow \theta - \nabla_{\theta} \Ebb_{(s_t,a_t,r_t,s_{t+1}) \sim \Bcal} \left[ \left( y_{t+1} - Q_{\theta}(s_t,a_t) \right)^2 \right] \label{eq:q-learning} \\
    y_{t+1} &\defeq R_t + \bot \gamma Q_{\theta}(s_{t+1},a)\vert_{a = [\argmax_{a^i} Q^i_{\theta}(s_{t+1}^i, a^i)]_{i=1}^n} \label{eq:td-target} \, ,
\end{align}
where $\bot$ is the stop-gradient operator.

\section{Agent creation and deletion}
\label{sec:creation}

Each agent's refinement and de-refinement action has long-term impact on the global error (e.g., refining before arrival of a feature would reduce error upon its arrival).
However, since all elements that refine or de-refine are removed immediately and their individual trajectories terminate, they do not observe future states and future rewards.
This is the posthumous multi-agent credit assignment problem \citep{cohen2021use}, which we propose to address using centralized training.
First, we show that an environment with variable but bounded number of agents can be written as a Markov game \citep{owen1982game} (see proof in \Cref{app:proofs}).
\begin{restatable}{proposition}{AgentCreationMarkov}
\label{prop:agent-creation-markov-game}
Let $\Mcal$ denote a multi-agent environment where the number of agents $n_t = 1,\dotsc,N_{\text{max}}$ can change at every time step $t = 0,1,\dotsc,T_{\text{max}}$ due to agent-creation and agent-deletion actions in each agent's action space.
At each time $t$, the environment is defined by the tuple 
$\left( \lbrace \Scal^i \rbrace_{i=1}^{n_t}, \lbrace \Acal^i \rbrace_{i=1}^{n_t}, R, P, \gamma, n_t, N_{\max} \right) $.
$\Mcal$ can be written as a Markov game with a stationary global state space and joint action space that do not depend on the number of currently existing agents.
\end{restatable}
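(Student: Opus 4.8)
The plan is to realize $\Mcal$ as a Markov game with a \emph{fixed} roster of $N_{\max}$ agents, by padding the currently-missing agents with a distinguished ``nonexistent'' symbol $\emptyset$, so that agent-creation and agent-deletion become ordinary transitions into and out of $\emptyset$. Concretely, I would augment each agent's individual state space to $\bar{\Scal}^i \defeq \Scal^i \cup \lbrace \emptyset \rbrace$ and its individual action space to $\bar{\Acal}^i \defeq \Acal^i \cup \lbrace \emptyset \rbrace$, where a slot holding $\emptyset$ has $\emptyset$ as its only admissible action. These are $t$-independent sets, so the global state space $\bar{\Scal} \defeq \prod_{i=1}^{N_{\max}} \bar{\Scal}^i$ and the joint action space $\bar{\Acal} \defeq \prod_{i=1}^{N_{\max}} \bar{\Acal}^i$ do not depend on $t$ or on the current agent count $n_t$. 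I would then identify the configuration of $\Mcal$ at time $t$ --- the set $E_t \subseteq [N_{\max}]$ of occupied slots, with $\lvert E_t \rvert = n_t$, together with the individual states $( s_t^i )_{i \in E_t}$ --- with the global state $\bar s_t \in \bar{\Scal}$ defined by $\bar s_t^i \defeq s_t^i$ for $i \in E_t$ and $\bar s_t^i \defeq \emptyset$ otherwise. This map is injective, and $\bar s_t$ alone recovers $E_t$, hence $n_t$, and every individual state.

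Next I would define the transition kernel $\bar P(\cdot \mid \bar s_t, \bar a_t)$ on the stationary space as follows: (i) from $\bar s_t$ read off $E_t$, and from $\bar a_t$ the actions $( a_t^i )_{i \in E_t}$ of the occupied slots (components of $\bar a_t$ outside $E_t$ are pinned to $\emptyset$ and ignored); (ii) apply the original rules of $\Mcal$ --- the creation and deletion actions among $( a_t^i )_{i \in E_t}$ together with $P$ determine which agents survive, their new individual states, and how many new agents are created, a number that is at most $N_{\max}$ minus the survivor count by the boundedness hypothesis, so the new agents can be injected into currently-empty slots by any fixed deterministic rule (in the AMR setting the parent--child structure of the refinement tree supplies a canonical such rule); (iii) pad the resulting occupied set $E_{t+1}$ and states $( s_{t+1}^i )_{i \in E_{t+1}}$ with $\emptyset$ to obtain $\bar s_{t+1}$. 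The reward is carried over unchanged, $\bar R(\bar s_t, \bar a_t) \defeq R$ evaluated on the occupied slots --- well-defined precisely because $R$ is defined for every $n_t \in [N_{\max}]$ --- and the discount $\gamma$ is kept. By construction $\bar P$ and $\bar R$ are functions of the current $(\bar s_t, \bar a_t)$ only, so the tuple $\bigl( \lbrace \bar{\Scal}^i \rbrace_{i=1}^{N_{\max}}, \lbrace \bar{\Acal}^i \rbrace_{i=1}^{N_{\max}}, \bar R, \bar P, \gamma, N_{\max} \bigr)$ is a Markov game with stationary global state and joint action spaces; moreover the embedding $\bar s_t \leftrightarrow ( E_t, (s_t^i)_{i \in E_t})$ intertwines the dynamics and rewards of $\Mcal$ with those of this game on the reachable set, so the two are equivalent for policy optimization.

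The step I expect to be the main obstacle is (ii): one must pin down a genuine \emph{function} assigning newly created agents to empty slots, so that $\bar P$ depends only on $(\bar s_t, \bar a_t)$ and not on any hidden side information; the hypothesis $n_t \le N_{\max}$ is exactly what guarantees that enough empty slots exist, and a fixed tie-breaking rule (or, for AMR, the refinement tree) removes the remaining ambiguity. A secondary point to make explicit is that ``stationary'' refers only to the index sets $\bar{\Scal}, \bar{\Acal}$ being time-independent --- the subset of states actually visited still varies with $t$, but that is irrelevant to the Markov-game formalism.
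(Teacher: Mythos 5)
Your proposal is correct and follows essentially the same route as the paper's proof: pad the global state space with a dummy ``nonexistence'' symbol to size $N_{\max}$, pad the joint action space with forced no-op/$\emptyset$ actions, and overload $R$ and $P$ (already well-defined for any $n_t \le N_{\max}$) onto the resulting stationary spaces. Your step (ii) on fixing a deterministic slot-assignment rule for newly created agents is a useful extra precaution that the paper leaves implicit, but it does not change the argument.
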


\vspace{-5pt}
\textbf{Centralized training for posthumous multi-agent credit assignment.}
Our key insight for addressing the posthumous credit assignment problem stems from \Cref{prop:agent-creation-markov-game}: because the environment is Markov and stationary, we can use centralized training with a global reward to train a global state-action value function $Q(s,a)$ that (a) persists across time and (b) evaluates the expected future return of any $(s_t,a_t)$.
Crucially, these two properties enable $Q(s,a)$ to sidestep the issue of posthumous credit assignment, since the value estimate of a global state will be updated by future rewards via temporal difference learning regardless of agent deletion and creation.
To arrive at a truly multi-agent approach, we factorize the global action space so that each element uses its individual utility function $Q^i(o^i,a^i)$ to choose its own action from $\lbrace \text{no-op, refine, de-refine} \rbrace$.
This immediately leads to the paradigm of centralized training with decentralized execution \citep{oliehoek2016concise}, of which VDN (\cref{eq:vdn}) is an example.
We discuss the pros and cons of other formulations in \Cref{app:other_formulations}.

\textbf{Effective space.}
Agent creation and deletion means that the accessible region of the global state-action space changes over time during each episode.
While this is not a new phenomenon,
AMR is special in that the sizes of the informative subset of the global state and the available action set depend directly on the current number of existing agents.
Hence, a key observation for algorithm development is that a model-free multi-agent reinforcement learning algorithm only needs to account for the accessible state-action space $\prod_{i=1}^{n_t} \Scal^i \times \Acal^i$ at each time step, since the expansion or contraction of that space is part of the environment dynamics that are accounted implicitly by model-free MARL methods.
In practice, this means all dummy states $s_{\nexists}$ do not need to be input to policies, and policies do not need to output the  (mandatory) no-op actions for the $N_{\max} - n_t$ nonexistent elements at time $t$.
This informs our concrete definition of the Markov game for AMR in \Cref{sec:markov_game}.

\section{AMR as a Markov game}
\label{sec:markov_game}

\textbf{State.}
The global state is defined by the collection of all individual element observations and pairwise relational features.
The individual observation $o^i$ of element $i$ consists of:
\begin{itemize}
    \item $\log(c^i_t)$: logarithm of error at element $i$ at time $t$.
    \item 1-hot representation of element refinement depth.
\end{itemize}
Relational features $e^{ij}$ are defined for each pair of spatially-adjacent elements $i,j$ that form edge $e = (i,j)$ (directed from $j$ to $i$) in the graph representation of the mesh (see \Cref{sec:methods}), as a 1-dimensional vector concatenation of the following:
\begin{itemize}
    \item $\mathbf{1}[\text{depth}(i) - \text{depth}(j)]$: 1-hot vector indicator of the difference in refinement depth between $i$ and $j$.
    \item $\langle u_j, \frac{\Delta x}{\lVert \Delta x \rVert^2_2} \rangle \cdot \tau_{\text{step}}$: Dimensionless inner product of velocity $u_j$ at element $j$ with the displacement vector between $i$ and $j$. Here $\Delta x \defeq (x_{i} - x_{j})$, where $x_i$ is the center of element $i$.
\end{itemize}
We use the velocity $u_j$ at the sender element so that the receiver element is informed about incoming or outgoing PDE features and can act preemptively.

\textbf{Action.}
All elements have the same action space:
\begin{align*}
\vspace{-5pt}
    \Acal \defeq \lbrace \texttt{no-op}, \texttt{refine}, \texttt{de-refine} \rbrace
    ,
\vspace{-5pt}
\end{align*}
where \texttt{no-op} means the element persists to the next decision step; \texttt{refine} means the element is equipartitioned into four smaller elements; \texttt{de-refine} means that the element opts to coalesce into a larger coarse element, subject to feasibility constraints specified by the transition function (see below).

\textbf{Transition.}
Given the current state and agents' joint action, which is chosen simultaneously by all agents, the transition $P \colon \Scal \times \Acal \mapsto \Scal$ is defined by these steps:
\begin{enumerate}[leftmargin=*]
    \item Apply de-refinement rules to each element $i$ whose action is \texttt{de-refine}: (a) if, within its group of sibling elements, a majority (or tie) of elements chose \texttt{de-refine}, then the whole group is de-refined; (b) if it is at the coarsest level, i.e., $\text{depth}(i)=0$, or it belongs to a group of sibling elements in which \textit{any} element chose to refine, then its choice is overridden to be \texttt{no-op}.
    \item Apply refinement to each agent who chose $\texttt{refine}$.
    \item Step the FEM simulation forward in time by $\tau_{\text{step}}$ and compute a new solution on the updated mesh. An episode terminates when $\tau + d\tau > \tau_f$ or $d_t > d_{\text{thres}}$. This follows standard procedure in FEM \citep{anderson2019mfem} and knowledge of the transition dynamics is not used by the proposed model-free MARL approach.
\end{enumerate}

\textbf{Reward.}
We carefully design a shaped reward \citep{ng1999policy} that encourages agents to minimize the final global error.
Let $c_0 = 1.0$ be a dummy initial global error.
The reward at step $t = 1,2,\dotsc$ is
\begin{align}
R_t = \begin{cases}
        p \cdot (\tau - \tau_f) + \log(c_{t-1}), \text{ if $d_t > d_{\text{thres}} \wedge \tau + d\tau < \tau_f$} \\
        \log(c_{t-1} / c_t) - p \cdot \max(0, \frac{d_t}{d_{\text{thres}}} - 1) \text{, if $ \tau + d\tau \geq \tau_f$} \\
        \log(c_{t-1}/c_t), \quad \text{ else}
    \end{cases}
\end{align}
The first case applies a penalty $p$ when the cumulative DoF exceeds the DoF threshold before reaching the simulation final time.
The second case applies a penalty based on the amount by which the DoF threshold is exceeded when the final time is reached.
The last case provides the agents with a dense learning signal based on potential-based reward shaping \citep{ng1999policy}, so that the episode return (i.e., cumulative reward at the end of the episode) is $R = -\log(c_{T_{\max}})$ in the absence of any of the other penalties.

\textbf{Objective.}
We consider a fully-cooperative Markov game in which all agents aim to find a shared parameterized policy $\pi_{\theta} \colon \Scal \times \Acal \mapsto [0,1]$ to maximize the objective
\begin{align}
    \max_{\theta} J(\theta) \defeq \Ebb_{s_{t+1} \sim P(\cdot|a,s_t), a \sim \pi_{\theta}(\cdot|s)} \left[ \sum_{t=0}^T \gamma^t R_t \right]
\end{align}

\begin{remark}
By using time limit $\tau_f$ and DoF threshold $d_{\text{thres}}$ in the reward at training time, while not letting agents observe absolute time and cumulative DoF, agents must learn to make optimal decisions based on local observations, which enables generalization to longer simulation duration and DoF budgets at test time.
\end{remark}

\begin{remark}
For problems without an easily computable analytical solution to compute the true error, one may use an error estimator for the element observations.
The reward, which is only needed at training time and not at test time, can still be based on error with respect to a highly resolved reference simulation.
Empirical results on this configuration are provided in \Cref{app:results}.
\end{remark}

\begin{figure}[t]
    \centering
    \includegraphics[width=1.0\linewidth]{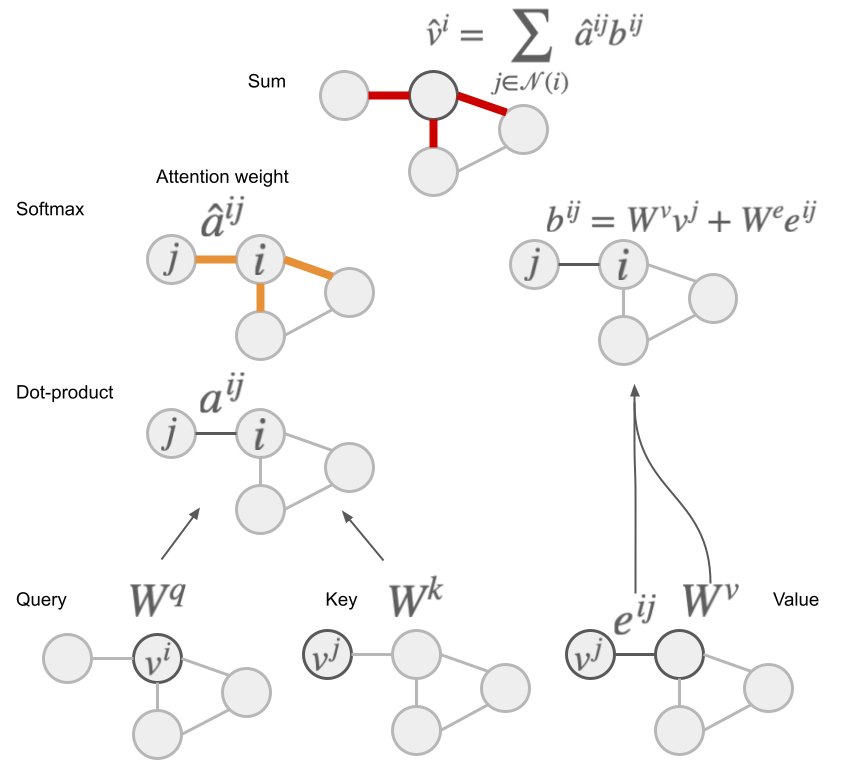}
    \caption{Graph Attention Layer. A \textcolor{orange}{softmax} over all edges connected to node $i$ produces attention weights $\ahat^{ij}$ for edge $(i,j)$ (\cref{eq:attention_weights}). A \textcolor{red}{weighted sum} over values $b^{ij}$ with weight $\ahat^{ij}$ produces the updated node feature $\vhat^i$ (\cref{eq:node_update}).}
    \label{fig:VDGN_graph_layer}
    \vspace{-10pt}
\end{figure}

\section{Value Decomposition Graph Network}
\label{sec:methods}

To enable anticipatory refinement in the time-dependent case, an element must observe a large enough neighborhood around itself.
However, it is difficult to define a fixed local observation window for general mesh geometries, element shapes, and boundaries.
Instead, we use Graph Networks \citep{scarselli2008graph,battaglia2018relational} as a general inductive bias to learn representations of element interactions on arbitrary meshes.

Specifically, we construct a policy based on Graph Attention Networks \citep{velivckovic2018graph}, which incorporates self-attention \citep{vaswani2017attention} into graph networks.
At each step $t$, the mesh is represented as a graph $\Gcal = (V_t, E_t)$.
Each node $v^i$ in $V = \lbrace v^i \rbrace_{i=1:n_t}$ corresponds to element $i$ and its feature is initialized to be the element observation $o^i$.
$E = \lbrace e^k = (r^k,s^k) \rbrace_{k=1:N^e}$
is a set of edges, where an edge $e^k$ exists between sender node $s^k$ and receiver node $r^k$ if and only if they are spatially adjacent (i.e., sharing either a face or a vertex in the mesh).
Its feature is initialized to be the relational feature $e^{r^ks^k}$.

\subsection{Graph Attention Layer}
\label{sec:graph_attention_layer}

In a graph attention layer, each node is updated by a weighted aggregation over its neighbors: weights are computed by self-attention using node features as queries and keys, then applied to values that are computed from node and edge features.

Self-attention weights $\ahat^{ij}$ for each node $i$ are computed as follows (see \Cref{fig:VDGN_graph_layer}): 1) we define queries, keys, and values as linear projections of node features, via weight matrices $W^q$, $W^k$, and $W^v$ (all $\in \Rbb^{d \times \dim(v)}$) shared for all nodes; 2) for each edge $(i,j)$, we compute a scalar pairwise interaction term $a^{ij}$ using the dot-product of queries and keys; 3) for each receiver node $i$ with sender node $j \in \Ncal_i$, we define the attention weight as the $j$-th component of a \texttt{softmax} over all neighbors $k \in \Ncal_i$:
\begin{align}
\vspace{-10pt}
    a^{ij} &\defeq W^q v^i \cdot W^k v^j && \text{for } (i,j) \in E \, , \label{eq:attention_dot_product} \\
    \ahat^{ij} &\defeq \texttt{softmax}_j(\lbrace a^{ik} \rbrace_k) = \frac{\exp(a^{ij})}{\sum_{k \in \Ncal_i} \exp(a^{ik})} && \text{for } j \in \Ncal_i \, . \label{eq:attention_weights}
\vspace{-10pt}
\end{align}
We use these attention weights to compute the new feature for each node $i$ as a linear combination over its neighbors $j \in \Ncal_i$ of projected values $W^v v^j$.
Edge features $e^{ij}$, with linear projection using $W^e \in \Rbb^{d \times \dim(e)}$,  capture the relational part of the observation:
\begin{align}
\vspace{-10pt}
    b^{ij} &\defeq W^v v^j + W^e e^{ij} && \text{for } (i,j) \in E \, , \label{eq:attention_value} \\
    \vhat^i &\defeq \sum_{j \in \Ncal_i} \ahat^{ij} b^{ij} && \text{for } i \in V \, . \label{eq:node_update}
\vspace{-10pt}
\end{align}
Despite being a special case of the most general message-passing flavor of graph networks \citep{battaglia2018relational}, graph attention networks separate the learning of $\ahat^{ij}$, the scalar importance of interaction between $i$ and $j$ relative to other neighbors, from the learning of $b^{ij}$, the vector determining how $j$ affects $i$.
This additional inductive bias reduces the functional search space and can improve learning with large receptive fields around each node, just as attention is useful for long-range interactions in sequence data \citep{vaswani2017attention}.

\textbf{Multi-head graph attention layer.}
We extend graph attention by building on the mechanism of multi-head attention \citep{vaswani2017attention}, which uses $H$ independent linear projections $(W^{q,h}, W^{k,h}, W^{v,h}, W^{e,h})$ for queries, keys, values and edges (all projected to dimension $d/H$), and results in $H$ independent sets of attention weights $\ahat^{ij,h}$, with $h = 1,2,\dotsc,H$ .
This enables attention to different learned representation spaces and was found to stabilize learning \citep{velivckovic2018graph}.
The new node representation with multi-head attention is the concatenation of all output heads, with an output linear projection $W^o \in \Rbb^{d \times d}$.
\Cref{app:multi-head} shows the multi-head versions of \cref{eq:attention_dot_product,eq:attention_weights,eq:attention_value,eq:node_update}.

\subsection{Value Decomposition Graph Network}
\label{sec:vdgn}

We use the graph attention layer to define the Value Decomposition Graph Network (VDGN), shown in \Cref{fig:VDGN_top}.
Firstly, an independent graph network block \citep{battaglia2018relational} linearly projects nodes and edges independently to $\Rbb^d$.

\begin{wrapfigure}{l}{0.22\textwidth}
    \vspace{-15pt}
    \centering
    \includegraphics[width=0.22\textwidth]{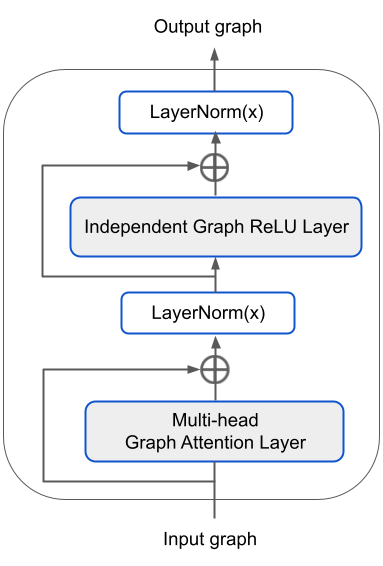}
    \caption{VDGN layer}
    \label{fig:VDGN_layer}
    \vspace{-10pt}
\end{wrapfigure}Each layer $F$ of VDGN involves two sub-layers: a multi-head graph attention layer followed by a fully-connected independent graph network block with \texttt{ReLU} nonlinearity.
For each of these sub-layers, we use residual connections \citep{he2016deep} and layer normalization \citep{ba2016layer}, so that the transformation of input graph $g$ is $\text{LayerNorm}(g + \text{SubLayer}(g))$.
This is visualized in \Cref{fig:VDGN_layer}.
We define a VDGN \textit{stack} as $S \defeq F_L(F_{L-1}(\dotsm F_1(g)\dotsm))$ with $L$ unique layers without weight sharing, then define a single forward pass by $r$ recurrent applications of $S$:
$P(g) = S(S(\dotsm S(g)\dotsm))$ with $r$ instances of $S$.
Finally, to produce $|\Acal|$ action-values for each node (i.e., element) $i$, we apply a final graph attention layer whose output linear projection is $W^o_{\text{out}} \in \Rbb^{|\Acal| \times d}$, so that each final node representation is $\vhat^i \in \Rbb^{|\Acal|}$, interpreted as the individual element utility $Q(o^i,a^i)$ for all possible actions $a^i \in \Acal$.

VDGN is trained using \cref{eq:vdn} in a standard Q-learning algorithm \citep{mnih2015human} (see the leftward process in \Cref{fig:VDGN_top}).
Since VDGN fundamentally is a $Q$ function-based method, we can extend it with a number of independent algorithmic improvements \citep{hessel2018rainbow} to single-agent Deep Q Network \citep{mnih2015human} that provide complementary gains in learning speed and performance.
These include
double Q-learning \citep{van2016deep}, dueling networks \citep{wang2016dueling}, and prioritized replay \citep{schaul2015prioritized}.
Details are provided in \Cref{app:value_learning_improvements}.
We did not employ the other improvements such as noisy networks, distributional Q-learning, and multi-step returns because the AMR environment dynamics are deterministic for the PDEs we consider and the episode horizon at train time is short.

\subsection{Symmetries}
\label{sec:symmetries}

Methods for FEM and AMR should respect symmetries of the physics being simulated.
For the simulations of interest in this work, we require a refinement policy to satisfy two properties: a) spatial equivariance: given a spatial rotation or translation of the PDE, the mesh refinement decisions should also rotate or translate in the same way; b) time invariance: the same global PDE state at different absolute times should result in the same refinement decisions.
By construction of node and edge features, and the fact that graph neural networks operate on individual nodes and edges independently, we have the following result, proved in \Cref{app:proofs}.
\begin{restatable}{proposition}{Symmetry}\label{prop:symmetry}
VDGN is equivariant to global rotations and translations of the error and velocity field, and it is time invariant.
\end{restatable}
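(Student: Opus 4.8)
The plan is to reduce the claim to three facts: (i) every operation in VDGN is permutation-equivariant over the elements; (ii) a rigid motion of the PDE, together with the induced rigid motion of the mesh, induces a graph isomorphism that leaves all node and edge feature \emph{values} unchanged; and (iii) none of those features depends on absolute time. Facts (i)--(ii) give spatial equivariance, and (iii) gives time invariance.

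For (i), I would note that each graph attention layer (\cref{eq:attention_dot_product,eq:attention_weights,eq:attention_value,eq:node_update}), each independent node/edge block, each residual connection, and each layer normalization acts on every node and edge through the \emph{same}, index-free weight matrices; the only construct that references indices is the softmax in \cref{eq:attention_weights}, and it ranges over the neighborhood $\Ncal_i$, which is carried to $\Ncal_{\sigma(i)}$ by any adjacency-preserving relabeling $\sigma$. Hence a single layer maps the $\sigma$-relabeled input graph to the $\sigma$-relabeled output graph, and since a composition of permutation-equivariant maps is permutation-equivariant, so does the full forward pass: the stack $S$, its $r$ recurrent applications, and the final output attention layer. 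Consequently the per-element utilities $Q(o^i,a^i)$ — and therefore the greedy choices $\argmax_{a^i} Q(o^i,a^i)$ — are permuted by $\sigma$. The identical argument covers the multi-head variant of \Cref{app:multi-head}.

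For (ii), a translation shifts element centers $x_i$ but leaves displacements $\Delta x = x_i - x_j$ and the velocity samples $u_j$ invariant; an orthogonal rotation $\rho$ sends $x_i \mapsto \rho x_i$, hence $\Delta x \mapsto \rho\,\Delta x$, and transports the velocity field as a vector field, $u_j \mapsto \rho\,u_j$, so $\langle \rho u_j, \rho \Delta x\rangle = \langle u_j,\Delta x\rangle$ and $\lVert \rho\Delta x\rVert_2 = \lVert \Delta x\rVert_2$, leaving the dimensionless inner-product edge feature unchanged. Refinement depths and depth differences are combinatorial, so the one-hot features are unchanged, and adjacency is preserved because an isometry preserves which elements share a face or vertex — so the rigid motion is a genuine graph isomorphism $\sigma$. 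Finally $\log(c^i_t)=\log\lVert u^i_t-\uhat^i_t\rVert_2$ is unchanged because an isometry preserves $\mathcal{L}_2$ norms and the numerical solution $\uhat$ transforms in the same way as $u$. Thus the transformed input graph equals the $\sigma$-relabeled original, and (i) delivers the stated spatial equivariance. Time invariance is then immediate from (iii): the feature list contains neither the absolute solver time $\tau$ nor the cumulative DoF, so two configurations whose global PDE state and mesh agree produce identical graphs and identical policy outputs regardless of when they occur.

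The main obstacle is the last step of (ii): rotation-equivariance of the error observation requires the FEM discretization itself to commute with $\rho$, i.e., the numerical solution computed on the rotated mesh must equal the rotation of the numerical solution on the original mesh. For an axis-aligned quadrilateral mesh this holds exactly only when $\rho$ lies in the mesh's discrete symmetry group (rotations by multiples of $90^\circ$); I would therefore either state the proposition at the level of the continuous PDE or restrict $\rho$ to that group. With this caveat, all remaining verifications are routine.
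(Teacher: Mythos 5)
Your proposal is correct and takes essentially the same route as the paper: the paper's proof is an explicit induction over graph attention layers showing that the node value produced at the transformed coordinate $Qx^i$ (or $x^i+b$) equals $\vhat^i$, which is precisely your ``composition of permutation-equivariant layers applied to an isomorphic, feature-preserving relabeling'' argument, and it verifies the same feature invariances (orthogonality giving $\langle Qd^e,Qq\rangle=\langle d^e,q\rangle$, combinatorial depth features, absence of absolute time). Your closing caveat about the discretization only commuting with rotations by multiples of $\pi/2$ and lattice translations is not in the paper's proof but is explicitly acknowledged in its main text, so it is a fair and consistent qualification rather than a discrepancy.
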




\subsection{Multi-objective VDGN}
\label{sec:multi-objective}

In applications where a user's preference between minimizing error and reducing computational cost is not known until test time, one cannot \textit{a priori} combine error and cost into a single scalar reward at training time.
Instead, one must take a multi-objective optimization viewpoint \citep{hayes2022practical} and treat cost and error as separate components of a vector reward function $\Rbf = (R^c, R^e)$.
The components encourage lower DoFs and lower error, respectively, and are defined by
\begin{align}\label{eq:multi_objective_reward}
\vspace{-5pt}
R^c_t \defeq \frac{d_{t-1} - d_t}{d_{\text{thres}}} ; \quad \quad 
R^e_t \defeq \log(c_{t-1}) - \log(c_t)
.
\vspace{-5pt}
\end{align}
The objective is $\Rbb^2 \ni \Jbf(\theta) \defeq \Ebb_{s_{t+1} \sim P(\cdot|a,s_t), a \sim \pi_{\theta}(\cdot|s)} [ \sum_{t=0}^T \gamma^t \Rbf_t ]$, which is vector-valued.
We focus on the widely-applicable setting of linear preferences, whereby a user's scalar utility based on preference vector $\omega$ is $\omega^T \Rbf$ (e.g., $\omega = [0.5,0.5]$ implies the user cares equally about cost and error).
At training time, we randomly sample $\omega \in \Omega$ in each episode and aim to find an optimal action-value function
$\Qbf^*(s,a,\omega) \defeq \arg_Q \sup_{\pi \in \Pi} \omega^T \Ebb_{\pi} [ \sum_{t=0}^T \gamma^T \Rbf_t ]$, where $\arg_Q$ extracts the vector $E_{\pi}[\dotsc]$ corresponding to the supremum.
We extend VDGN with Envelop Q-learning \citep{yang2019generalized}, a multi-objective RL method that efficiently finds the convex envelope of the Pareto front in multi-objective MDPs; see \Cref{app:multi-objective} for details.
Once trained, $\Qbf^*$ induces the optimal policy for any preference $\omega$ according to the greedy policy $a^* = \argmax_{a} \omega^T \Qbf^*(s,a,\omega)$.

\section{Experimental setup}
\label{sec:experimental_setup}

We designed experiments to test the ability of VDGN to find generalizable AMR strategies that display anticipatory refinement behavior, and benchmark these policies against standard baselines on error and DoF metrics.
We define the FEM environment in \Cref{subsec:amr_environment},
and the implementation of our method and baselines in \Cref{subsec:implementation} and \Cref{app:implementation}.
Results are analyzed in \Cref{sec:results}.

\subsection{AMR Environment}
\label{subsec:amr_environment}

We use MFEM \citep{anderson2019mfem, mfem-web} and PyMFEM \citep{mfem-web}, a modular open-source library for FEM, to implement the Markov game for AMR.
We ran experiments on the linear advection equation $\frac{\partial u}{\partial t} + \nu \nabla {\cdot} u = 0$ with random initial conditions (ICs) for velocity $\nu$ and solution $u(0)$, solving it using the FEM framework on a two-dimensional $L^2$ finite element space with periodic boundary conditions.
Each discrete step of the Markov game is a mesh re-grid step, with $\tau_{\text{step}}$ FEM simulation time elapsing between each consecutive step.
The solution is represented using discontinuous first-order Lagrange polynomials, and the initial mesh is partitioned into $n_x \times n_y$ quadrilateral elements.
\Cref{app:implementation} contains further FEM details on the mesh partition and the construction of element observations.

Linear advection is a useful benchmark for AMR despite its seeming simplicity because the challenge of anticipatory refinement can be made arbitrarily hard by increasing the $\tau_{\text{step}}$ of simulation time that elapses between two consecutive steps in the Markov game (i.e., between each mesh update step).
Intuitively, an optimal refinement strategy must refine the entire connected region that covers the path of propagation of solution features with large solution gradients (i.e., high error on a coarse mesh), and maintain coarse elements everywhere else.
Hence, the larger the $\tau_{\text{step}}$, the harder it is for distant elements, which currently have low error but will experience large solution gradients later, to refine preemptively.
But such long-distance preemptive refinement capability is exactly the key for future applications in which one prefers to have few re-meshing steps during a simulation due to its computational cost.
Moreover, the existence of an analytic solution enables us to benchmark against error threshold-based baselines under the ideal condition of having access to perfect error estimator.

\textbf{Metric.}
Besides analyzing performance via error $c$ and cumulative DoFs $d$ individually, we define an efficiency metric as
$\eta = 1 - \sqrt{\tilde{c}^2 + \tilde{d}^2} \in [0, 1]$, where a higher value means higher efficiency.
Here, 
$\tilde{c} \defeq \frac{c - c_{\text{fine}}}{c_{\text{coarse}} - c_{\text{fine}}}$
is a normalized solution error and
$\tilde{d} \defeq \frac{d - d_{\text{coarse}}}{d_{\text{fine}} - d_{\text{coarse}}}$
is normalized cumulative degrees of freedom (a measure of computational cost).
Here, the subscripts ``fine`` and ``coarse'' indicate that the quantity is computed on a constant uniformly fine and coarse mesh, respectively, that are held static (not refined/de-refined) over time.
The uniformly fine and coarse meshes themselves attain efficiency $\eta = 0$.
Efficiency $\eta = 1$ is unattainable in principle, since non-trivial problems require $d > d_{\text{coarse}}$.

\vspace{-5pt}
\subsection{Implementation and Baselines}
\label{subsec:implementation}

The graph attention layer of VDGN was constructed using the Graph Nets library \citep{battaglia2018relational}.
We used hidden dimension $64$ for all VDGN layers (except output layer of size $|\Acal|$), and $H=2$ attention heads.
For $\text{depth}_{\max}=1$, with initial $n_x=n_y=16$, we chose $L=2$ internal layers, with $R=3$ recurrent passes.
Each Markov game step has $\tau_{\text{step}}=0.25$, $\tau_f=0.75$ (hence $T_{\max}=3$).
For $\text{depth}_{\max}=2$, with $n_x=n_y=8$, we used $L=R=2$.
Each Markov game step has $\tau_{\text{step}}=0.2$, $\tau_f=0.8$ (hence $T_{\max}=4$).
For each training episode, we uniformly sampled the starting position and velocity of a 2D isotropic Gaussian wave as the initial condition.
The FEM solver time discretization was $d\tau=0.002$ throughout.
See \Cref{app:implementation} for further architectural details and hyperparameters.

We compare with the class of local error-based \textbf{Threshold} policies, each member of which is defined by a tuple $(\theta_r,\theta_d, \tau_{\text{step}})$ as follows:
every $\tau_{\text{step}}$ of simulation time, all elements whose true error exceed $\theta_r$ are refined, while those with true error below $\theta_d$ are de-refined.
These policies represent the ideal behavior of widely-used AMR methods based on local error estimation, in the limit of perfectly accurate error estimation.

\begin{remark}
Crucially, note that the Threshold policy class does not necessarily contain the global optimal policy for all AMR problems
because such policies are incapable of anticipatory refinement and cannot access the full error-cost objective landscape.
Suppose an element $i$ with flat features and negligible error $c^i_t \ll 1$ at time $t$ needs to refine before the arrival of complex PDE features at $t+1$.
If $\theta_r > c^i_t$, then element $i$ is not refined preemptively and large error is incurred at $t+1$.
If $\theta_r < c^i_t$, then many other elements $j$ with error $c^j_t > \theta_r$ are also refined at $t$ but they may not contain complex features at $t+1$, so DoF cost is unnecessarily increased.
\end{remark}

\section{Experimental results}
\label{sec:results}

Overall, we find that VDGN policies display anticipatory refinement, generalize to different initial conditions, mesh resolutions and simulation durations, thereby uncovering Pareto-efficient regions of the error-cost trade-off that were previously inaccessible by traditional error-estimator-based methods.
VDGN policy runtimes are comparable to Threshold policies (see \Cref{tab:runtime})

\begin{figure}[t]
    \centering
    \begin{subfigure}[t]{0.49\linewidth}
    \includegraphics[width=1.0\linewidth]{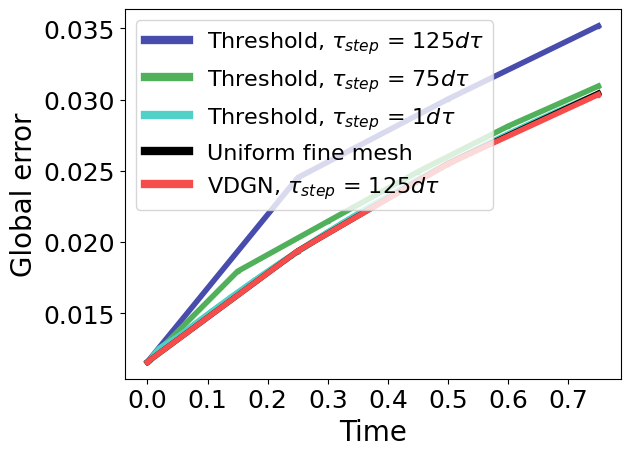}
    \caption{$\text{depth}_{\max} = 1$}
    \label{fig:error_vs_time_depth1}
    \end{subfigure}
    \begin{subfigure}[t]{0.49\linewidth}
    \includegraphics[width=1.0\linewidth]{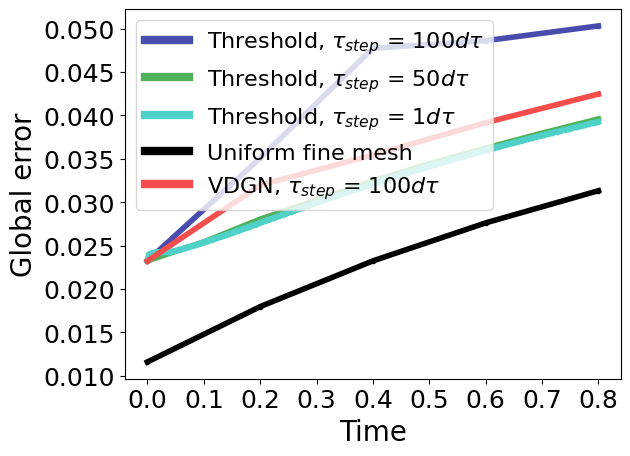}
    \caption{$\text{depth}_{\max} = 2$}
    \label{fig:error_vs_time_depth2}
    \end{subfigure}
    \vspace{-5pt}
    \caption{Global error versus simulation time of VDGN, compared with Thresholld policies with different $\tau_{\text{step}}$ between each mesh update step. (a) VDGN with the longest duration $\tau_{\text{step}} = 125 d\tau$ has error growth comparable to Threshold with the shortest duration $\tau_{\text{step}} = 1d\tau$. (b) VDGN significantly outperforms its Threshold counterpart with $\tau_{\text{step}}=100d\tau$.}
    \label{fig:error_vs_time}
    \vspace{-5pt}
\end{figure}

\begin{figure}[t]
\centering
\begin{subfigure}[t]{0.49\linewidth}
    \centering
    \includegraphics[width=0.49\linewidth]{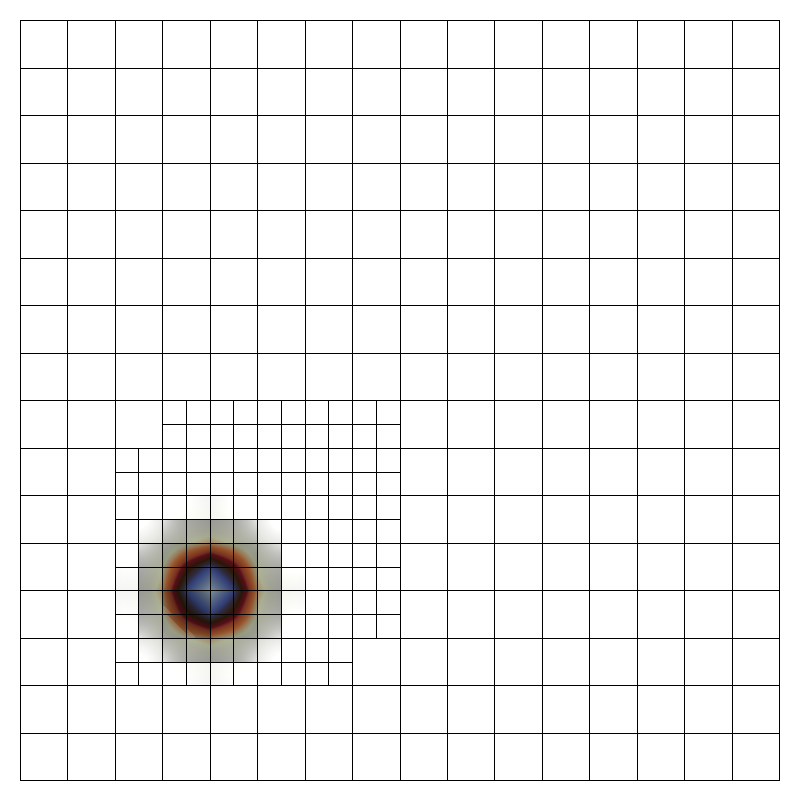}
    \includegraphics[width=0.49\linewidth]{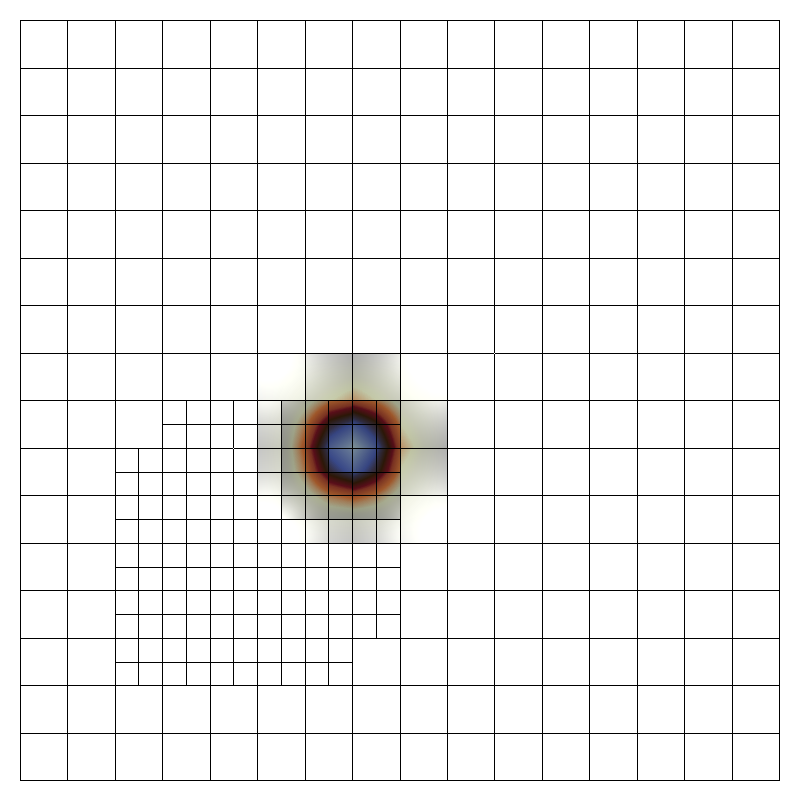}
\end{subfigure}
\hfill
\begin{subfigure}[t]{0.49\linewidth}
    \centering
    \includegraphics[width=0.49\linewidth]{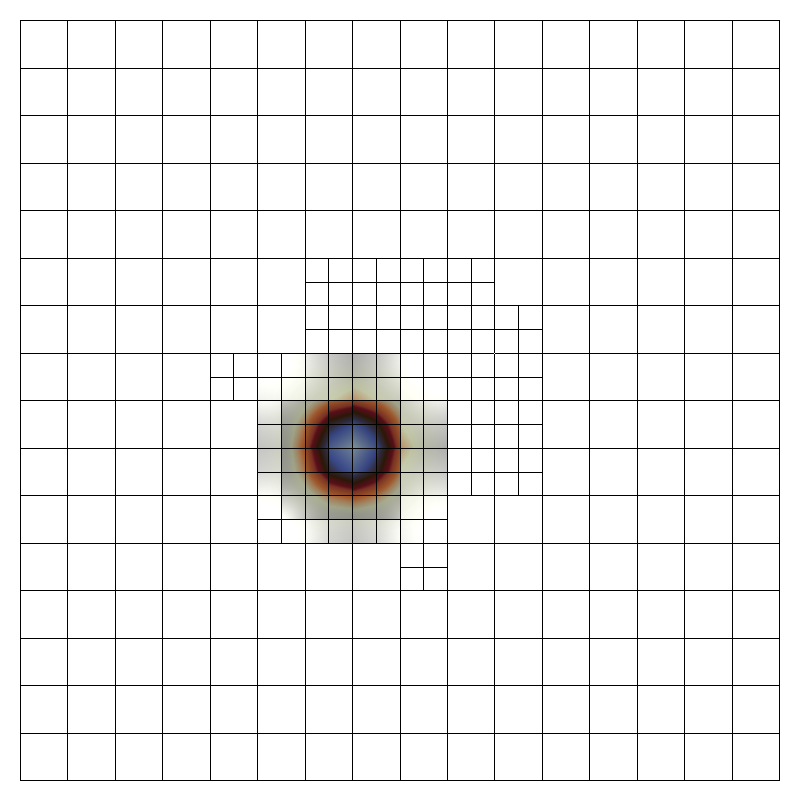}
    \includegraphics[width=0.49\linewidth]{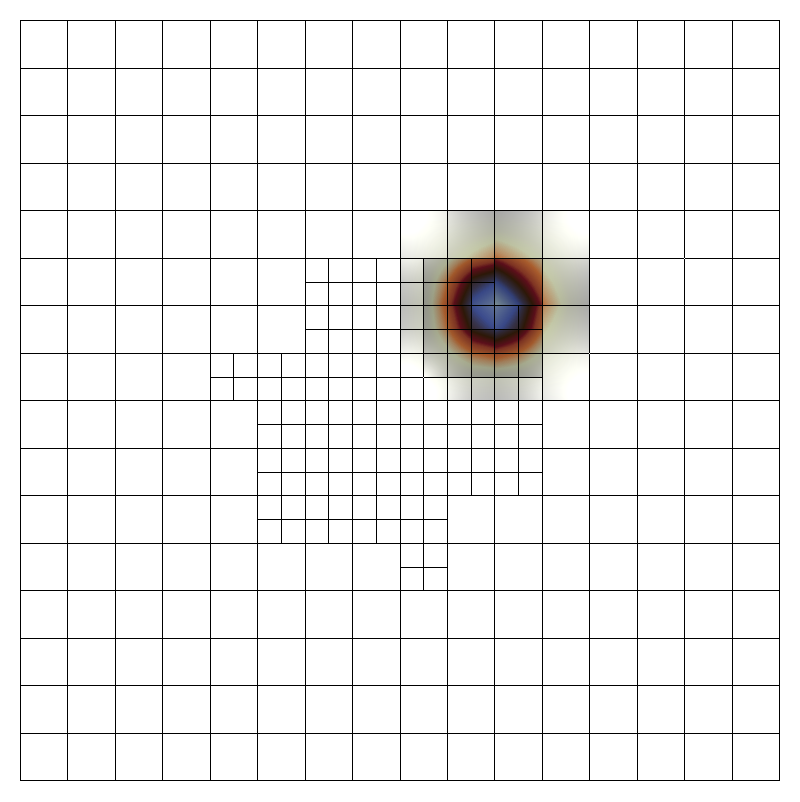}
\end{subfigure}
\caption{At each RL step, VDGN refines the full path segment that will be traversed by the wave over many solver steps into the future. Here, and for all subsequent mesh visualizations, we show the process: refinements, solution after $\tau_{\text{step}}$, refinements, and so on.}
\label{fig:iso_periodic_velunif_nx16_ny16_depth1_tstep0p25_vdn_graphnet_nodoftime_3_ep210800}
\vspace{-10pt}
\end{figure}

\begin{figure}[t]
\centering
\begin{subfigure}[t]{0.49\linewidth}
    \centering
    \includegraphics[width=0.49\linewidth]{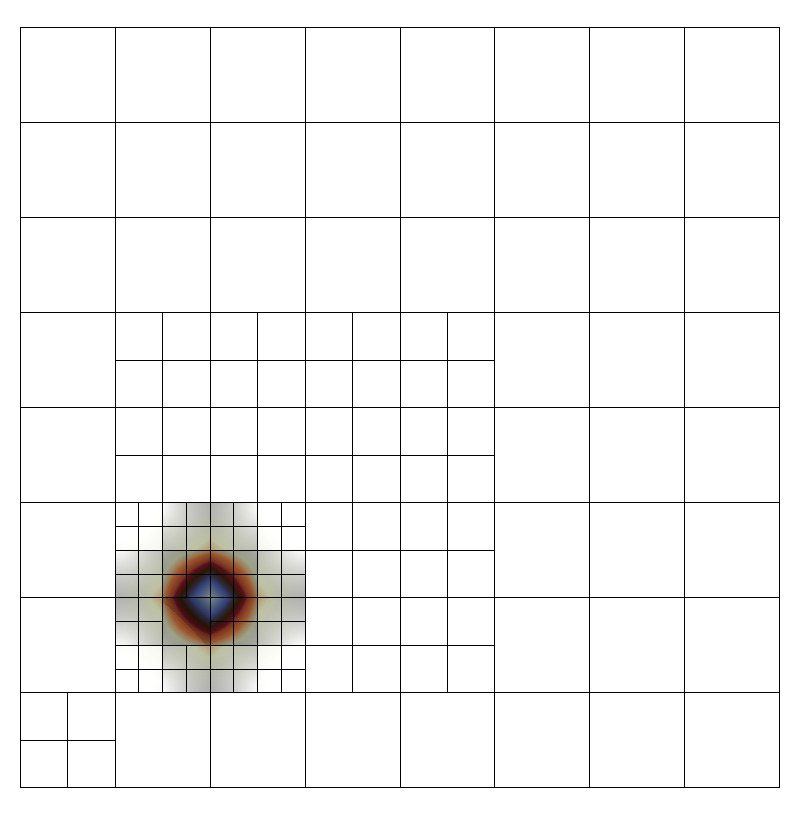}
    \includegraphics[width=0.49\linewidth]{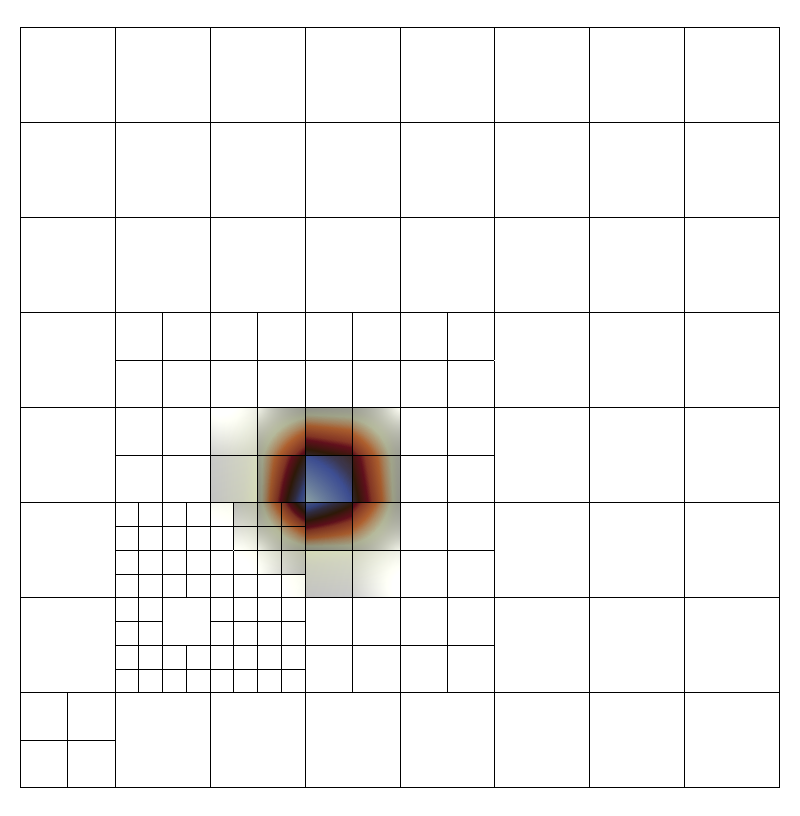}
\end{subfigure}
\hfill
\begin{subfigure}[t]{0.49\linewidth}
    \centering
    \includegraphics[width=0.49\linewidth]{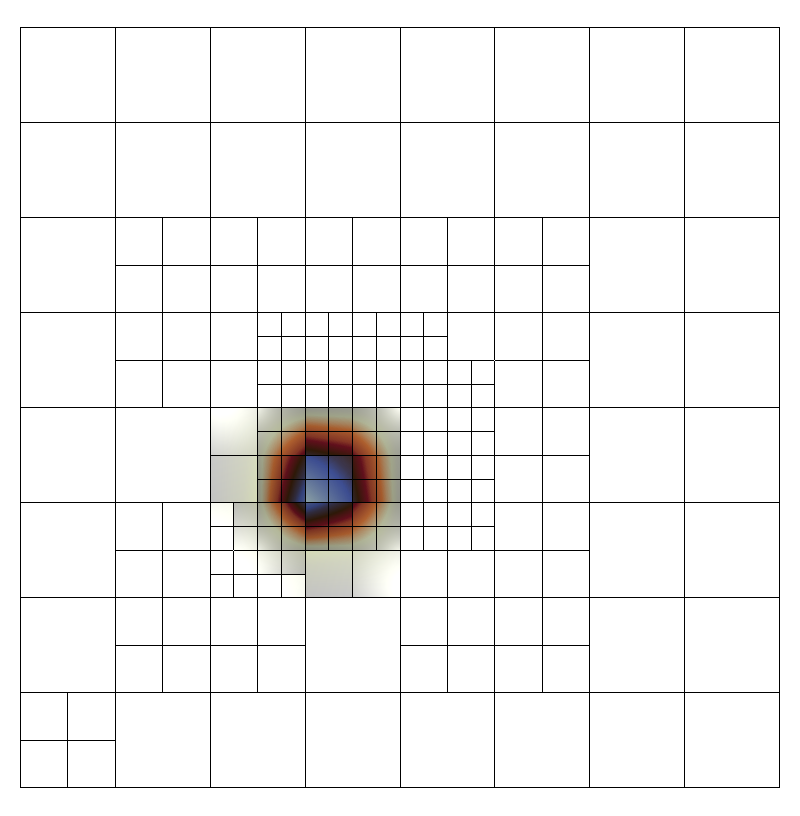}
    \includegraphics[width=0.49\linewidth]{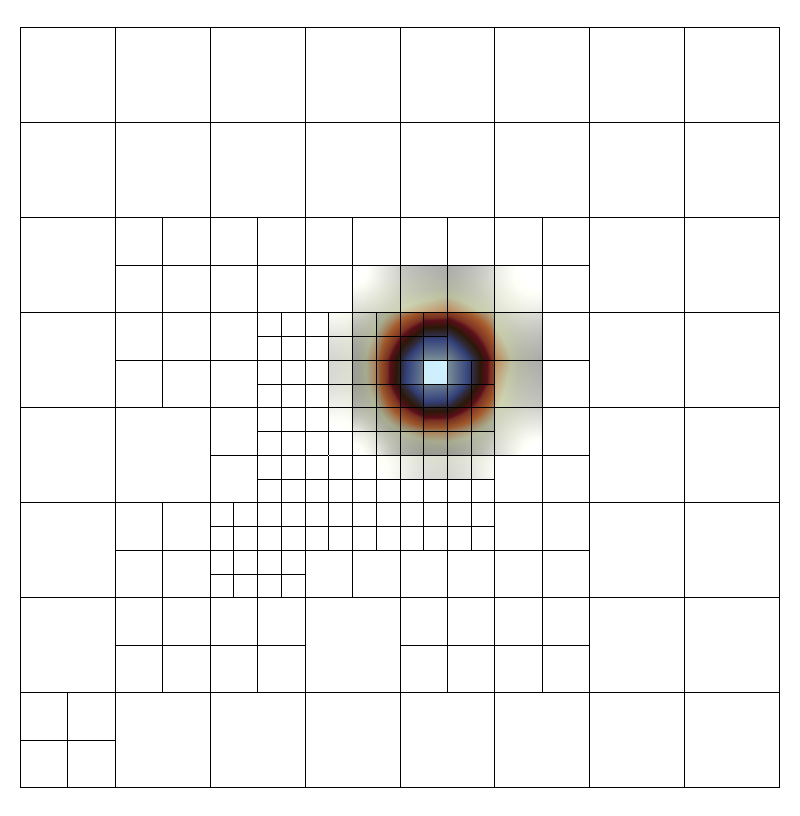}
\end{subfigure}
\caption{VDGN chooses more level-1 refinement than necessary for the wave's location at $t+1$, so that level-2 refinement is possible for the wave's location at $t+2$.}
\label{fig:iso_periodic_velunif_nx8_ny8_depth2_tstep0p2_vdn_graphnet_nodoftime_3_ep265600}
\vspace{-15pt}
\end{figure}

\subsection{Anticipatory Refinement}

As discussed in \Cref{subsec:amr_environment}, each mesh update incurs a computational cost, which means AMR practitioners prefer to have a long duration of simulation time between each mesh update step.
\Cref{fig:error_vs_time} shows the growth of global error versus simulation time of VDGN and Threshold policies with different $\tau_{\text{step}}$ between each mesh update step.
In the case of $\text{depth}_{\max}=1$, VDGN was trained and tested using the longest duration $\tau_{\text{step}} = 125 d\tau$ (i.e., it has the fewest mesh updates), but it matches the error of the most expensive threshold policy that updates the mesh after each $\tau_{\text{step}} = 1 d\tau$ (see \Cref{fig:error_vs_time_depth1}).
This is possible only because VDGN preemptively refines the contiguous region that will be traversed by the wave within $125 d\tau$ (e.g., see 
\Cref{fig:iso_periodic_velunif_nx16_ny16_depth1_tstep0p25_vdn_graphnet_nodoftime_3_ep210800}).
In contrast, Threshold must update the mesh every $1d\tau$ to achieve this performance, since coarse elements that currently have negligible error due to their distance from the incoming feature do not refine before the feature's arrival.

Moreover, in the case of $\text{depth}_{\max}=2$, the agents learned to choose level-1 refinement at $t=1$ for a region much larger than the feature's periphery, so that these level-1 elements can preemptively refine to level 2 at $t=2$ before the feature passes over them. This is clearly seen in
\Cref{fig:iso_periodic_velunif_nx8_ny8_depth2_tstep0p2_vdn_graphnet_nodoftime_3_ep265600}.
This enabled VDGN with $\tau_{\text{step}} =100d\tau$ (fewest update steps) to have error growth rate close to that of Threshold with $\tau_{\text{step}} =1d\tau$ (see \Cref{fig:error_vs_time_depth2}).

\textbf{Symmetry.}
Comparing \Cref{fig:iso_periodic_velunif_nx16_ny16_depth1_tstep0p25_vdn_graphnet_nodoftime_3_ep210800_degree225} with \Cref{fig:iso_periodic_velunif_nx16_ny16_depth1_tstep0p25_vdn_graphnet_nodoftime_3_ep210800}, we see that VDGN policies are equivariant to rotation of initial conditions.
Reflection equivariance is also visible for the opposite moving waves in \Cref{fig:iso_periodic_velunif_nx16_ny16_depth1_tstep0p25_vdn_graphnet_nodoftime_3_ep210800_opposite}.
Translation equivariance can be seen in \Cref{fig:iso_periodic_velunif_nx16_ny16_depth1_tstep0p25_vdn_graphnet_nodoftime_3_fourbumps}.
Note that perfect symmetry holds only for rotation by integer multiples of $\pi/2$ and translation by integer multiples of the width of a level-0 element. Symmetry violation from mesh discretization is unavoidable for other values.

\begin{figure}[t]
\centering
    \begin{subfigure}[t]{0.49\linewidth}
    \includegraphics[width=1.0\linewidth]{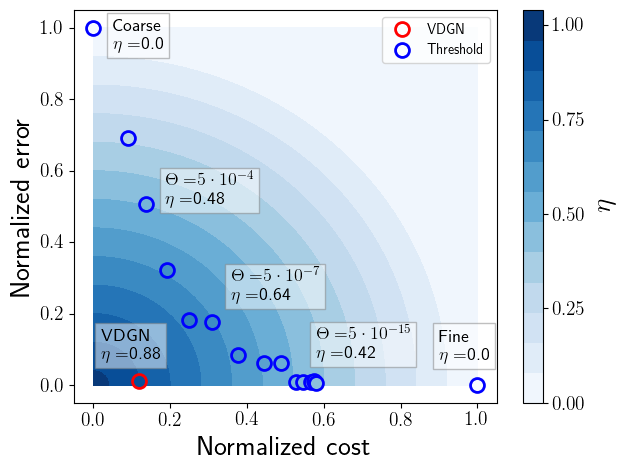}
    \caption{$n_x = n_y = 16$, max depth 1, 500 solver steps}
    \label{fig:pareto_iso_periodic_velunif_nx16_ny16_depth1_tstep0p25_vdn_graphnet_nodoftime_3_ep210800}
    \end{subfigure}
    \hfill
    \begin{subfigure}[t]{0.49\linewidth}
    \includegraphics[width=1.0\linewidth]{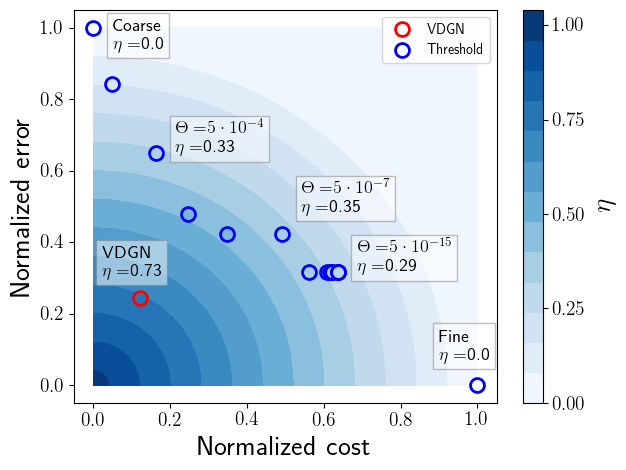}
    \caption{$n_x = n_y = 8$, max depth 2}
    \label{fig:pareto_iso_periodic_velunif_nx8_ny8_depth2_tstep0p2_vdn_graphnet_nodoftime_3_ep265600}
    \end{subfigure}
    \hfill
    \begin{subfigure}[t]{0.49\linewidth}
    \includegraphics[width=1.0\linewidth]{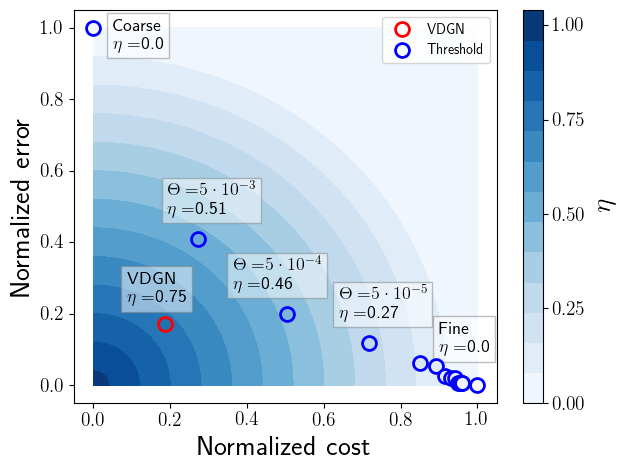}
    \caption{$n_x = n_y = 16$, max depth 1, 2500 solver steps.}
    \label{fig:pareto_iso_periodic_velunif_nx16_ny16_depth1_tstep0p25_vdn_graphnet_nodoftime_3_ep210800_tfinal5}
    \end{subfigure}
    \hfill
    \begin{subfigure}[t]{0.49\linewidth}
    \includegraphics[width=1.0\linewidth]{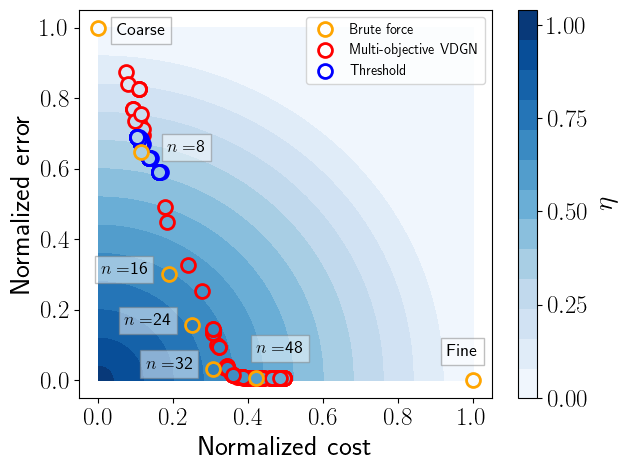}
    \caption{Pareto front of multi-objective VDGN.}
    \label{fig:pareto_aniso_nx64_depth1}
    \end{subfigure}
\vspace{-10pt}
\caption{Trade-off between error and computational cost. VDGN unlocks regions inaccessible by threshold policies.
}
\label{fig:pareto}
\vspace{-10pt}
\end{figure}

\begin{table*}[t]
    \centering
    \caption{Mean and standard error of efficiency of VDGN versus Threshold policies, over 100 runs with uniform random ICs.
    Policies used in generalization tests were trained on isotropic Gaussian features on a square mesh with quadrilateral elements for less than $500d\tau$ per episode. Last row shows the efficiency ratio of VDGN to the best threshold policy.}
    \vspace{-5pt}
    \begin{tabular}{crrrrrrrrr}
    \toprule
    \multicolumn{1}{}{}&
    \multicolumn{3}{c}{In-distribution} &
    \multicolumn{6}{c}{Generalization}\\
    \cmidrule(r){2-4}
    \cmidrule(r){5-10}
    $\theta_r$ & \shortstack{Depth 1 \\ $375$ steps} & \shortstack{Depth 2 \\ $400$ steps} & \shortstack{Triangular \\ $1250$ steps} & \shortstack{Depth 1 \\ $2500$ steps} & \shortstack{Depth 2 \\ $2500$ steps} & \shortstack{Anisotropic \\ $2500$ steps} & \shortstack{Ring \\ $2500$ steps} & \shortstack{Opposite \\ $2500$ steps} & \shortstack{Star \\ $750$ steps} \\
    \midrule
    $5\times10^{-3}$ & 0.35 (0.01) & 0.35 (0.01) & 0.69 (0.02) & 0.52 (0.02) & 0.41 (0.01) & 0.62 (0.02) & 0.61 (0.02) & 0.56 (0.002) & 0.85 (0.01) \\
    $5\times10^{-4}$ & 0.74 (0.02) & 0.51 (0.01) & 0.76 (0.01) & 0.57 (0.02) & 0.43 (0.02) & 0.64 (0.02) & 0.56 (0.02) & 0.61 (0.01) & 0.92 (0.01) \\
    $5\times10^{-5}$ & 0.84 (0.01) & 0.56 (0.01) & 0.66 (0.02)& 0.46 (0.02) & 0.34 (0.02) & 0.53 (0.02) & 0.48 (0.02) & 0.50 (0.01) & 0.87 (0.01) \\
    $5\times10^{-6}$ & 0.82 (0.01) & 0.55 (0.01) & 0.56 (0.02)& 0.37 (0.02) & 0.26 (0.01) & 0.42 (0.02) & 0.39 (0.02) & 0.42 (0.01) & 0.83 (0.01) \\
    $5\times10^{-7}$ & 0.77 (0.01) & 0.50 (0.01) & 0.47 (0.02) & 0.30 (0.02) & 0.20 (0.01) & 0.33 (0.02) & 0.31 (0.02) & 0.32 (0.01) & 0.79 (0.01) \\
    $5\times10^{-8}$ & 0.70 (0.01) & 0.44 (0.01) & 0.38 (0.02) & 0.24 (0.02) & 0.15 (0.01) & 0.26 (0.02) & 0.25 (0.02) & 0.26 (0.01) & 0.74 (0.01) \\
    $5\times10^{-15}$ & 0.34 (0.01) & 0.27 (0.01) & 0.10 (0.01) & 0.08 (0.01) & 0.05 (0.003) & 0.07 (0.01) & 0.07 (0.01) & 0.07 (0.01) & 0.45 (0.02) \\
    \midrule
    VDGN & \textbf{0.92} (0.01) & \textbf{0.66} (0.01) & \textbf{0.80} (0.01) & \textbf{0.84} (0.004) & \textbf{0.73} (0.01) & \textbf{0.78} (0.02) & \textbf{0.82} (0.01) & \textbf{0.63} (0.03) & \textbf{0.93} (0.01) \\
    VDGN/best $\theta_r$ & 1.10 & 1.18 & 1.05 & 1.47 & 1.70 & 1.22 & 1.34 & 1.03 & 1.13 \\
    \bottomrule
    \end{tabular}
    \label{tab:efficiency}
    \vspace{-10pt}
\end{table*}

\subsection{Pareto Optimality}

\Cref{fig:pareto} shows that VDGN unlocks regions of the error-cost landscape that are inaccessible to the class of Threshold policies in all of the mesh configurations that were tested.
We ran a sweep over refinement threshold $\theta_r \in [5\times 10^{-3},\dotsc,5\times 10^{-8}, 5\times 10^{-15}]$ with de-refinement threshold $\theta_d = 4\times 10^{-15}$.
In the case of $\text{depth}_{\max} = 1, 2$ with $500$ solver steps, and $\text{depth}_{\max}=1$ with $2500$ solver steps, \Cref{fig:pareto_iso_periodic_velunif_nx16_ny16_depth1_tstep0p25_vdn_graphnet_nodoftime_3_ep210800,fig:pareto_iso_periodic_velunif_nx8_ny8_depth2_tstep0p2_vdn_graphnet_nodoftime_3_ep265600,fig:pareto_iso_periodic_velunif_nx16_ny16_depth1_tstep0p25_vdn_graphnet_nodoftime_3_ep210800_tfinal5} show that VDGN lies outside the empirical Pareto front formed by threshold-based policies, and that VDGN Pareto-dominates those policies for almost every value of $\theta_r$: given a desired error (cost), VDGN has much lower cost (error).
The ``In-distribution'' group in \Cref{tab:efficiency} shows that VDGN has significantly higher efficiency than Threshold policies for all tested threshold values, for $\text{depth}_{\max} = 1,2$.

To understand the optimality of VDGN policies, we further compared multi-objective VDGN to brute-force search for the best sequence of refinement actions in an anisotropic 1D advection problem with $n_x =64, n_y=1$ and two mesh update steps.
To make brute-force search tractable, we imposed the constraint that a contiguous region of $n$ elements are refined at each step (while all elements outside the region are de-refined).
We searched for the starting locations of the region that resulted in lowest final global error.
By varying $n$, this procedure produces an empirical Pareto front of such brute-force policies in the error-cost landscape, which we plot in \Cref{fig:pareto_aniso_nx64_depth1}.
For multi-objective VDGN, we trained a single policy and evaluated it with $100$ randomly sampled preferences $\omega = [\alpha, 1-\alpha]$ where $\alpha \sim \text{Unif}[0,1]$.
\Cref{fig:pareto_aniso_nx64_depth1} shows that a single multi-objective VDGN policy produces a Pareto front (\textcolor{red}{o}) that approaches the Pareto front formed by brute force policies (\textcolor{orange}{o}).
Moreover, we see that Threshold policies with various refinement thresholds (\textcolor{blue}{o}) are limited to a small section of the objective landscape, whereas VDGN unlocks previously-inaccessible regions.

\begin{figure}[ht]
\centering
\begin{subfigure}[t]{0.49\linewidth}
    \centering
    \includegraphics[width=0.49\linewidth]{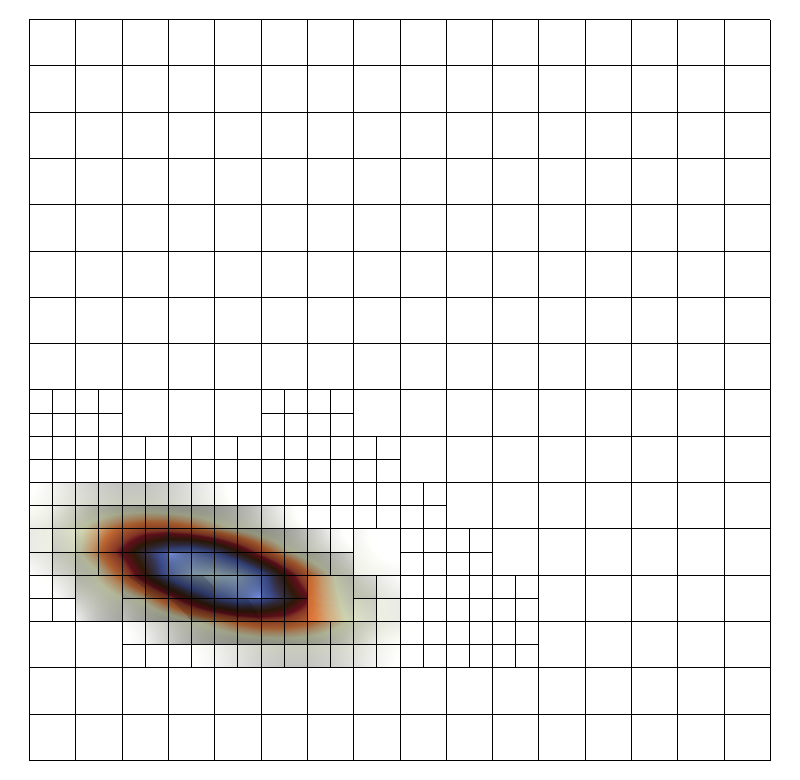}
    \includegraphics[width=0.49\linewidth]{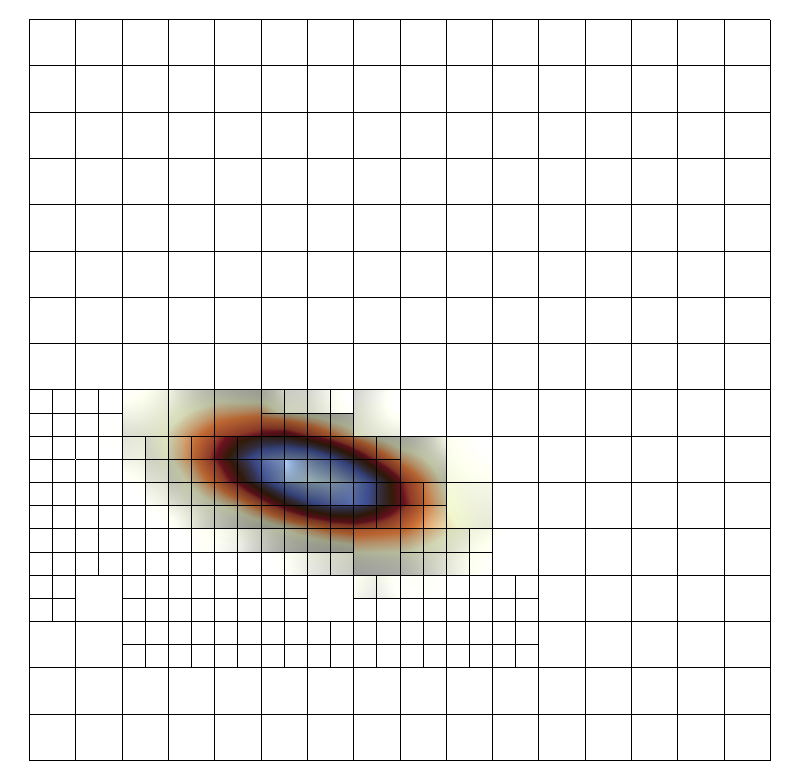}
\end{subfigure}
\hfill
\begin{subfigure}[t]{0.49\linewidth}
    \centering
    \includegraphics[width=0.49\linewidth]{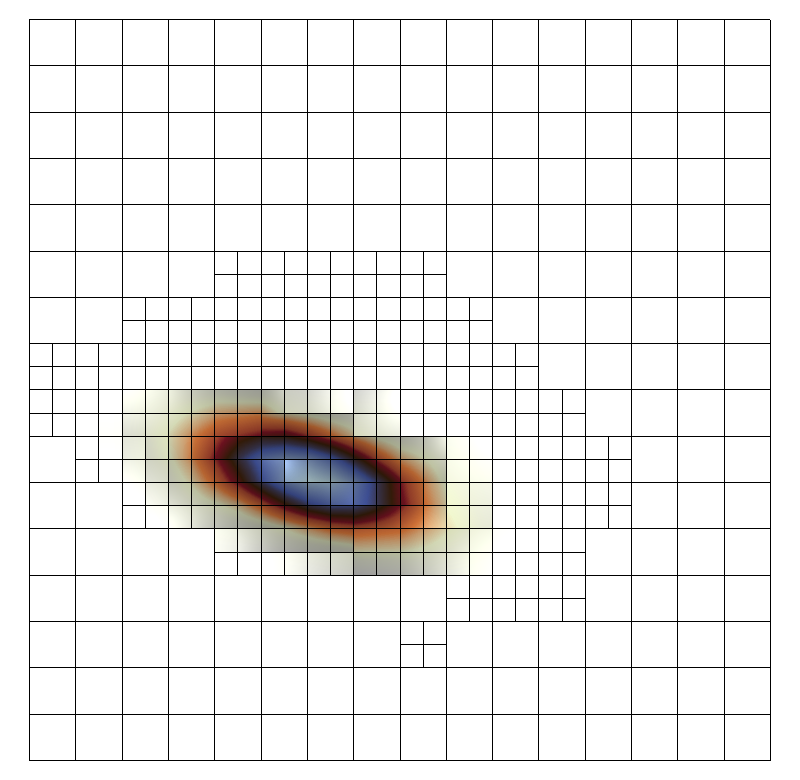}
    \includegraphics[width=0.49\linewidth]{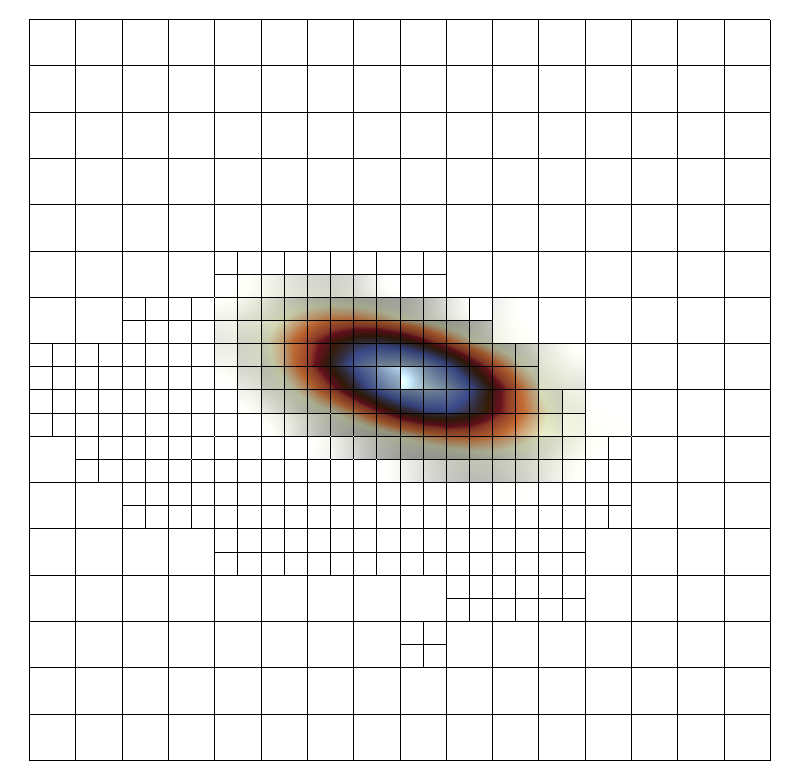}
\end{subfigure}
\caption{Policy trained on isotropic 2D Gaussian can be applied to anisotropic 2D Gaussian.}
\label{fig:iso_periodic_velunif_nx16_ny16_depth1_tstep0p25_vdn_graphnet_nodoftime_3_ep210800_aniso}
\vspace{-15pt}
\end{figure}

\begin{figure}[ht]
\centering
\begin{subfigure}[t]{0.49\linewidth}
    \centering
    \includegraphics[width=0.49\linewidth]{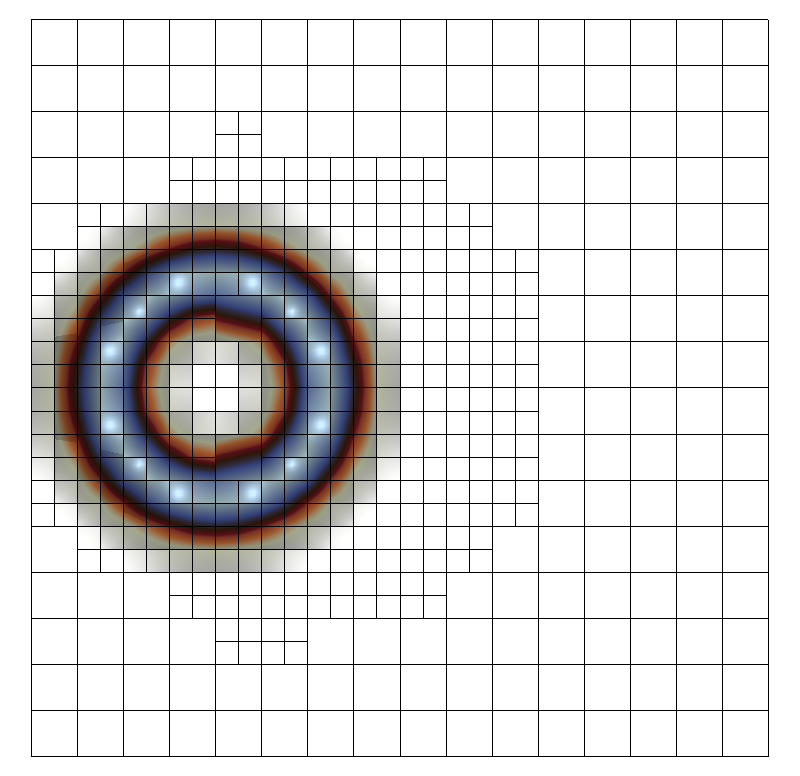}
    \includegraphics[width=0.49\linewidth]{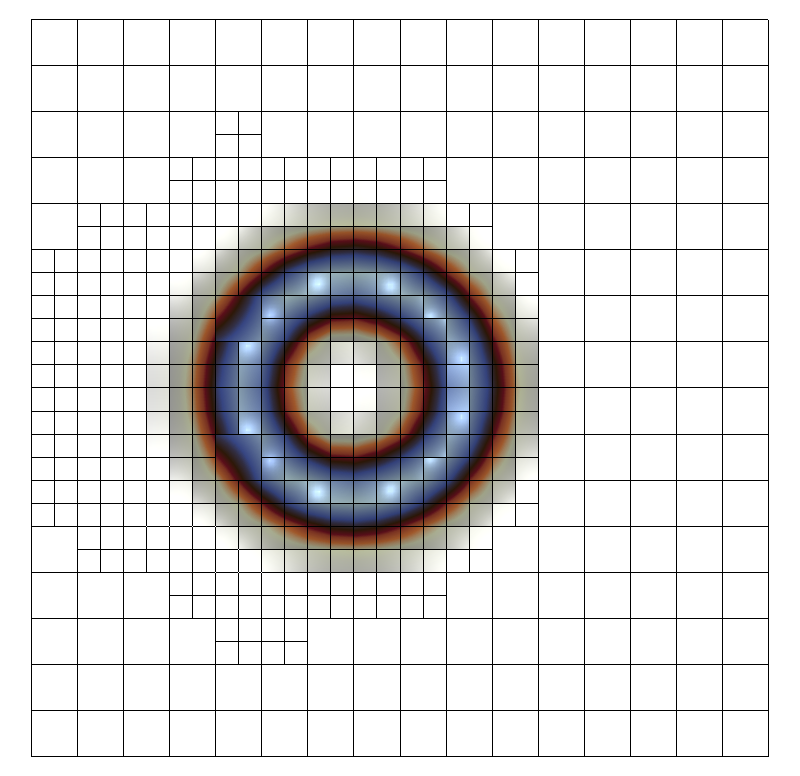}
\end{subfigure}
\hfill
\begin{subfigure}[t]{0.49\linewidth}
    \centering
    \includegraphics[width=0.49\linewidth]{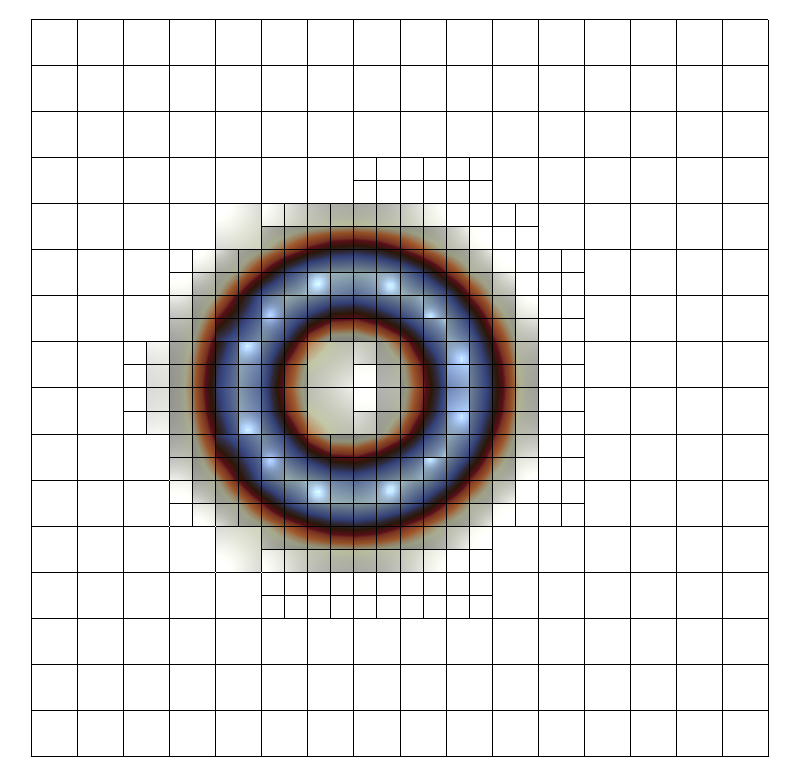}
    \includegraphics[width=0.49\linewidth]{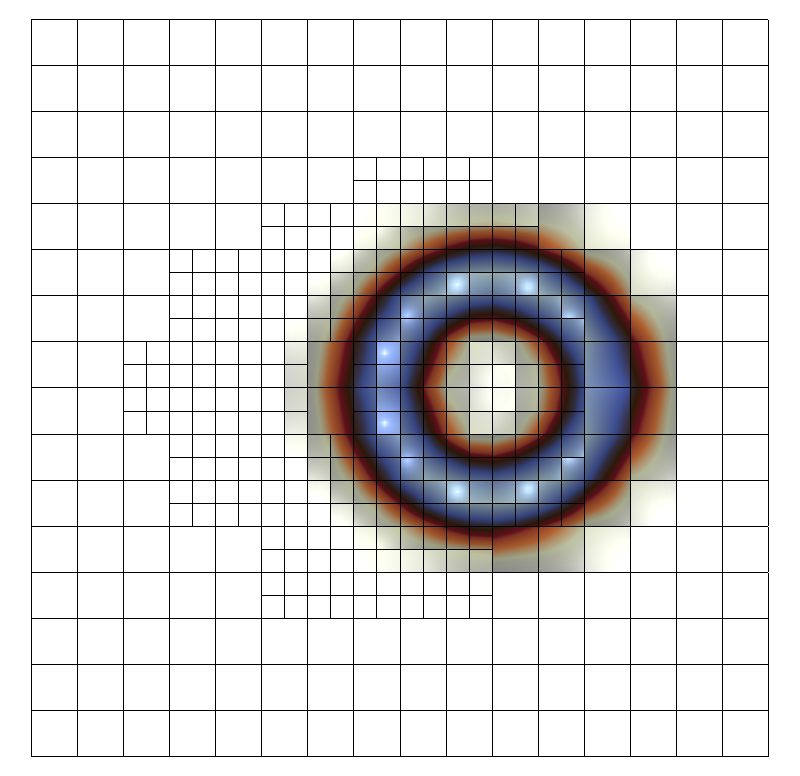}
\end{subfigure}
\caption{Policy trained on isotropic 2D Gaussian can be applied to ring functions.}
\label{fig:iso_periodic_velunif_nx16_ny16_depth1_tstep0p25_vdn_graphnet_nodoftime_3_ep210800_ring}
\vspace{-15pt}
\end{figure}

\begin{figure}[ht]
\centering
\begin{subfigure}[t]{0.49\linewidth}
    \centering
    \includegraphics[width=0.49\linewidth]{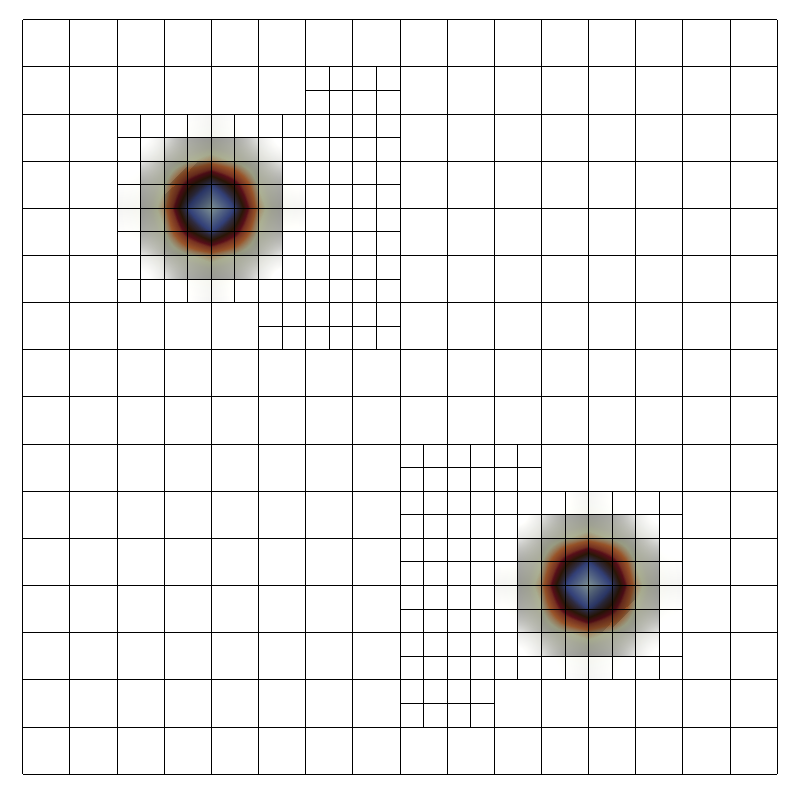}
    \includegraphics[width=0.49\linewidth]{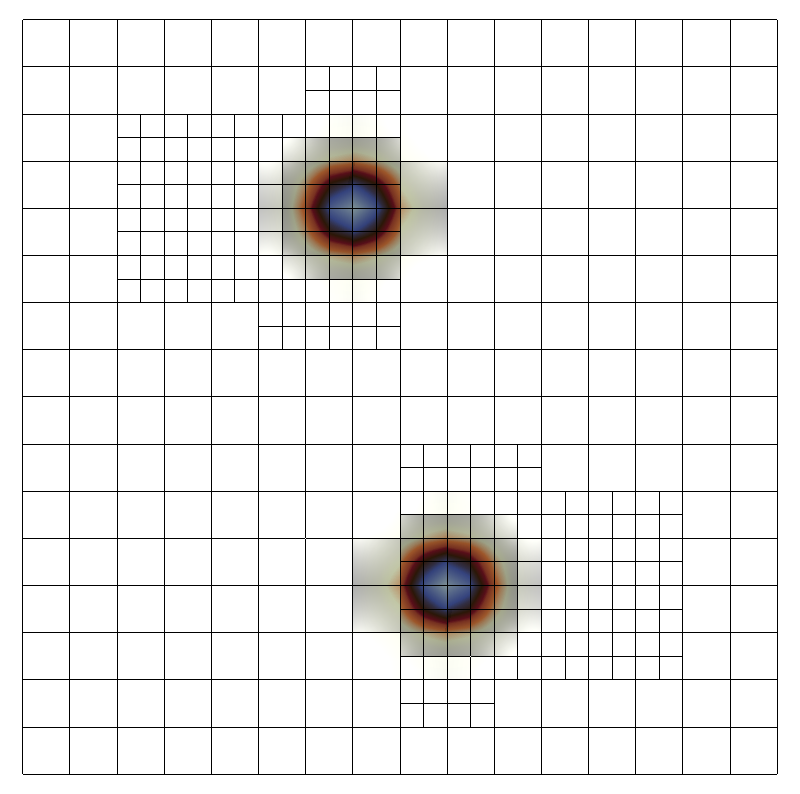}
\end{subfigure}
\hfill
\begin{subfigure}[t]{0.49\linewidth}
    \centering
    \includegraphics[width=0.49\linewidth]{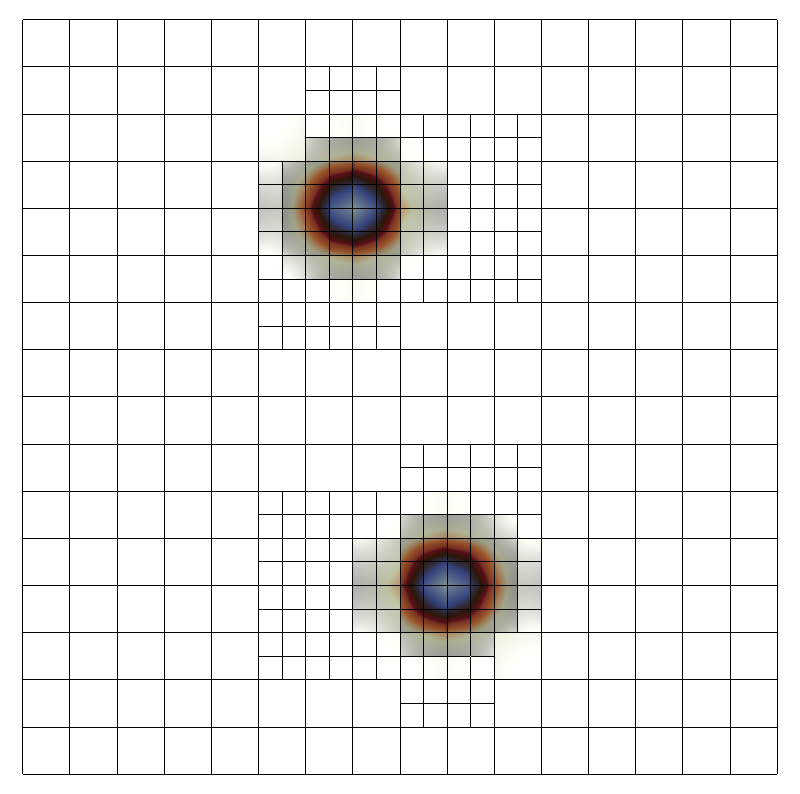}
    \includegraphics[width=0.49\linewidth]{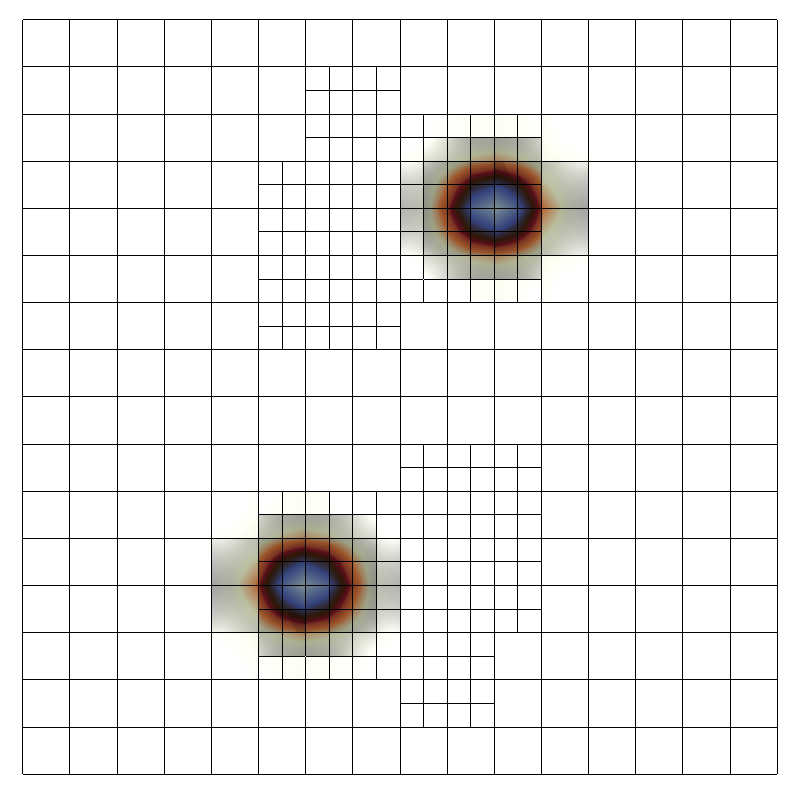}
\end{subfigure}
\caption{Policy trained on one bump with one velocity can be applied to two bumps with opposite velocities.}
\label{fig:iso_periodic_velunif_nx16_ny16_depth1_tstep0p25_vdn_graphnet_nodoftime_3_ep210800_opposite}
\vspace{-15pt}
\end{figure}

\begin{figure}[ht]
\centering
\begin{subfigure}[t]{0.49\linewidth}
    \centering
    \includegraphics[width=0.49\linewidth]{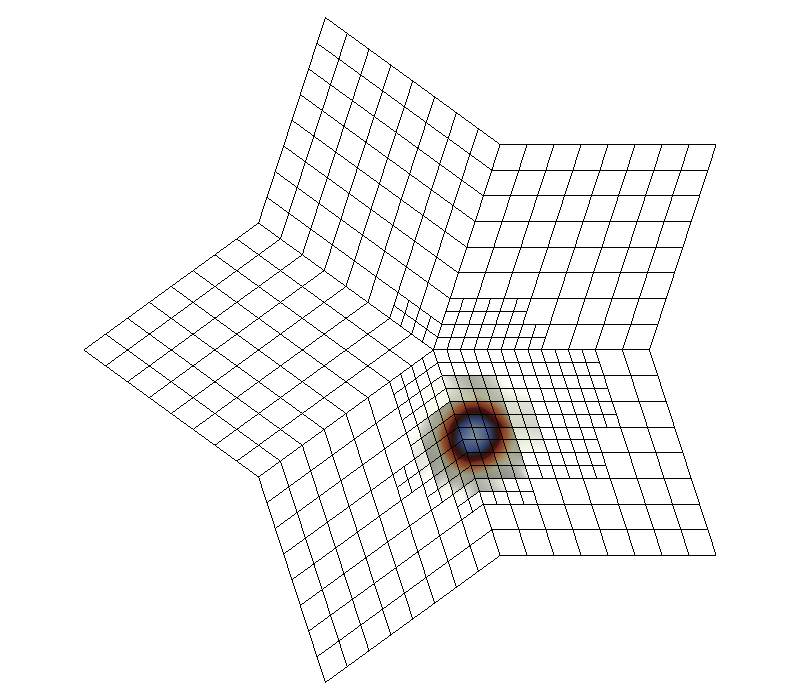}
    \includegraphics[width=0.49\linewidth]{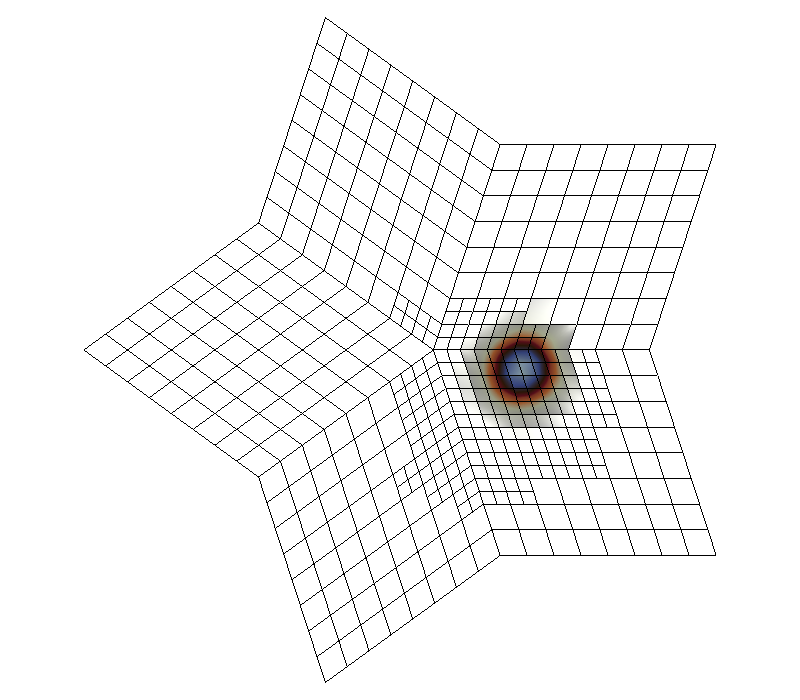}
\end{subfigure}
\hfill
\begin{subfigure}[t]{0.49\linewidth}
    \centering
    \includegraphics[width=0.49\linewidth]{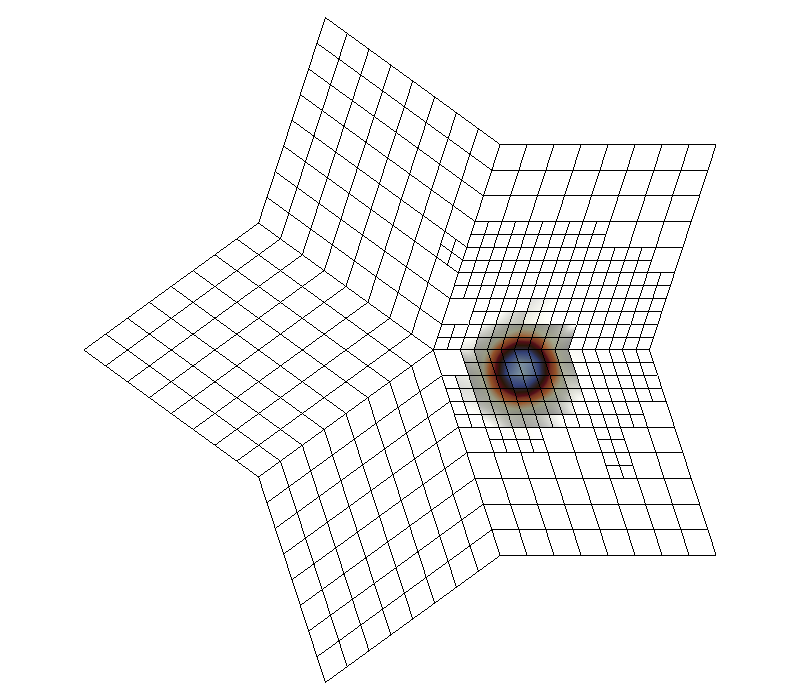}
    \includegraphics[width=0.49\linewidth]{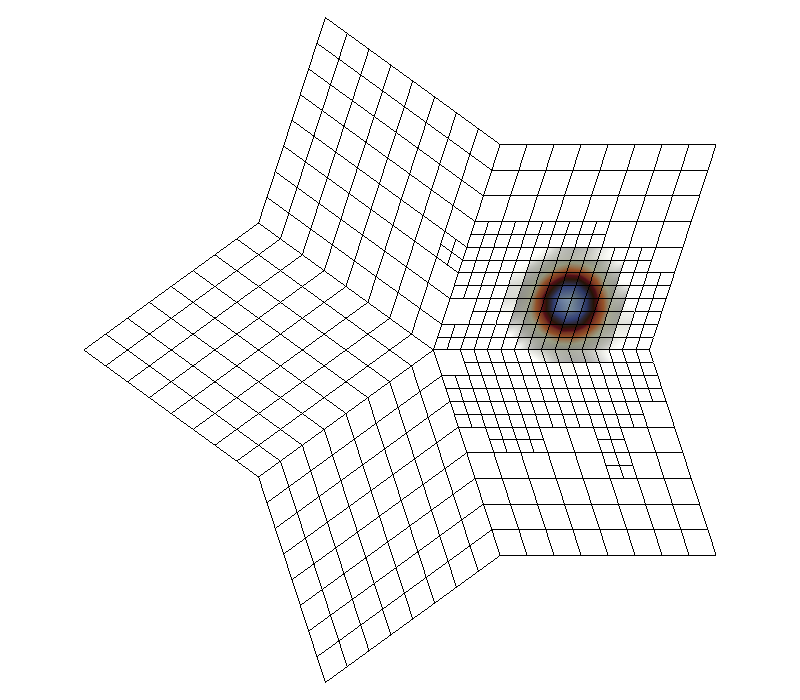}
\end{subfigure}
\caption{Policy trained on square mesh, run on a star mesh.}
\label{fig:iso_periodic_velunif_nx16_ny16_depth1_tstep0p25_vdn_graphnet_nodoftime_3_ep210800_star}
\vspace{-15pt}
\end{figure}

\vspace{-5pt}
\subsection{Generalization}


\textbf{Longer time}.
At training time for VDGN, each episode consisted of approximately 400-500 FEM solver steps.
We tested these policies on episodes with 2500 solver steps, which presents the agents with features outside of its training distribution due to accumulation of numerical error over time.
\Cref{tab:efficiency} shows that: a) VDGN maintain highest top performance; b) the performance ratio between VDGN and the best Threshold policy actually increases in comparison to the case with shorter time
(e.g., 1.47 in the ``Depth 1 2500 steps'' column, as opposed to 1.10 in the ``Depth 1'' column).
This is because the error of threshold-based policies accumulates quickly over time due to the lack of anticipatory refinement, whereas VDGN mitigates the effect.
\Cref{fig:iso_periodic_velunif_nx16_ny16_depth1_tstep0p25_vdn_graphnet_nodoftime_3_ep210800_tfinal5} shows that VDGN sustains anticipatory refinement behavior in test episodes longer than training episodes.

\textbf{Out-of-distribution test problems}.
Even though VDGN policies were trained on square meshes with 2D isotropic Gaussian waves, we find that they generalize well to initial conditions and mesh geometries that are completely out of the training distribution.
On anisotropic Gaussian waves (\Cref{fig:iso_periodic_velunif_nx16_ny16_depth1_tstep0p25_vdn_graphnet_nodoftime_3_ep210800_aniso}), ring-shaped features (\Cref{fig:iso_periodic_velunif_nx16_ny16_depth1_tstep0p25_vdn_graphnet_nodoftime_3_ep210800_ring}), opposite-moving waves (\Cref{fig:iso_periodic_velunif_nx16_ny16_depth1_tstep0p25_vdn_graphnet_nodoftime_3_ep210800_opposite}), star-shaped meshes (\Cref{fig:iso_periodic_velunif_nx16_ny16_depth1_tstep0p25_vdn_graphnet_nodoftime_3_ep210800_star}), VDGN significantly outperforms Threshold policies without any additional fine-tuning or training (see the ``Generalization'' group in \Cref{tab:efficiency}).
\Cref{fig:iso_periodic_velunif_nx16_ny16_depth1_tstep0p25_vdn_graphnet_nodoftime_3_gen_triangle} shows qualitatively that a policy trained on quadrilateral elements shows rational refinement decisions when deployed on triangular elements.

\textbf{Scaling.}
Since VDGN is defined by individual node and edge computations with parameter-sharing across nodes and edges, it is a local model that is agnostic to size and scale of the global mesh.
\Cref{fig:iso_periodic_velunif_nx16_ny16_depth1_tstep0p25_vdn_graphnet_nodoftime_3_ep210800_nx64_ny64,fig:iso_periodic_velunif_nx16_ny16_depth1_tstep0p25_vdn_graphnet_nodoftime_3_ep210800_nx64_ny64_continued} show that a policy trained on $n_x = n_y = 16$ can be run with rational refinement behavior on an $n_x = n_y = 64$ mesh.

\section{Related work}
\label{sec:related_work}

A growing body of work leverage machine learning and deep neural networks \citep{goodfellow2016deep} to improve the trade-off between computational cost and accuracy of numerical methods: e.g., reinforcement learning for generating a fixed (non-adaptive) mesh \citep{pan2022reinforcement}, unsupervised clustering for marking and $p$-refinement \citep{tlales2022machine}, and
supervised learning for target resolution prediction \citep{obiols2022nunet}, error estimation \citep{wallwork2022e2n}, and mesh movement \citep{song2022m2n}.
The following three are the closest work to ours.
\citet{yang2021reinforcement} proposed a global single-agent RL approach for h-adaptive AMR.
It does not naturally support refining multiple elements per mesh update step, and anticipatory refinement was not conclusively demonstrated.
\citet{gillette2022learning} work within the class of marking policies parameterized by an error threshold and showed that single-agent RL finds robust policies that dynamically choose the error threshold and outperform fixed-threshold policies in elliptic problems.
However, threshold-based policies may not contain the optimal policy for time-dependent problems that require anticipatory refinement.
\citet{foucart2022deep} proposed a local single-agent RL approach whereby the agent makes a decision for one randomly-selected element at each step.
At training time, the global solution is updated every time a single element action occurs; at test time, the agent faces a different environment transition since the global solution is updated only after it has acted for all elements.
Our multi-agent approach enables the definition of the environment transition to be the same at training and test time.

\vspace{-5pt}
\section{Conclusion}
\label{sec:conclusion}

We have formulated a Markov game for adaptive mesh refinement, shown that centralized training addresses the posthumous credit assignment problem, and proposed a novel multi-agent reinforcement learning method called Value Decomposition Graph Network (VDGN) to train AMR policies directly from simulation.
VDGN displays anticipatory refinement behavior, enabling it to unlock new regions of the error-cost objective landscape that were inaccessible by previous threshold-based AMR methods.
We verified that trained policies work well on out-of-distribution test problems with PDE features, mesh geometries, and simulation duration not seen in training.
Our work serves as a stepping stone to apply multi-agent reinforcement learning to more complex problems in AMR.



\begin{acks}
The authors thank Andrew Gillette for helpful discussions in the project overall.
This work was performed under the auspices of the U.S. Department of Energy by Lawrence Livermore National Laboratory under contract DE-AC52-07NA27344 and the LLNL-LDRD Program with project tracking \#21-SI-001.
LLNL-PROC-841328.
\end{acks}


\bibliographystyle{ACM-Reference-Format}
\balance
\bibliography{citation_marl.bib, citation_ml_math.bib, citation_nn.bib, citation_rl.bib}


\clearpage

\appendix

\section{Additional details on methods}
\label{app:methods}

\subsection{Multi-head attention}
\label{app:multi-head}

\textbf{Multi-head graph attention layer.}
We extend graph attention by building on the mechanism of multi-head attention \citep{vaswani2017attention}, which uses $H$ independent linear projections $(W^{q,h}, W^{k,h}, W^{v,h}, W^{e,h})$ for queries, keys, values and edges (all projected to dimension $d/H$), and results in $H$ independent sets of attention weights $\ahat^{ij,h}$, with $h = 1,2,\dotsc,H$ .
This enables the model to attend to different learned representation spaces and was previously found to stabilize learning \citep{velivckovic2018graph}.
The new node representation with multi-head attention is the concatenation of all output heads, with an output linear projection $W^o \in \Rbb^{d \times d}$.
Hence, \cref{eq:attention_dot_product,eq:attention_weights,eq:attention_value,eq:node_update} are extended to be:
\begin{align}\label{eq:node_update_multihead}
    \ahat^{ij,h} &\defeq \frac{\exp(W^{q,h}v^i \cdot W^{k,h}v^j)}{\sum_{l \in \Ncal_i} \exp(W^{q,h}v^i \cdot W^{k}v^l)} && \text{for } (i,j) \in E, h \in [H] \, ,\\
    b^{ij,h} &\defeq W^{v,h} v^j + W^{e,h} e^{ij} && \text{for } (i,j) \in E, h \in [H] \, , \\
    \vhat^i &\defeq W^o \left( \Big\Vert_{h=1}^H \sum_{j \in \Ncal_i} \ahat^{ij,h} b^{ij,h} \right) && \text{for } i \in V \, .
\end{align}

\subsection{Improvements to value-based learning}
\label{app:value_learning_improvements}

Standard Deep Q Network (DQN) \citep{mnih2015human} can be improved with double Q-learning, dueling network, and prioritized replay.
Similarly, these improvements can be applied to VDN and VDGN as well.

\textbf{Double Q-learning.}
DQN maintains a main network $Q_{\theta}$ with trainable parameters $\theta$ and a target network $Q_{\theta'}$ with separate trainable parameters $\theta'$, which is used in the TD-target
\begin{align}
    y_{t+1} \defeq R_t + \gamma \max_{a} Q_{\theta'}(s_{t+1},a) \, .
\end{align}
After every update of the main parameters $\theta$, the target parameters $\theta'$ are slowly updated via $\theta' \leftarrow \lambda \theta + (1-\lambda) \theta'$.
In contrast, Double DQN \citep{van2016deep,hasselt2010double} uses the main network to compute the greedy policy, then uses the target network to evaluate the value of the greedy policy:
\begin{align}
    y_{t+1} := R_t + \gamma Q_{\theta'}(s_{t+1}, \argmax_{a} Q_{\theta}(s_{t+1},a)) \, .
\end{align}
This alleviates the overestimation issue caused by using a single network for both selecting and evaluating the greedy policy.
In the context of VDN and VDGN, we compute the TD target by using the target network $Q^i_{\theta'}$ to evaluate the greedy joint actions, which are selected by the main network $Q^i_{\theta}$:
\begin{align}\label{eq:ddqn-vdn}
    y_{t+1} &\defeq R_t + \gamma Q_{\theta'}(s_{t+1}, a)\vert_{a = [\argmax_{a^i} Q^i_{\theta}(s^i_{t+1},a^i)]} \\
    Q_{\theta}(s,a) &\defeq \sum_{i=1}^n Q^i_{\theta}(s^i,a^i) \quad Q_{\theta'}(s,a) \defeq \sum_{i=1}^n Q^i_{\theta'}(s^i,a^i) \, .
\end{align}

\textbf{Dueling Network.}
In DQN, the neural network takes the state $s$ as input and has $i=1,\dotsc,\lvert \Acal \rvert$ output nodes, each of which represents the scalar $Q(s,a=i)$ for $\lvert \Acal \rvert$ discrete actions.
Dueling networks \citep{wang2016dueling} use the fact that sometimes it suffices to estimate the value of a state without estimating the value of each action at the state, which implies that learning a value function $V^{\pi}(s,a)$ is enough.
Practically, this separation of state-action values from state values is achieved by parameterizing $Q_{\phi,\alpha,\beta}(s,a)$ as a sum of value function $V_{\phi,\beta}(s)$ and advantage function $A_{\phi,\alpha}(s,a)$:
\begin{align}
    Q_{\phi,\alpha,\beta}(s,a) = V_{\phi,\beta}(s) + A_{\phi,\alpha}(s,a) \, ,
\end{align}
where $\phi$ are shared parameters while $\alpha$ and $\beta$ are separate parameters specific to $A$ and $V$, respectively.
In practice, the expression used is
\begin{align}\label{eq:dueling}
    Q_{\phi,\alpha,\beta}(s,a) = V_{\phi,\beta}(s) + \left( A_{\phi,\alpha}(s,a) - \frac{1}{\lvert \Acal \rvert} \sum_{a'} A_{\phi,\alpha}(s,a') \right) \, ,
\end{align}
for stability and identifiability reasons.
\Cref{eq:dueling} is used as the form for both the main network with parameters $\theta \defeq (\phi, \alpha, \beta)$ and target network with parameters $\theta' \defeq (\phi',\alpha',\beta')$.
\Cref{eq:dueling} is directly compatible with Double DQN \Cref{eq:ddqn-vdn}.
It is also directly compatible with VDGN, where we use two separate final graph attention layers to output $V(s^i)$ and $A(s^i,a^i)$ for each node $i$.

\textbf{Prioritized replay.}
DQN stores the online experiences into a finite rolling replay buffer $\Bcal$ and periodically trains on batches of transitions that are sampled uniformly at random from $\Bcal$.
However, the frequency of experiencing a transition (which determines the count of occurrence in $\Bcal$) should not necessarily determine the probability of sampling and training on that sample.
This is because some transitions may have high occurrence frequency but low TD-error (if the Q values of the state are already well approximated), whereas other important transitions with high TD-error (due to insufficient training) may have low occurrence frequency in $\Bcal$.
Prioritized replay \citep{schaul2015prioritized} defines the priority $p_i$ of a transition $i = (s_i,a_i,r_i,s'_i)$ based on its TD-error $\delta_i \defeq y_i - Q(s_i,a_i)$, and defines the probability of sampling transition $i$ as
\begin{align}
    P(i) = \frac{p^{\alpha}_i}{\sum_k p^{\alpha}_k}
\end{align}
where $\alpha$ is a hyperparameter that anneals between uniform sampling ($\alpha=0$) and always sampling the highest priority transition ($\alpha \rightarrow \infty$).
We use the rank-based variant of prioritized replay, whereby $p_i \defeq \frac{1}{\text{rank}(i)}$ and $\text{rank}(i)$ is the rank of $i$ when buffer $\Bcal$ is sorted according to $\lvert \delta_i \rvert$.
We also use importance sampling correction weights $w_i \defeq (1/N 1/P(i))^{\beta}$, so that the agents learn with $w_i \delta_i$ rather than $\delta_i$.
This is an orthogonal improvement to any replay-based off-policy reinforcement learning method, and hence is directly compatible with both Double DQN and dueling network.

\subsection{Multi-objective VDGN}
\label{app:multi-objective}

Working in the context of linear preferences,
the goal is to find a set of policies $\Pi^*$ corresponding to the convex coverage set of the Pareto front, i.e., $\Pi^* \defeq \lbrace \pi  \in \Pi \colon \exists \omega \in \Omega, \forall \pi' \in \Pi, \omega^T J^{\pi} \geq \omega^T J^{\pi'} \rbrace$.
\begin{align}
    \Pi^* \defeq \lbrace \pi  \in \Pi \colon \exists \omega \in \Omega, \forall \pi' \in \Pi, \omega^T J^{\pi} \geq \omega^T J^{\pi'} \rbrace
\end{align}
To do so, we extend VDGN with Envelop Q-learning \citep{yang2019generalized}, a multi-objective RL method that efficiently finds the convex envelope of the Pareto front in multi-objective MDPs.
Envelop Q-learning searches over the space of already-learned preferences $\omega' \in \Omega$ to find the best TD target $\ybf(\omega)$ for a given preference $\omega$.
The single-objective VDN update equations \cref{eq:q-learning,eq:td-target} become
\begin{align}\label{eq:multiobj_vdn}
    \theta &\leftarrow \theta - \nabla_{\theta}\Ebb_{s_t,a_t,s_{t+1}}\left[ \lVert \ybf_{t+1} - \Qbf_{\theta}(s_t,a_t)\rVert^2_2 \right] \\
    \ybf_{t+1}(\omega) &\defeq \Rbf_t + \gamma \arg_Q \max_{a,\omega'} \omega^T \Qbf(s_{t+1},a,\omega') \\
    \max_{a,\omega'} \omega^T& \Qbf(s,a,\omega') = \omega^T\Qbf(s,a)\vert_{a = [\underset{a^i,\omega'}{\argmax} \omega^T \Qbf^i(s^i,a^i,\omega')]_{i=1}^n}
\end{align}
where $\arg_Q \underset{a,\omega'}{\max} \omega^T \Qbf(s_{t+1},a,\omega')$ extracts the vector $\Qbf(s_{t+1},a^*,\omega^*)$ such that $(a^*,\omega^*)$ achieves the $\argmax$.
Note that the global $\argmax$ is still efficiently computable via value decomposition, since
\begin{align}
&\max_{a} \omega^T Q(s,a) = 
= \sum_{i=1}^n \max_{a^i} \omega^T Q^i(s,a^i) \\
&\Rightarrow \argmax_{a} \omega^T Q = \left[ \argmax_{a^i} \omega^T Q^i(s,a^i) \right]_{i=1}^n
\end{align}

\section{Proofs}
\label{app:proofs}

\AgentCreationMarkov*
\begin{proof}
We prove this by construction.
First define a dummy individual agent state denoted $s_{\nexists}$ that is interpreted semantically as ``nonexistence''.
Define the global state space $\Scal$ to be $\Scal \defeq \prod_{i=1}^{N_{\max}} (\Scal^i \cup \lbrace s_{\nexists} \rbrace )$.
Hence, any arbitrary state $s_t$, with $n_t$ number of agents, belongs to $\Scal$.
More specifically, it belongs to the subspace
$\left( \prod_{i=1}^{n_t} \Scal^i \right) \times \left( \prod_{i=n_t+1}^{N_{\text{max}}} \lbrace s_{\nexists} \rbrace \right) \subset \Scal$.

Next, define the joint action space $\Acal$ to be $\Acal \defeq \prod_{i=1}^{N_{\max}} \Acal^i$.
At a time step during an episode when there are $n_t$ elements, the effectively available action space is just $\prod_{i=1}^{n_t} \Acal^i$, meaning that the joint action is written as $a = (a^1,\dotsc,a^{n_t},\underbrace{\text{no-op},\dotsc,\text{no-op}}_{N_{\max}-n_t})$,
where $\text{no-op}$ is ignored by the reward and transition function.
The reward for the joint state-action space is defined by 
\begin{align*}
&R\bigl((s^1,\dotsc,s^{n_t},\underbrace{s_{\nexists},\dotsc,s_{\nexists}}_{N_{\max}-n_t}), (a^1,\dotsc,a^{n_t},\underbrace{\text{no-op},\dotsc,\text{no-op}}_{N_{\max}-n_t})\bigr) \\
&= R\bigl( (s^1,\dotsc,s^{n_t}),(a^1,\dotsc,a^{n_t})\bigr) \, ,
\end{align*}
where we overload the use of notation $R$ since $R$ is already well-defined for variable number of agents.
Similarly, the transition function $P$ that was already well-defined for variable number of agents induces a transition function for the new state-action spaces $\Scal$ and $\Acal$.
Hence the stationary Markov game is defined by the tuple $\left( \Scal, \Acal, R, P, \gamma, N_{\max} \right)$.
This completes the construction.
\end{proof}

\Symmetry*
\begin{proof}
This is proved by induction on the number of graph attention layers.

\textit{Base case}.
In the original PDE, let $X = (x^1,\dotsc,x^n)$ denote the coordinates of elements $i=1,\dotsc,n$, let $v = (v^1,\dotsc,v^n)$ denote the input node features, and let $e = \lbrace e^{ij} \rbrace_{(i,j) \in E}$ denote the input edge features.
Let element $i$ with coordinate $x^i$ have node feature $v^i$ and edge feature set $\lbrace e^{ij} \rbrace_{j \in \Ncal(i)}$.
Suppose that the first layer of VDGN produces updated node features $\vhat = (\vhat^1, \dotsc, \vhat^n)$.
Any rotation can be represented by matrix multiplication with an orthogonal matrix $Q \in \Rbb^{k \times k}$, where $k$ is the number of spatial dimensions.
Upon rotation of the PDE solution and velocity field, we check that the element at coordinate $Qx^i$ produces the same node value as the value $\vhat^i$ that element $i$ would have produced without the rotation.
First, note that the displacement vector for any edge $e$ and  the velocity vector transform as $\hat{d}^e = Qd^e$ and $\qhat = Qq$, respectively.
The node feature at $Qx^i$ is $v^i$, since the PDE solution has been rotated.
The edge feature set at $Qx^i$ is $\lbrace e^{ij} \rbrace_{j \in \Ncal(i)}$ because:
a) by definition of rotation of the PDE, initial conditions are also rotated and hence the one-hot relative depth observation $\mathbf{1}[\text{depth}(i) - \text{depth}(j)]$ is unchanged;
b) the inner product part of the edge feature is unchanged because $\langle \dhat^e, \qhat \rangle = \langle Q d^e, Qq \rangle = \langle d^e, q \rangle$ because orthogonality of $Q$ implies $Q^TQ = 1$.
Hence the input node and edge features at $Qx^i$ in the rotated case are equal to those for element $i$ in the original case, so the updated node feature at $Qx^i$ is still $\vhat^i$.
For translation, which is represented by addition with some vector $b \in \Rbb^k$, the same reasoning applies by noting that displacements and velocities are invariance to addition, so the element at $x^i + b$ in the translated case has the same node and edge observations as the element $x^i$ in the original case, so the updated node feature at $x^i + b$ is still $\vhat^i$.

\textit{Inductive step}.
Suppose the claim holds for layer $l-1$.
This means that the node representation $\vhat^{l-1}$ (used as input to layer $l$) at element $Qx^i$ in the transformed PDE is equal to the input node representation at element $x^i$ in the original PDE.
The input edge representation is still $\lbrace e^{ij} \rbrace_{j \in \Ncal(i)}$ since edge representations are not updated, and the reasoning in the base case for edges applies.
Hence, layer $l$'s inputs in the transformed case are the same as its inputs in the original case, so the outputs $\vhat^l$ are also the same.
This completes the proof for equivariance.

\textit{Time invariance}.
This follows immediately from the fact that absolute time is not used as an observable, and the fact that VDGN learns a Markov policy.
\end{proof}

\section{Additional implementation details}
\label{app:implementation}

\textbf{Ancillary algorithmic details.}
Every $t_{\text{train}}$ environment steps, we sample batch size $M$ transitions from the replay buffer $\Bcal$ and conduct one training step.
We used the Adam optimizer with learning rate $\eta$ in Tensorflow \citep{abadi2016tensorflow}.
Each agent has its own independent $\epsilon$-greedy exploration with $\epsilon$ linearly decaying from $\epsilon_{\text{start}}$ to $\epsilon_{\text{end}}$ within $n_{\epsilon}$ episodes.
Along with prioritized replay, we used a lower bound of $p_{\text{unif}}$ on the probability of sampling each transition from the replay buffer.
Hyperparameters for all algorithmic settings are listed in \Cref{tab:hyperparam_alg}.
We ran a simple population-based elimination search with 128 starting samples on the final exploration rate $\epsilon_{text{end}}$, learning rate $\eta$, and target update rate $\tau$ on one $16 \times 16$ mesh, then used those hyperparameters for all experiments.
We used the same prioritized replay hyperparameters as the original paper. 
Graph attention hyperparameters $L$ and $R$ should be based on the spatial domain of influence (e.g., number of elements along a direction) to be included to inform a refinement decision.

\begin{table}[t]
    \centering
    \caption{Algorithm hyperparameters}
    \begin{tabular}{lcr}
    \toprule
        Name & Symbol & Value \\
        \midrule
        Batch size & $M$ & 16 \\
        Replay buffer size & $\lvert \Bcal \rvert$ & 10000 \\
        Exploration decay episodes & $n_{\epsilon}$ & 150000 \\
        Exploration end & $\epsilon_{\text{end}}$ & 0.01 \\
        Exploration start & $\epsilon_{\text{start}}$ & 0.5 \\
        Learning rate & $\eta$ & $5 \times 10^{-4}$ \\
        Prioritized replay exponent & $\alpha$ & 0.5 \\
        Prioritized replay & \multirow{2}{*}{$\beta$} & \multirow{2}{*}{0.4} \\
        \quad importance exponent & & \\
        Env steps per train & $t_{\text{train}}$ & 4 \\
        Target update rate & $\lambda$ & 0.0112 \\
        Uniform sampling probability & $p_{\text{unif}}$ & 0.001 \\
        Attention layer size & $d$ & 64 \\
        Number of attention heads & $H$ & 2 \\
        Number of attention layers & $L$ & 2 \\
        \multirow{2}{*}{Number of recurrent passes} & \multirow{2}{*}{$R$} & depth 1: 3 \\ 
        & & depth 2: 2 \\
        \bottomrule
    \end{tabular}
    \label{tab:hyperparam_alg}
\end{table}

\begin{table}[t]
    \centering
    \caption{Environment settings}
    \begin{tabular}{lcr}
    \toprule
        Name & Symbol & Value \\
        \midrule
        DoF threshold & $d_{\text{thres}}$ & 5620 \\
        Initial error threshold & $c_{\text{thres}}$ & $5 \times 10^{-4}$ \\
        Solver time increment & $d\tau$ & $2 \times 10^{-3}$ \\
        \multirow{2}{*}{Initial mesh partition} & \multirow{2}{*}{$(n_x, n_y)$} &  depth 1: $(16, 16)$ \\ 
        & & depth 2: $(8, 8)$ \\
        Training mesh dimension & $[s_x, s_y]$ & $[0, 2.0]^2$ \\
        \multirow{2}{*}{Solver time per mesh update} & \multirow{2}{*}{$\tau_{\text{step}}$} & depth 1: 0.25 \\  
        & & depth 2: 0.2 \\
        \multirow{2}{*}{Training max simulation time} & \multirow{2}{*}{$\tau_f$} & depth 1: 0.75 \\ 
        & & depth 2: 0.8 \\
        \bottomrule
    \end{tabular}
    \label{tab:env_settings}
\end{table}

\begin{figure*}[t]
\centering
\begin{subfigure}[t]{0.33\linewidth}
    \centering
    \includegraphics[width=0.49\linewidth]{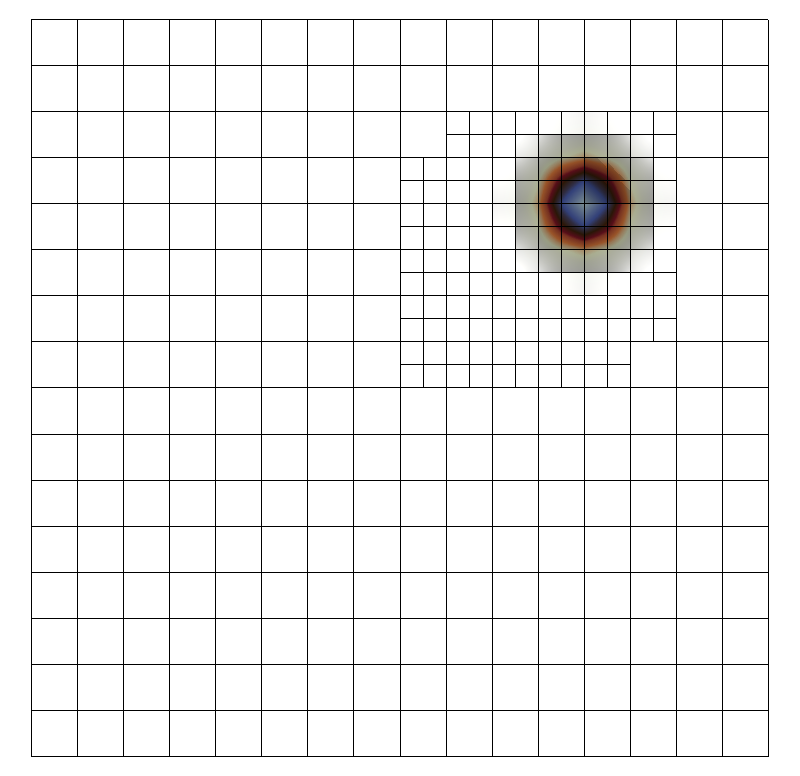}
    \includegraphics[width=0.49\linewidth]{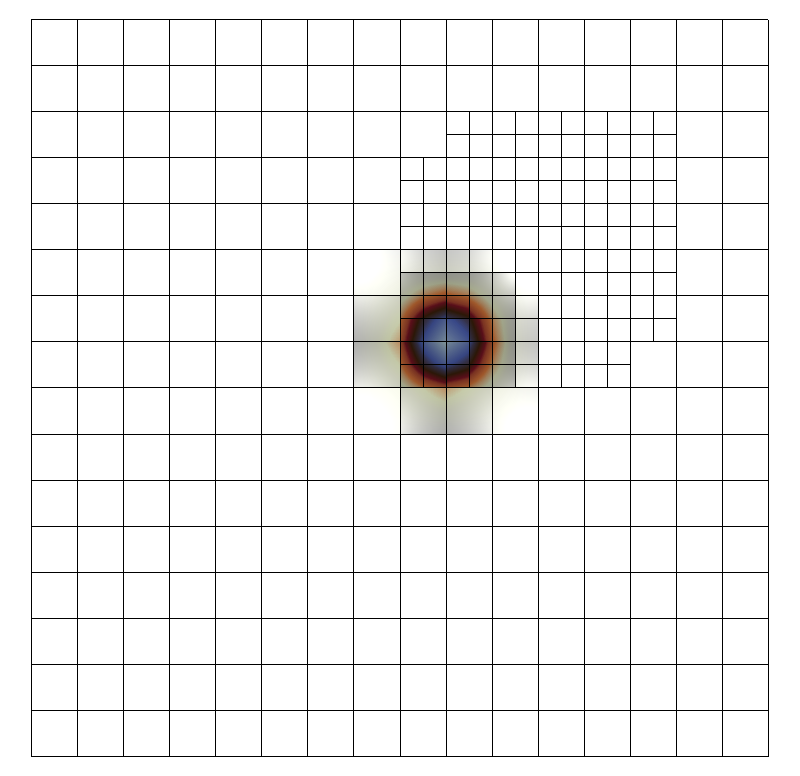}
    \captionsetup{labelformat=empty}
    \caption{$t=1$}
\end{subfigure}
\hfill
\begin{subfigure}[t]{0.33\linewidth}
    \centering
    \includegraphics[width=0.49\linewidth]{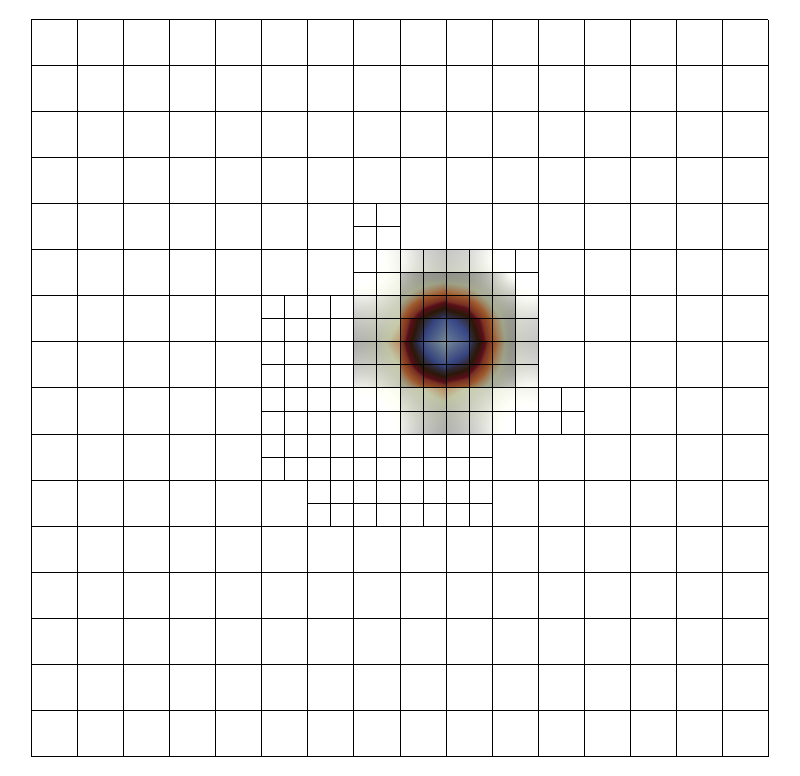}
    \includegraphics[width=0.49\linewidth]{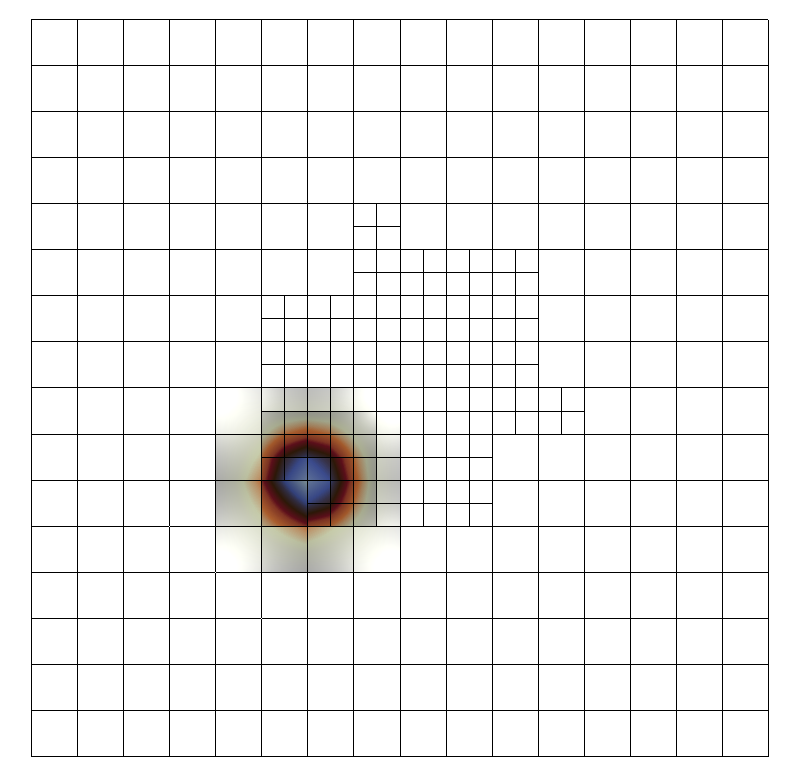}
    \captionsetup{labelformat=empty}
    \caption{$t=2$}
\end{subfigure}
\hfill
\begin{subfigure}[t]{0.33\linewidth}
    \centering
    \includegraphics[width=0.49\linewidth]{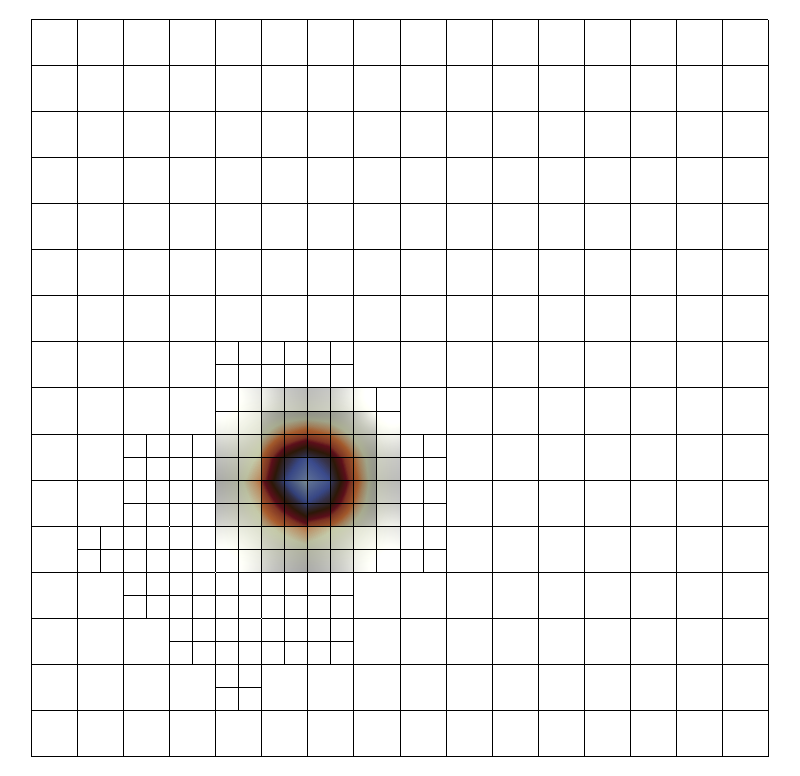}
    \includegraphics[width=0.49\linewidth]{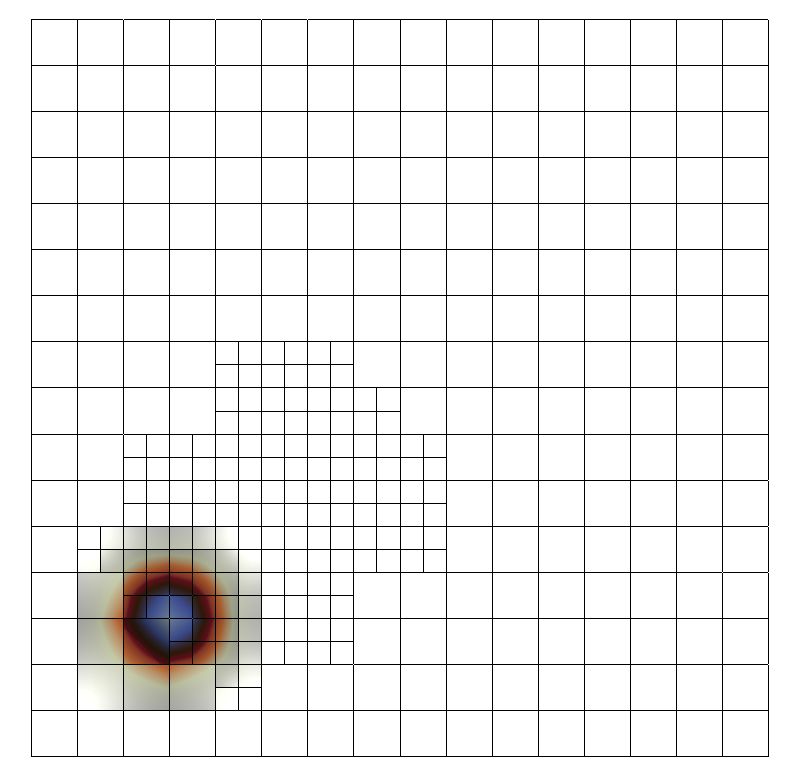}
    \captionsetup{labelformat=empty}
    \caption{$t=3$}
\end{subfigure}
\caption{Global refinements by VDGN are equivariant to rotation of the propagating feature. Compare to \Cref{fig:iso_periodic_velunif_nx16_ny16_depth1_tstep0p25_vdn_graphnet_nodoftime_3_ep210800}}
\label{fig:iso_periodic_velunif_nx16_ny16_depth1_tstep0p25_vdn_graphnet_nodoftime_3_ep210800_degree225}
\end{figure*}

\begin{figure*}[t]
\centering
\begin{subfigure}[t]{0.33\linewidth}
    \centering
    \includegraphics[width=0.49\linewidth]{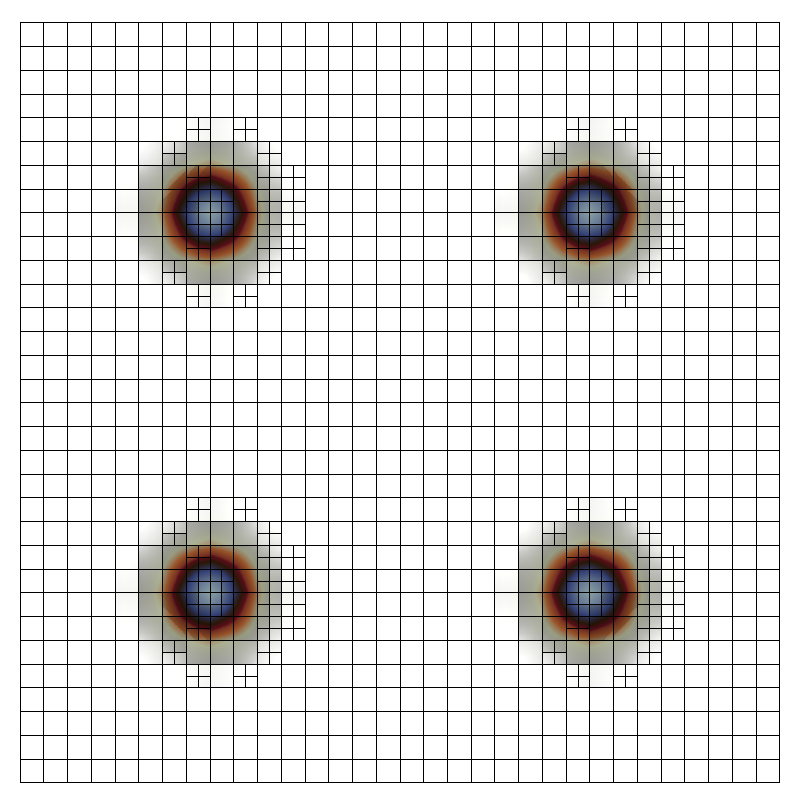}
    \includegraphics[width=0.49\linewidth]{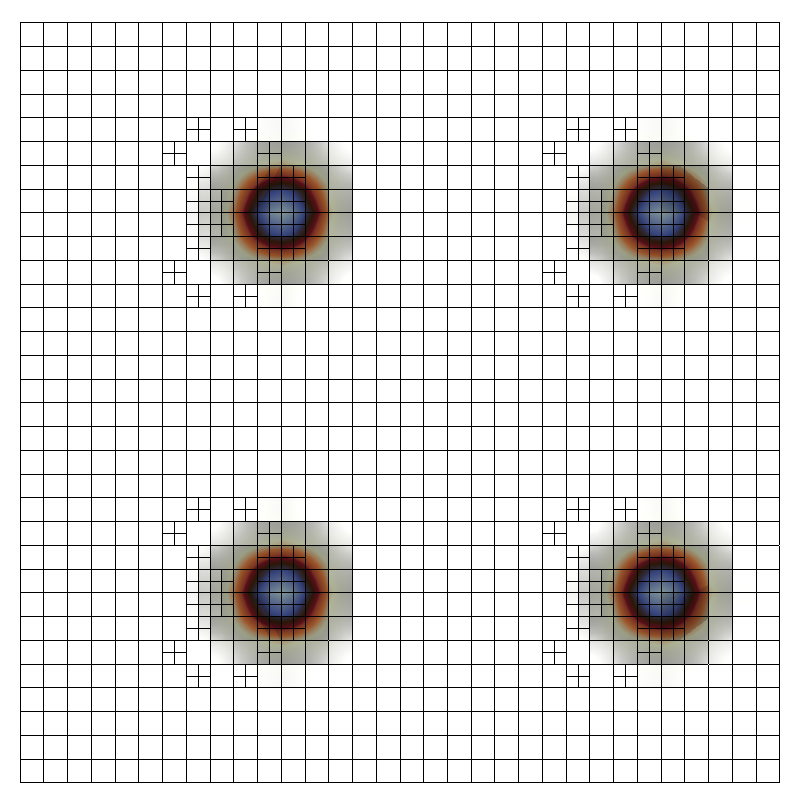}
    \captionsetup{labelformat=empty}
    \caption{$t=1$}
\end{subfigure}
\hfill
\begin{subfigure}[t]{0.33\linewidth}
    \centering
    \includegraphics[width=0.49\linewidth]{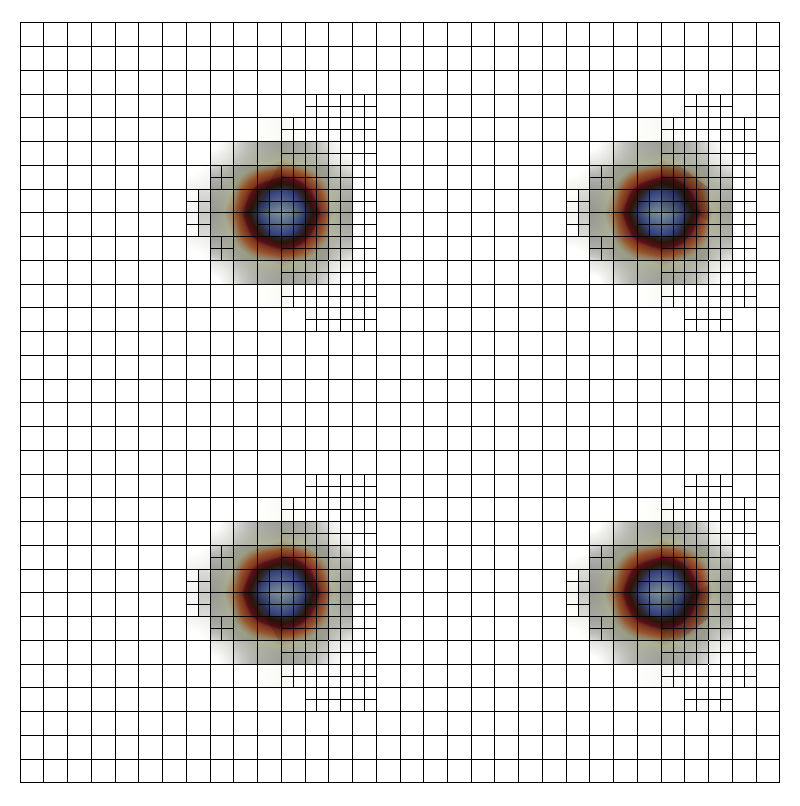}
    \includegraphics[width=0.49\linewidth]{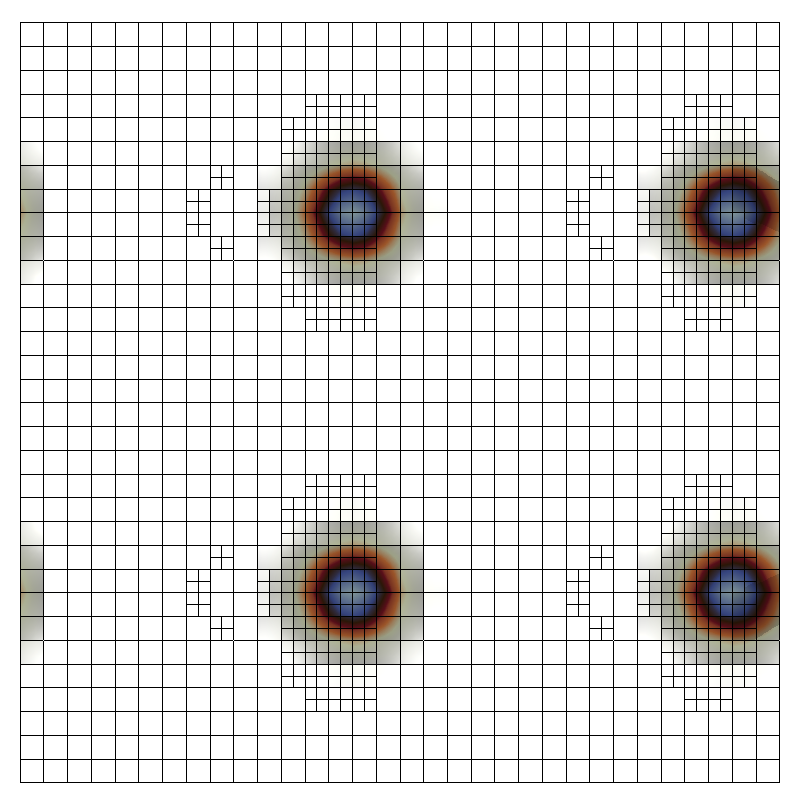}
    \captionsetup{labelformat=empty}
    \caption{$t=2$}
\end{subfigure}
\hfill
\begin{subfigure}[t]{0.33\linewidth}
    \centering
    \includegraphics[width=0.49\linewidth]{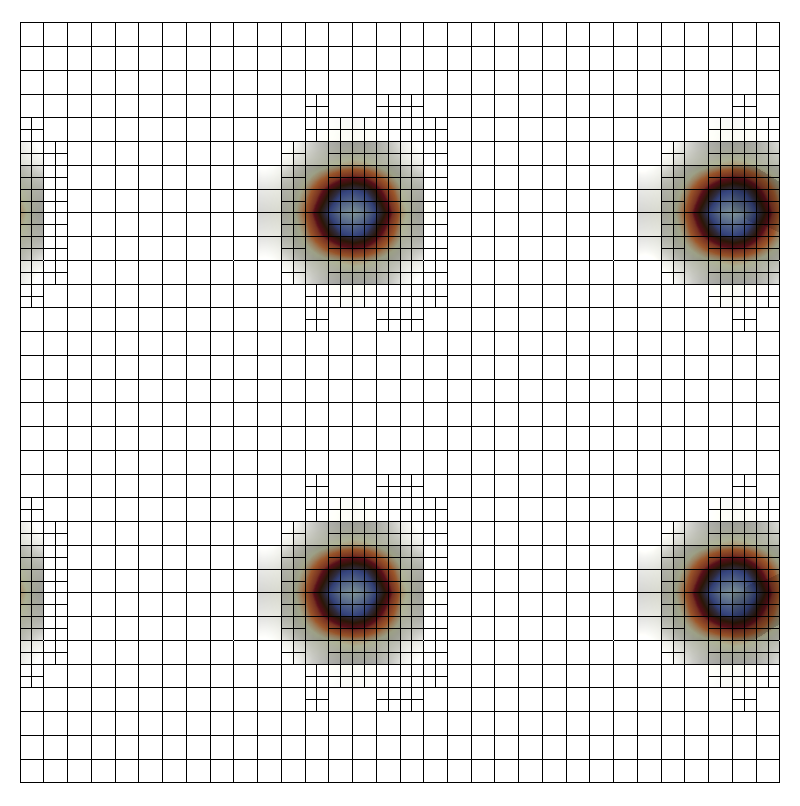}
    \includegraphics[width=0.49\linewidth]{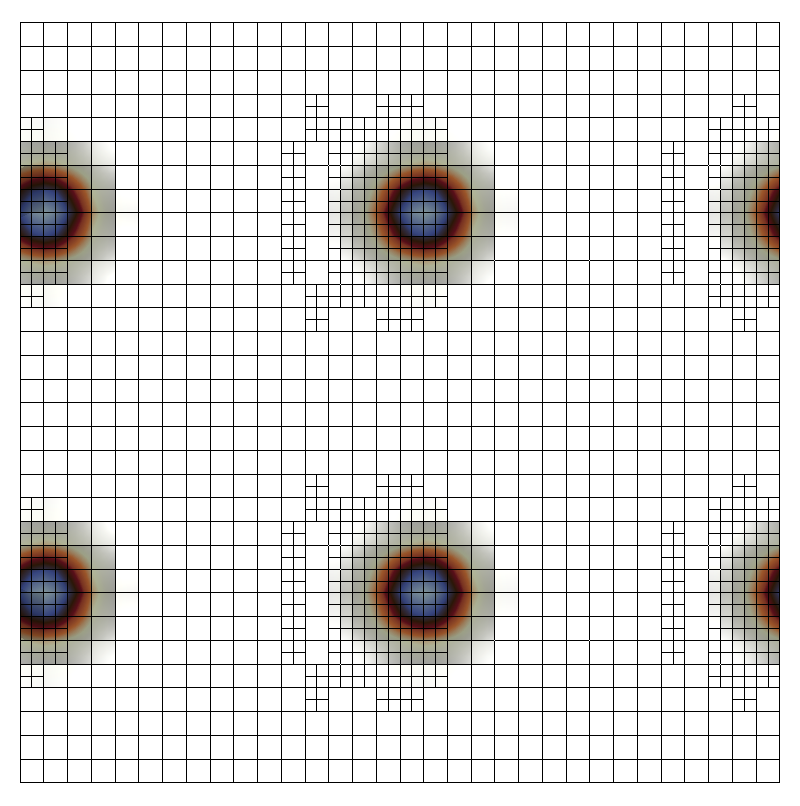}
    \captionsetup{labelformat=empty}
    \caption{$t=3$}
\end{subfigure}
\caption{Translational equivariance.}
\label{fig:iso_periodic_velunif_nx16_ny16_depth1_tstep0p25_vdn_graphnet_nodoftime_3_fourbumps}
\end{figure*}

\textbf{FEM settings.}
Hyperparameters for all environment settings are listed in \Cref{tab:env_settings}.
All simulations begin by applying level-1 refinement to elements whose initial error, which is available in general due to the known initial condition, is larger than $c_{\text{thres}}$.
The training mesh length and width are $[s_x, s_y]$.
We specified velocity by magnitude $u \in \Rbb$ and angle $\theta \in [0, 1.0]$, so that velocity components are $v_x = u \cos(2 \pi \theta)$ and $v_y = u \sin(2 \pi \theta)$.
We trained on 2D Gaussians
\begin{align}
    f(x,y,t) = 1 + \exp\left( -w ( (x - (x_0 + v_x t))^2 + (y - (y_0 + v_y t))^2 ) \right) \, ,
\end{align}
with initial conditions sampled from uniform distributions: $u \sim U[0, 1.5]$, $\theta \sim U[0, 1.0]$, $x_0, y_0 \sim U[0.5, 1.5]$, and $w = 100$.
At test time, anisotropic Gaussians are specified by
\begin{align}
    f(x,y,t) &= 1 + \exp\left( - ( w_x (x - dx)^2 + w_y (y - dy)^2 \right. \\
    &\quad \left. + w_{xy} (x - dx) (y - dy) ) \right) \\
    dx &= x_0 + v_x t \\
    dy &= y_0 + v_y t
\end{align}
with $\theta \sim U[0,1.0]$, $u \sim U[0, 1.5]$, $w_x, w_y, w_{xy} \sim U[20, 100]$, $x_0, y_0 \sim U[0.5,1.5]$.
Ring functions are
\begin{align}
    f(x,y,t) &= 1 + \exp\left( -w ( \sqrt{\Delta r^2} - r)^2 \right) \\
    \Delta r^2 &= (x - (x_0 + v_x t))^2 + (y - (y_0 + v_y t))^2
\end{align}
with $r \sim U[0.1, 0.3]$, $\theta \sim U[0, 1.0]$, $u \sim U[0, 1.5]$, $x_0, y_0 \sim U[0.5, 1.5]$.
Opposite traveling Gaussians have the same form as above, with $u \sim U[0, 1.5]$, $w=100$, the leftward moving Gaussian with $\theta = 0.5$, $x_0 \sim U[0.5, 1.5]$ and $y_0 \sim U[0.3, 0.7]$, while the rightward moving Gaussian has $\theta = 0.0$, $x_0 \sim U[0.5, 1.5]$ and $y_0 \sim U[1.3, 1.7]$.
The star mesh test case uses a mesh from MFEM \citep[Example 8]{anderson2019mfem}, with $\theta \sim U[0.05, 0.25]$, $u \sim U[0, 6.0]$, $w=5$, $x_0 = -1$ and $y_0 = -4$.

\begin{figure*}[t]
\centering
\begin{subfigure}[t]{0.24\linewidth}
    \centering
    \includegraphics[width=0.49\linewidth]{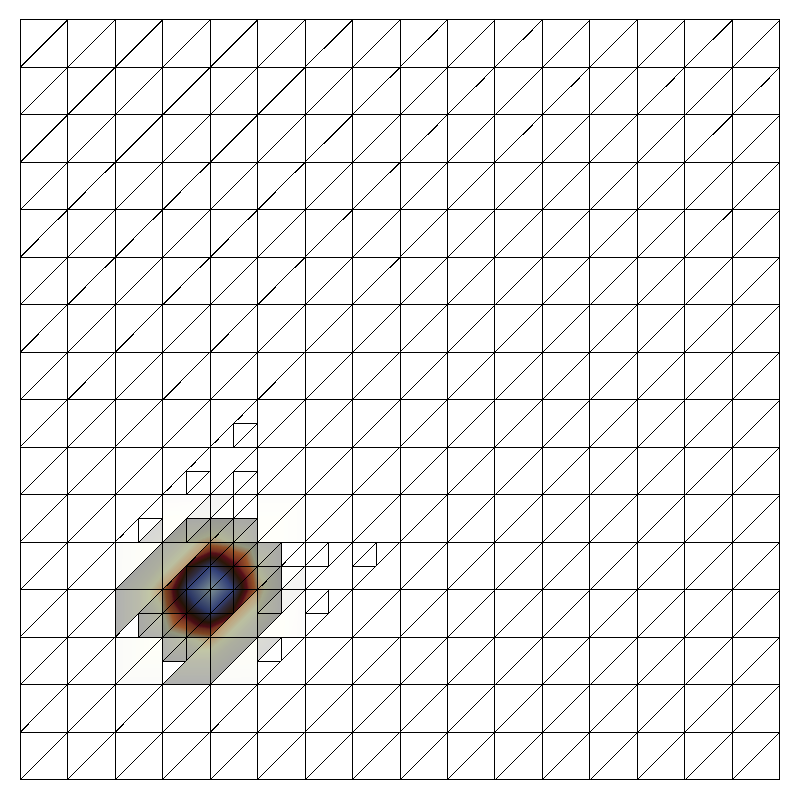}
    \includegraphics[width=0.49\linewidth]{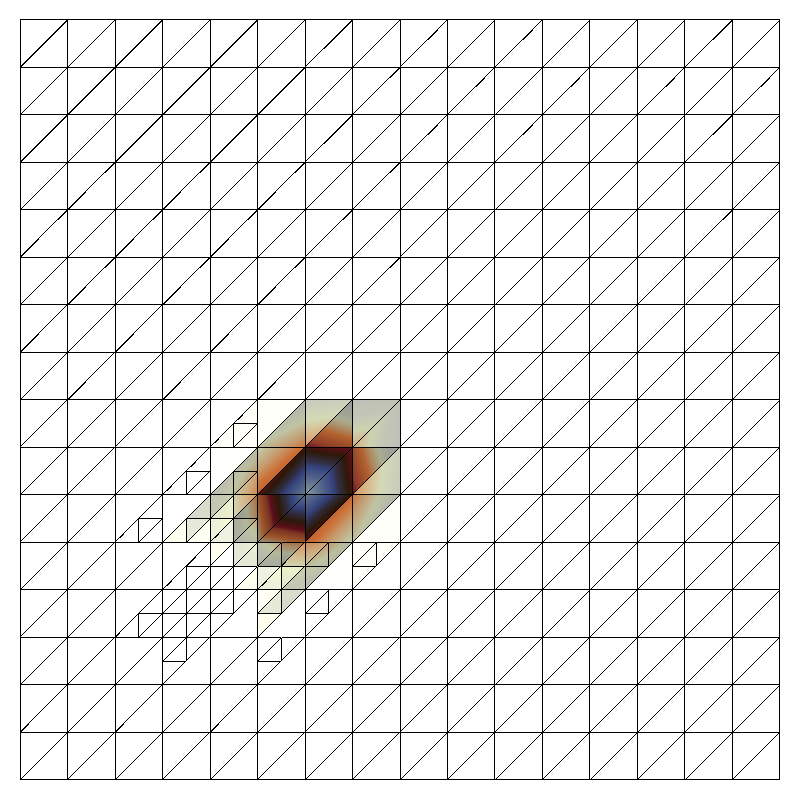}
    \captionsetup{labelformat=empty}
    \caption{$t=1$}
\end{subfigure}
\hfill
\begin{subfigure}[t]{0.24\linewidth}
    \centering
    \includegraphics[width=0.49\linewidth]{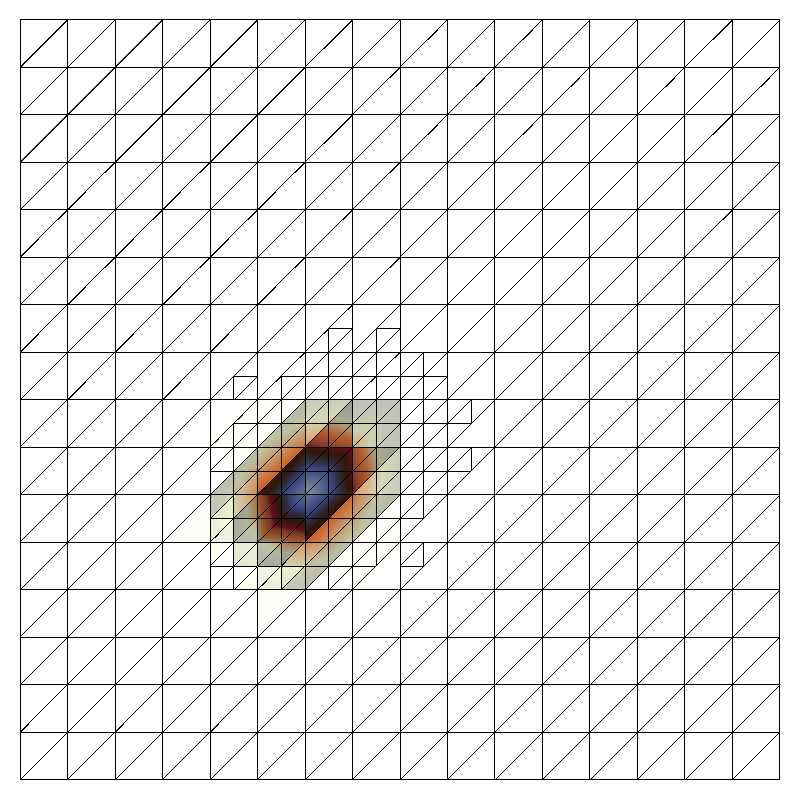}
    \includegraphics[width=0.49\linewidth]{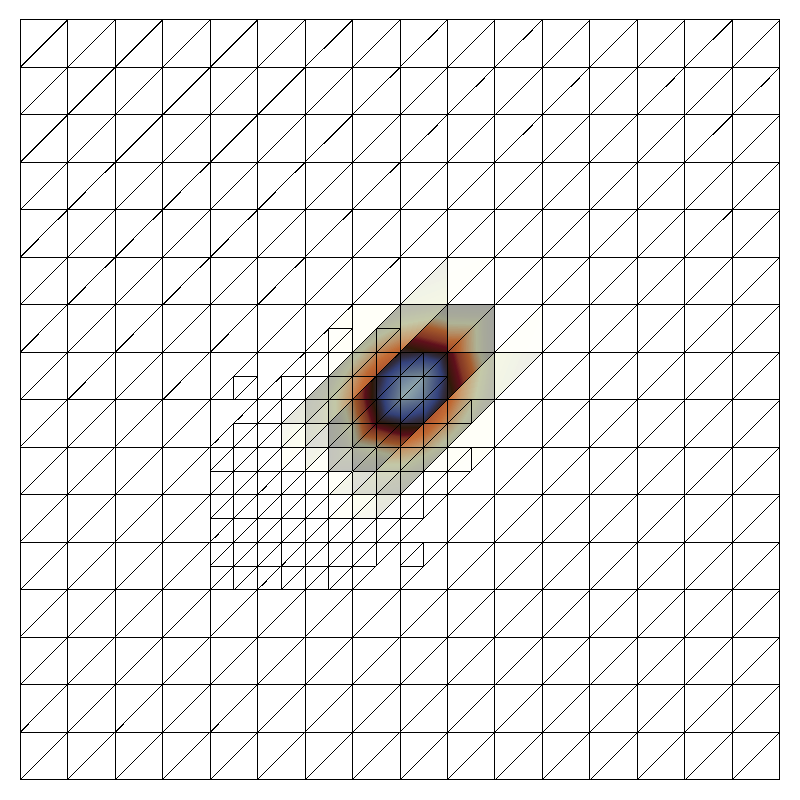}
    \captionsetup{labelformat=empty}
    \caption{$t=2$}
\end{subfigure}
\hfill
\begin{subfigure}[t]{0.24\linewidth}
    \centering
    \includegraphics[width=0.49\linewidth]{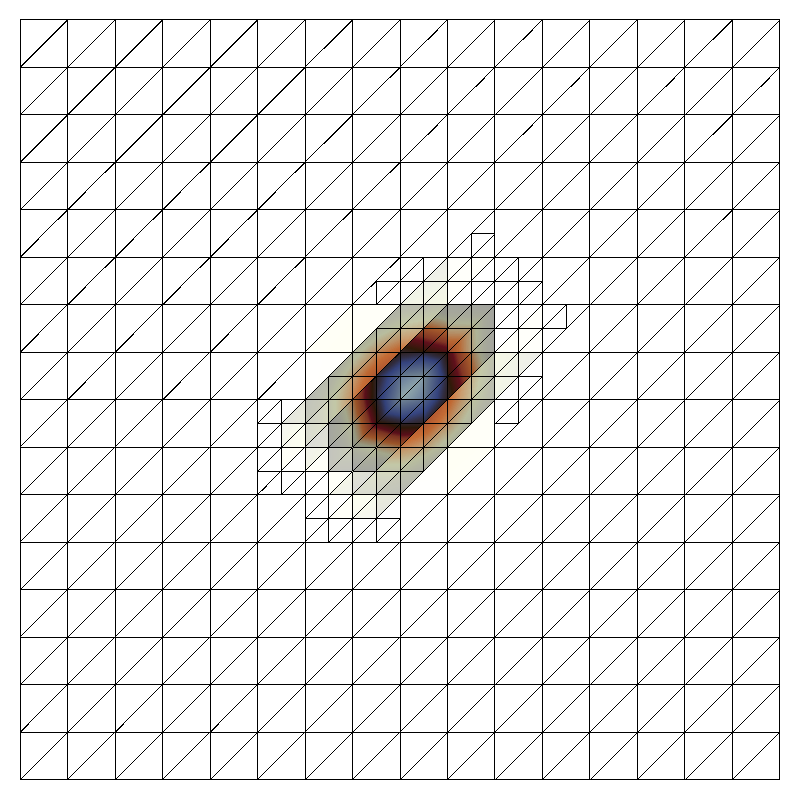}
    \includegraphics[width=0.49\linewidth]{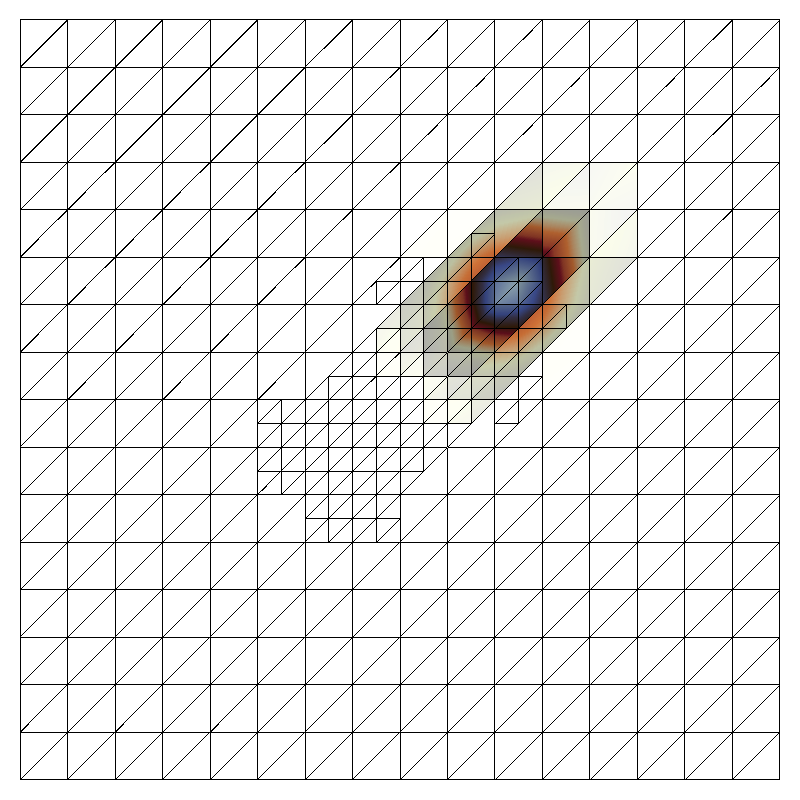}
    \captionsetup{labelformat=empty}
    \caption{$t=3$}
\end{subfigure}
\hfill
\begin{subfigure}[t]{0.24\linewidth}
    \centering
    \includegraphics[width=0.49\linewidth]{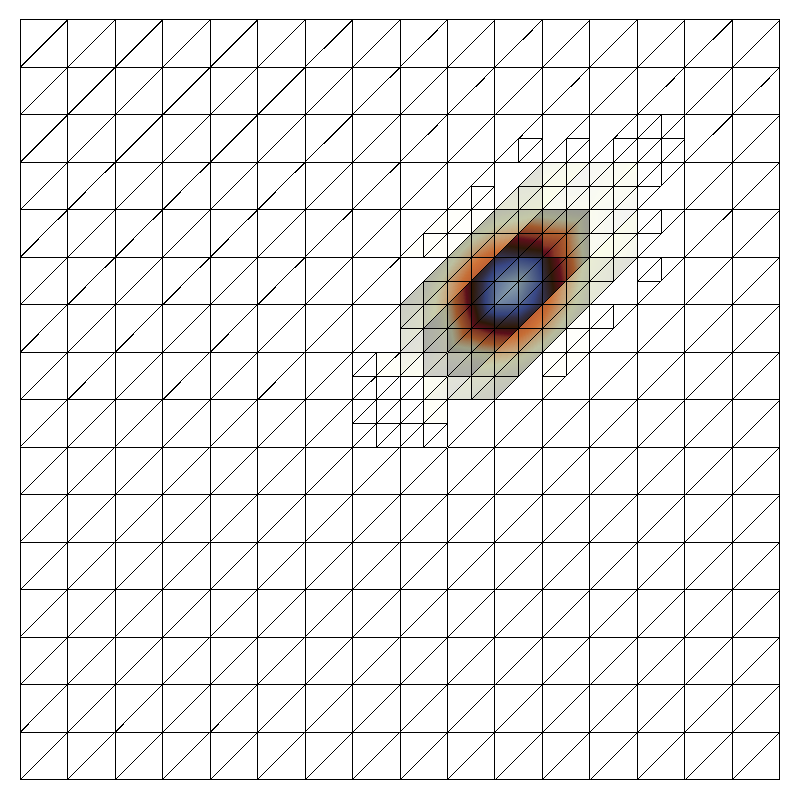}
    \includegraphics[width=0.49\linewidth]{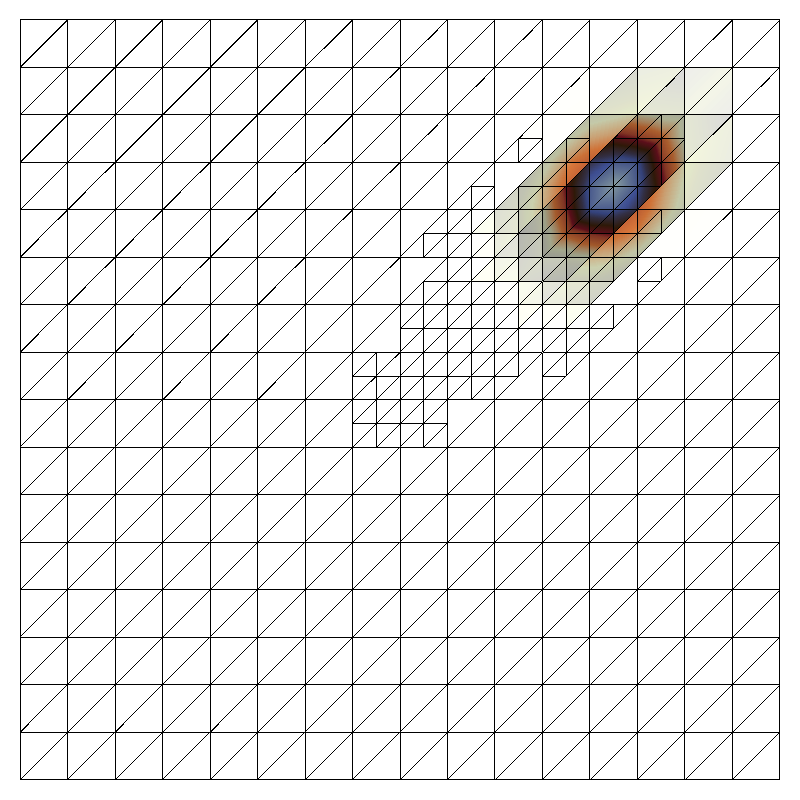}
    \captionsetup{labelformat=empty}
    \caption{$t=4$}
\end{subfigure}
\caption{Policy trained on quadrilateral elements can be run on triangular elements.}
\label{fig:iso_periodic_velunif_nx16_ny16_depth1_tstep0p25_vdn_graphnet_nodoftime_3_gen_triangle}
\end{figure*}

\begin{table*}[t]
    \centering
    \caption{Runtime in seconds of VDGN is comparable to local error threshold-based policies with best $\theta_r$ selected from \Cref{tab:efficiency}. Average over 100 test episodes.}
    \begin{tabular}{crrrrrrrrr}
    \toprule
    \multicolumn{1}{}{}&
    \multicolumn{3}{c}{In-distribution} &
    \multicolumn{6}{c}{Generalization}\\
    \cmidrule(r){2-4}
    \cmidrule(r){5-10}
    & \shortstack{Depth 1 \\ $375$ steps} & \shortstack{Depth 2 \\ $400$ steps} & \shortstack{Triangular \\ $1250$ steps} & \shortstack{Depth 1 \\ $2500$ steps} & \shortstack{Depth 2 \\ $2500$ steps} & \shortstack{Anisotropic \\ $2500$ steps} & \shortstack{Ring \\ $2500$ steps} & \shortstack{Opposite \\ $2500$ steps} & \shortstack{Star \\ $750$ steps} \\
    \midrule
    Threshold (action) & 0.009 & 0.005 & 0.005 & 0.015 & 0.012 & 0.014 & 0.014 & 0.021 & 0.005 \\
    Threshold (total) & 1.18 & 0.84 & 2.15 & 12.9 & 11.0 & 11.9 & 12.2 & 18.1 & 1.30 \\
    \midrule
    VDGN (action) & 0.024 & 0.011 & 0.005 & 0.026 & 0.011 & 0.008 & 0.008 & 0.008 & 0.008 \\
    VDGN (total) & 1.96 & 1.18 & 2.26 & 15.0 & 8.23 & 17.3 & 18.5 & 21.6 & 3.43 \\
    \bottomrule
    \end{tabular}
    \label{tab:runtime}
\end{table*}

\section{Additional results}
\label{app:results}

\textbf{Symmetry.}
In \Cref{fig:iso_periodic_velunif_nx16_ny16_depth1_tstep0p25_vdn_graphnet_nodoftime_3_ep210800}, the initial conditions are $x_0 = y_0 = 0.5$ and $v_x = v_y = 1.5$.
We rotated the initial conditions by $\pi$, so that $x_0 = y_0 = 1.5$ and $v_x = v_y = -1.5$, and we can see from
\Cref{fig:iso_periodic_velunif_nx16_ny16_depth1_tstep0p25_vdn_graphnet_nodoftime_3_ep210800_degree225} that the global refinement choices are similarly rotated by $\pi$.
Rotational equivariance can be also seen in \Cref{fig:iso_periodic_velunif_nx16_ny16_depth1_tstep0p25_vdn_graphnet_nodoftime_3_ep210800_opposite}, where the refinement choices for the leftward and rightward moving waves are reflections of each other.
Translational equivariance along the vertical and horizontal axes can be seen in \Cref{fig:iso_periodic_velunif_nx16_ny16_depth1_tstep0p25_vdn_graphnet_nodoftime_3_fourbumps}.




\textbf{Runtime.} \Cref{tab:runtime} shows that the runtime of trained VDGN policies at test time is on the same order of magnitude as Threshold policies, for both average time per action (for all elements) and average time per episode.

\textbf{Scaling.}
\Cref{fig:iso_periodic_velunif_nx16_ny16_depth1_tstep0p25_vdn_graphnet_nodoftime_3_ep210800_tfinal5} shows a policy trained with only three steps per episode (375 solver steps), running on 20 steps at test time (2500 solver steps).
\Cref{fig:iso_periodic_velunif_nx16_ny16_depth1_tstep0p25_vdn_graphnet_nodoftime_3_ep210800_nx64_ny64,fig:iso_periodic_velunif_nx16_ny16_depth1_tstep0p25_vdn_graphnet_nodoftime_3_ep210800_nx64_ny64_continued}
shows a policy trained on a $16 \times 16$ mesh with only three steps per episode, tested on a $64 \times 64$ mesh.

\textbf{Using error estimator as observation.}
\Cref{tab:efficiency_errest} shows that VDGN retains better performance than Threshold policies when the true error in agents' observations is replaced by the estimates given by the ZZ error estimator \citep{zienkiewicz1992superconvergent}.

\section{Other formulations and their difficulties}
\label{app:other_formulations}

\subsection{Single-agent Markov decision process}
\label{app:single-agent}

As mentioned in \Cref{sec:related_work}, previous work that treat AMR as a sequential decision-making problem took the single-agent viewpoint.
This poses issues with producing multiple element refinements at each mesh update step.

In the global approach \citep{yang2021reinforcement}, a single agent takes the entire mesh state as input, chooses one element out of all available elements to refine (or potentially de-refine), and then the global solution is updated.
To avoid computing the global solution after every refinement/de-refinement of a single element, which would be prohibitively expensive, an extension of this approach would have to use either a fixed hyperparameter or a fixed rule to determine the number of rounds of single-element selection between each solution update.
Specifying this hyperparameter or rule is hard to do in advance and may erroneously exclude the true optimal policy from the policy search space. 

In the local approach \citep{foucart2022deep}, a single agent is ``centered'' on an element and makes a refine/no-op/de-refine decision for that element.
At training time, because each refine/de-refine action impacts the global solution, which directly impacts the reward, the environment transition involves a global solution update every time the agent chooses an action for an element.
Since this is prohibitively expensive to do for larger number of elements at test time, \citep{foucart2022deep} redefines the environment transition at test time so that the global solution is updated only after the agent chooses an action for all elements.
This presents the agent with different definitions of the environment transition function at train and test time.

\begin{table}[htb]
    \centering
    \caption{Mean and standard error of efficiency of VDGN using ZZ error estimator values as observation, versus threshold-based policies, over 100 runs with uniform random ICs.}
    \begin{tabular}{crr}
    \toprule
    $\theta_r$ & Depth 1 $375$ steps & Depth 2 400 steps \\
    \midrule
    $5\times10^{-3}$ & 0.916 (0.007) & 0.523 (0.021)  \\
    $5\times10^{-4}$ & 0.899 (0.002) & 0.529 (0.022) \\
    $5\times10^{-5}$ & 0.857 (0.003) & 0.509 (0.021) \\
    $5\times10^{-6}$ & 0.813 (0.004) & 0.474 (0.019) \\
    $5\times10^{-7}$ & 0.767 (0.004) & 0.431 (0.018) \\
    $5\times10^{-8}$ & 0.718 (0.005) & 0.390 (0.016) \\
    $5\times10^{-15}$ & 0.473 (0.006) & 0.260 (0.014) \\
    \midrule
    VDGN & \textbf{0.948} (0.002) & \textbf{0.543} (0.022) \\
    VDGN/best $\theta_r$ & 1.03 & 1.03 \\
    \bottomrule
    \end{tabular}
    \label{tab:efficiency_errest}
\end{table}

\subsection{Fully-decentralized training and execution}
\label{app:fully-decentralized}

In contrast to the approach of centralized training with decentralized execution used in this work, one may consider fully-decentralized training and execution.
A reason for full decentralization is the fact that the governing equations of most physical systems in finite element simulations are local in nature, so one may hypothesize that an element only needs to respond to local observations and learn from local rewards.
However, this poses two main challenges:
1) the class of policies accessible by fully-decentralized training with local rewards may not contain the global optimum joint policy;
2) for $h$-adaptive mesh refinement with maximum depth greater than one, one must confront the posthumous credit assignment problem: a refinement or de-refinement action may have long-term impact on the global error but the agent responsible for that action immediately disappears upon taking the action and hence does not receive future rewards.

For the second challenge in particular, we can easily construct a scenario in which an agent who only receives an immediate reward and disappears upon refinement (i.e., does not persist) cannot learn anticipatory refinement.
We can construct a scenario where an agent cannot make an optimal refinement action:
1) use a moving feature such that error is minimized by the sequence of actions: $a^i_t = 1$ at time $t$, and $a^{j}_{t+1} = 1, \forall j \in \text{Desc}(i)$ at time $t+1$;
2) construct the feature so that action $a^i_t=1$ has no instantaneous benefit for error reduction, meaning that the instantaneous reward $r^i_t$ for the state-action pair $(o^i_t, a^i_t=1)$ is the same as the reward for the pair $(o^i_t, a^i_t=0)$.
Since agent $i$'s trajectory terminates upon receiving reward $r^i_t$, this means agent $i$ cannot distinguish the value of refinement versus no-op.
This problem may be overcome by letting all agents persist after refinement, receive dummy observations and local reward but take no action, so that they can learn the past action's impact on future local reward.
Another way is to let the agent disappear upon refinement/de-refinement at transition $(s_t,a_t,r_t)$ but use the final reward $r_T$ (e.g., upon episode termination at time $t=T$) as the reward for the transition $(s_t,a_t,r_T)$.

\begin{figure*}[ht]
\centering
\begin{subfigure}[t]{0.33\linewidth}
    \centering
    \includegraphics[width=0.49\linewidth]{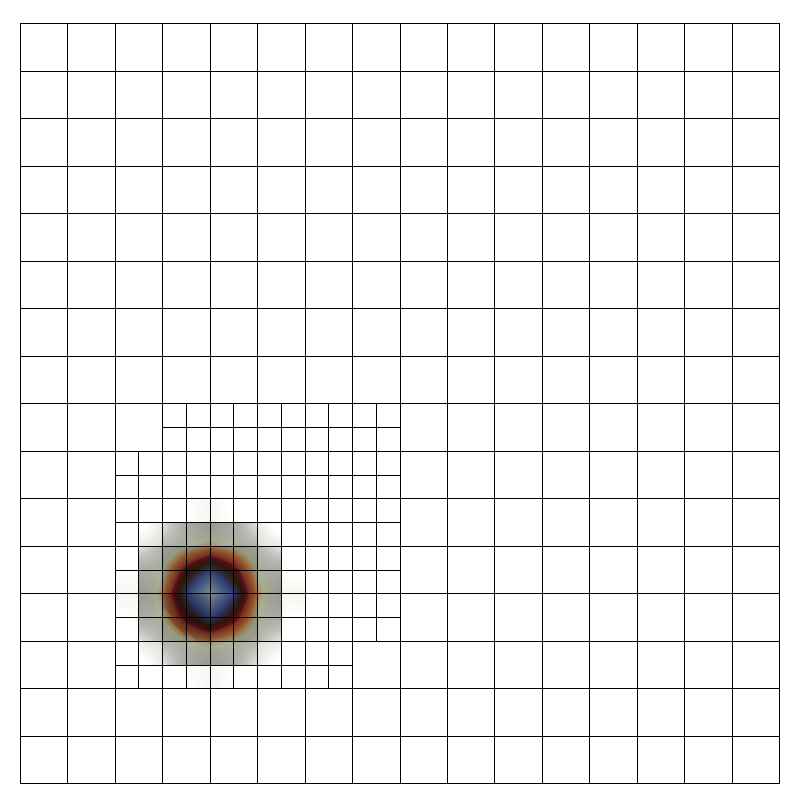}
    \includegraphics[width=0.49\linewidth]{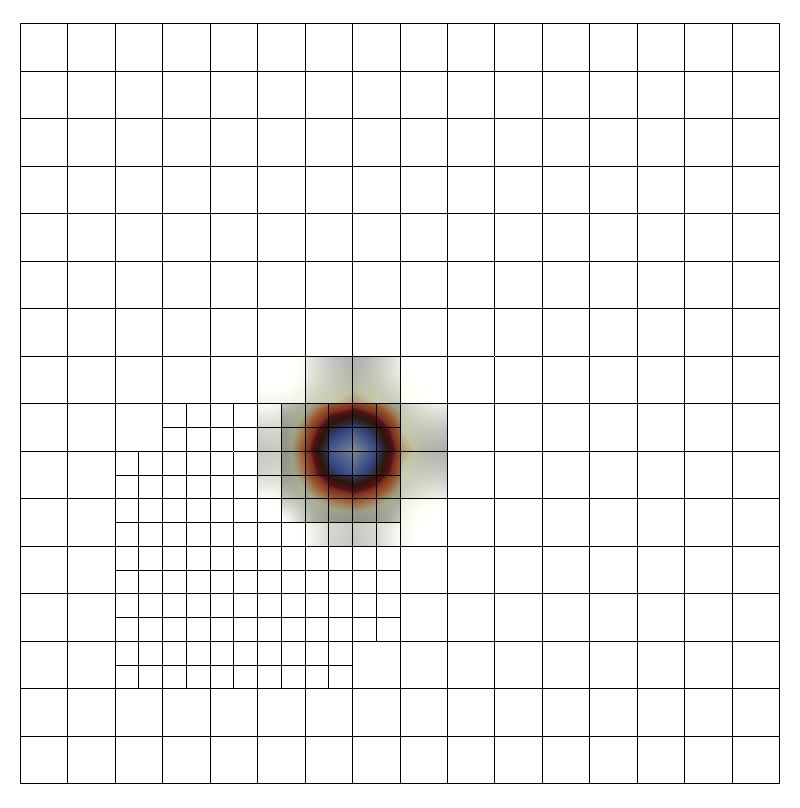}
    \captionsetup{labelformat=empty}
    \caption{$t=1$}
\end{subfigure}
\hfill
\begin{subfigure}[t]{0.33\linewidth}
    \centering
    \includegraphics[width=0.49\linewidth]{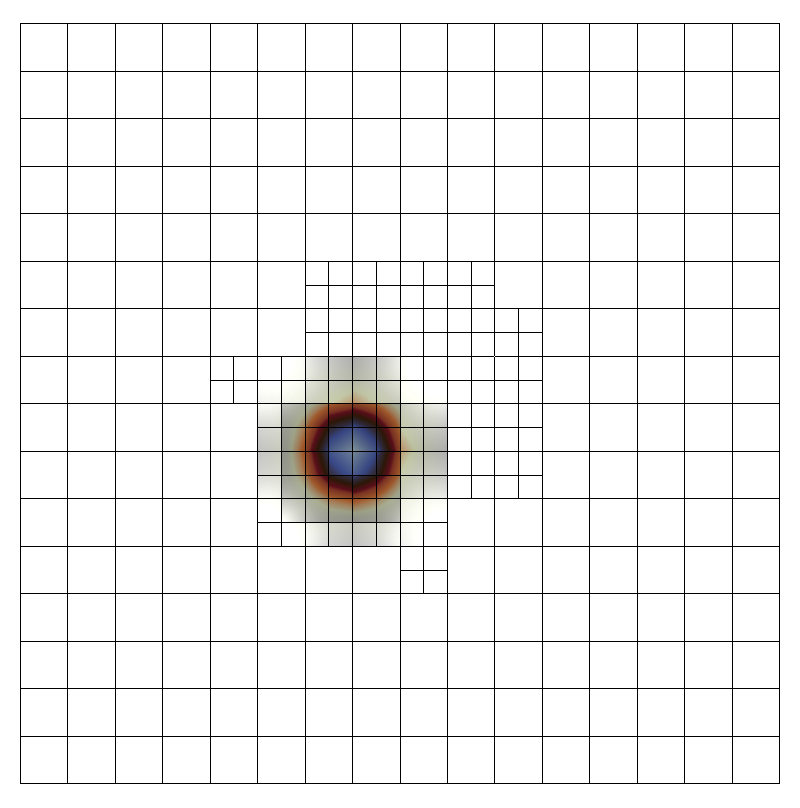}
    \includegraphics[width=0.49\linewidth]{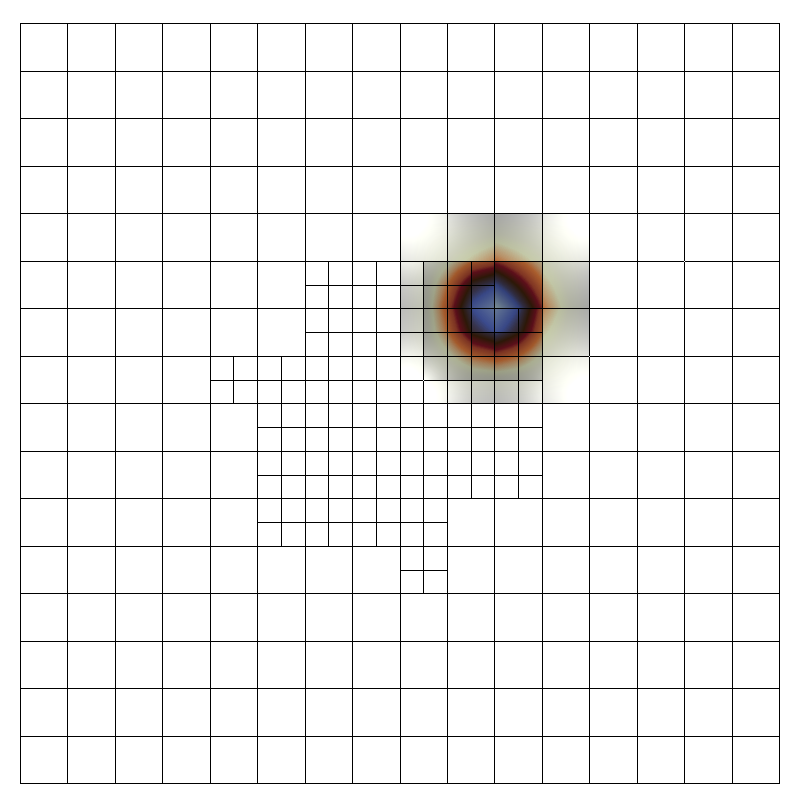}
    \captionsetup{labelformat=empty}
    \caption{$t=2$}
\end{subfigure}
\hfill
\begin{subfigure}[t]{0.33\linewidth}
    \centering
    \includegraphics[width=0.49\linewidth]{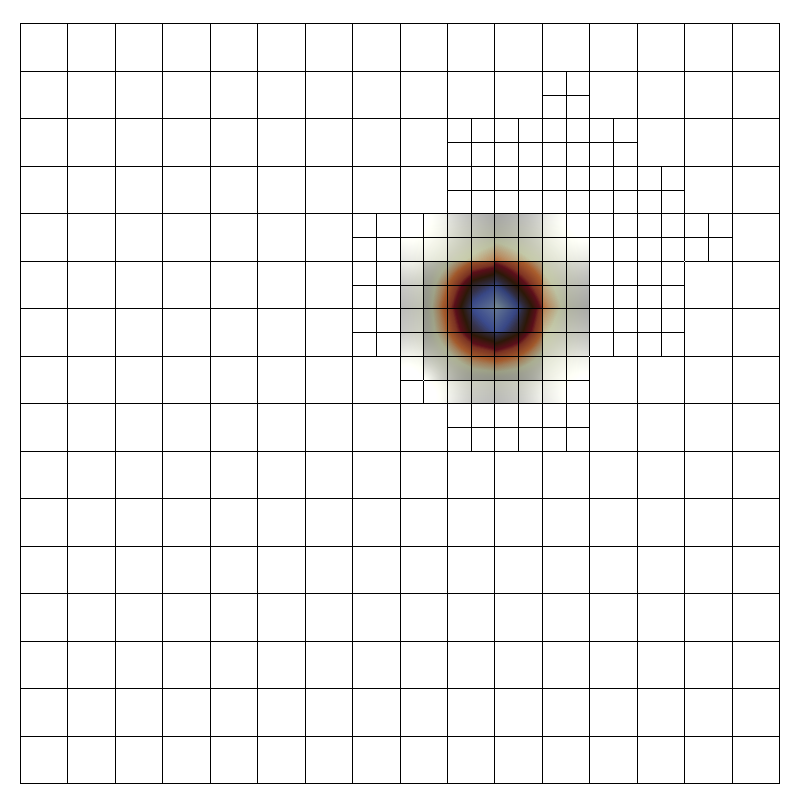}
    \includegraphics[width=0.49\linewidth]{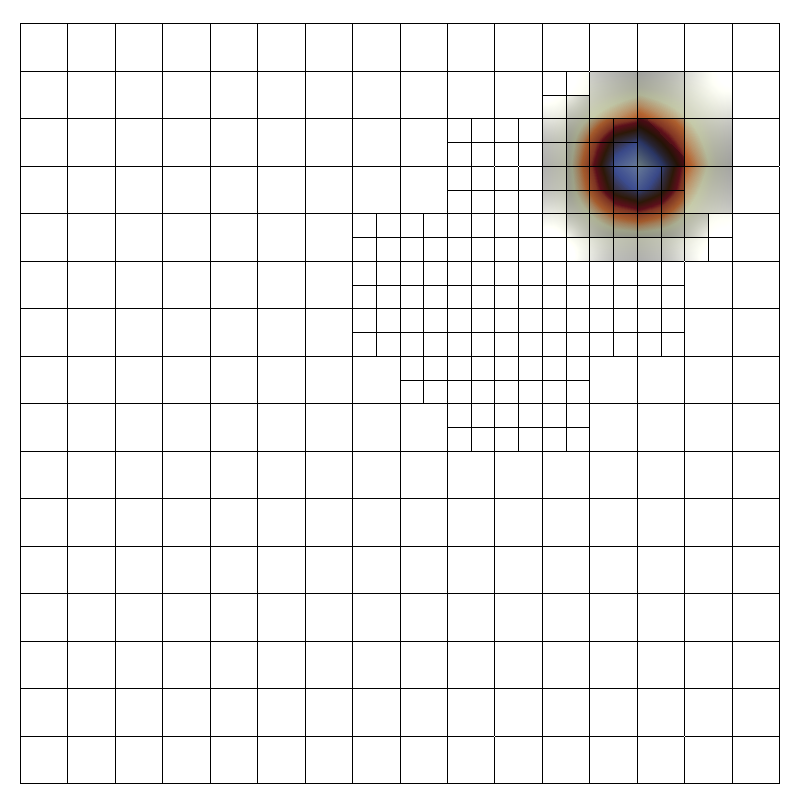}
    \captionsetup{labelformat=empty}
    \caption{$t=3$}
\end{subfigure}

\begin{subfigure}[t]{0.33\linewidth}
    \centering
    \includegraphics[width=0.49\linewidth]{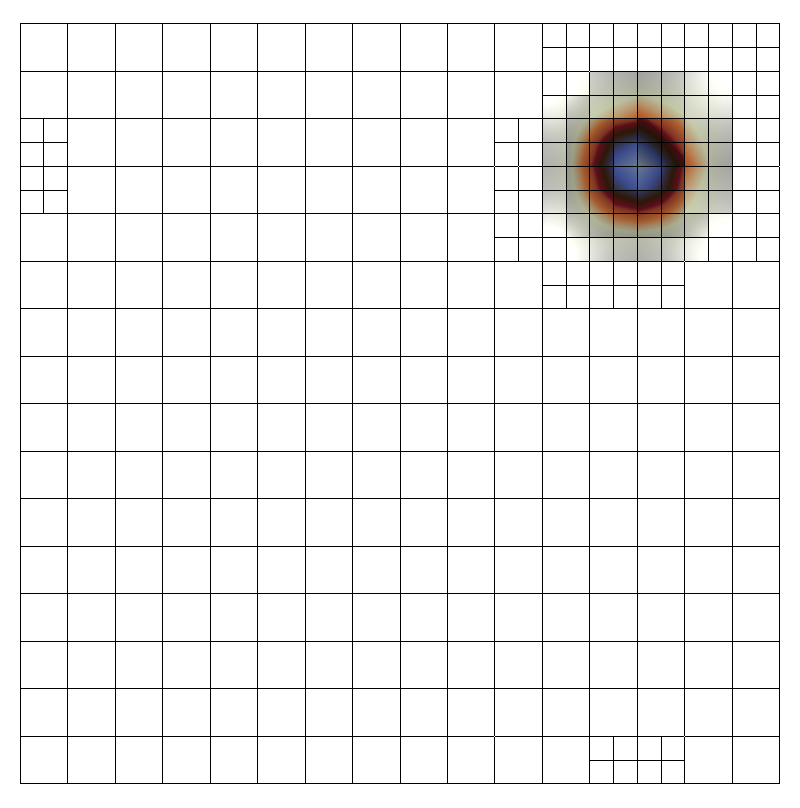}
    \includegraphics[width=0.49\linewidth]{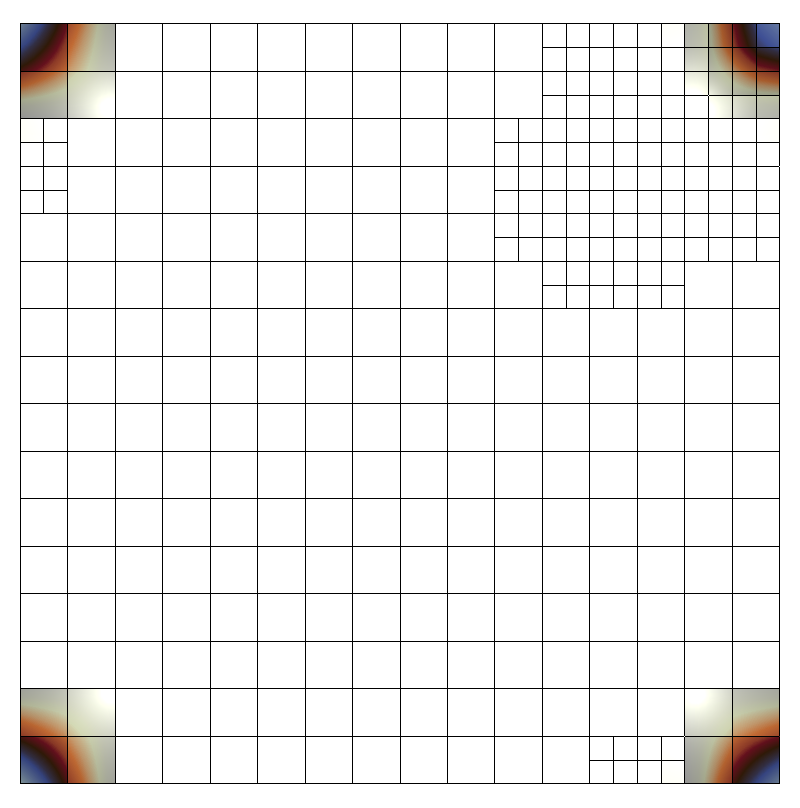}
    \captionsetup{labelformat=empty}
    \caption{$t=4$}
\end{subfigure}
\hfill
\begin{subfigure}[t]{0.33\linewidth}
    \centering
    \includegraphics[width=0.49\linewidth]{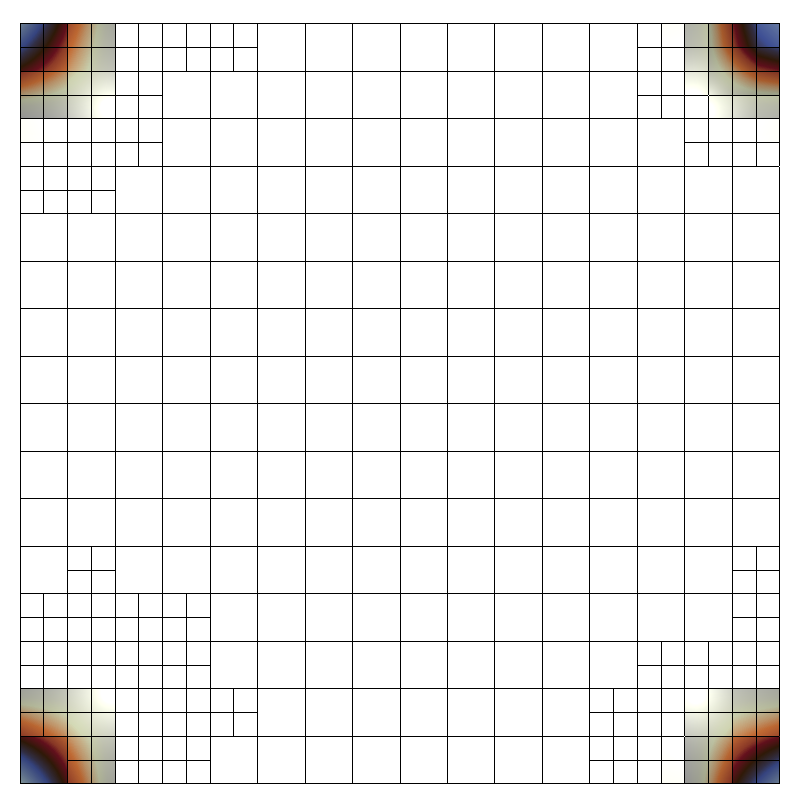}
    \includegraphics[width=0.49\linewidth]{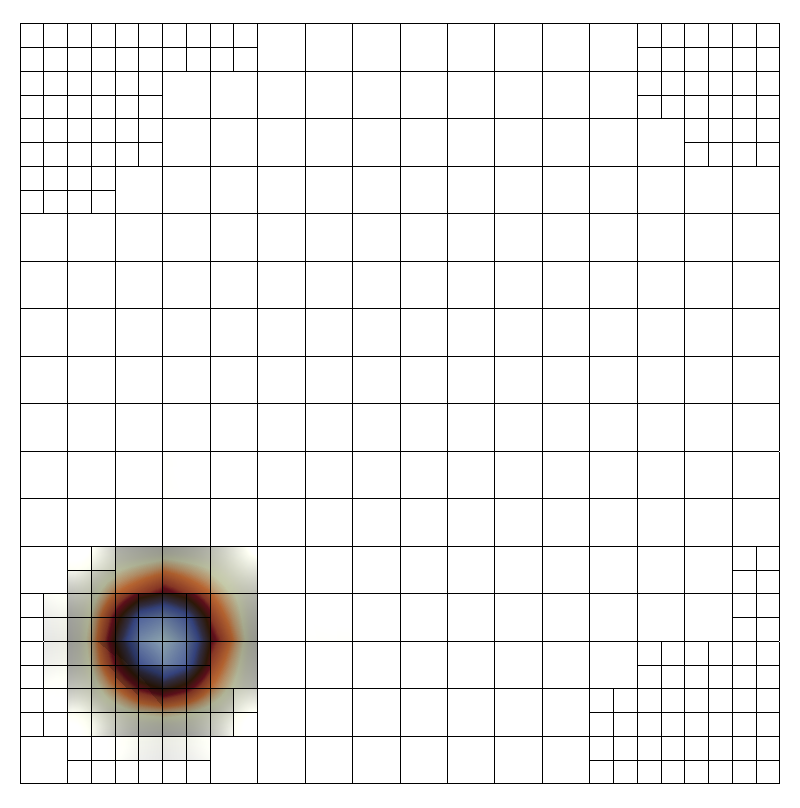}
    \captionsetup{labelformat=empty}
    \caption{$t=5$}
\end{subfigure}
\hfill
\begin{subfigure}[t]{0.33\linewidth}
    \centering
    \includegraphics[width=0.49\linewidth]{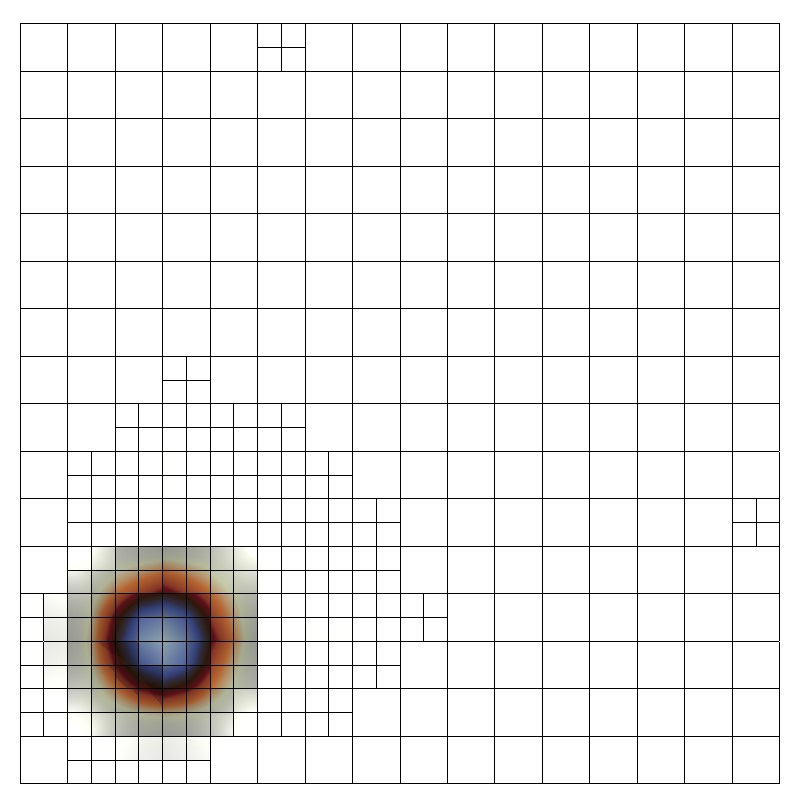}
    \includegraphics[width=0.49\linewidth]{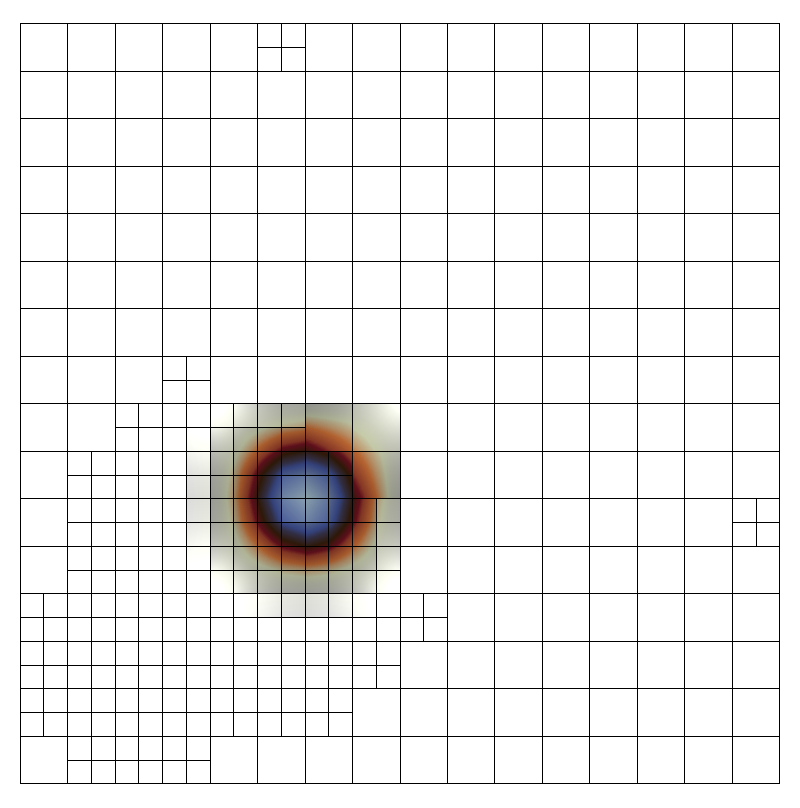}
    \captionsetup{labelformat=empty}
    \caption{$t=6$}
\end{subfigure}

\begin{subfigure}[t]{0.33\linewidth}
    \centering
    \includegraphics[width=0.49\linewidth]{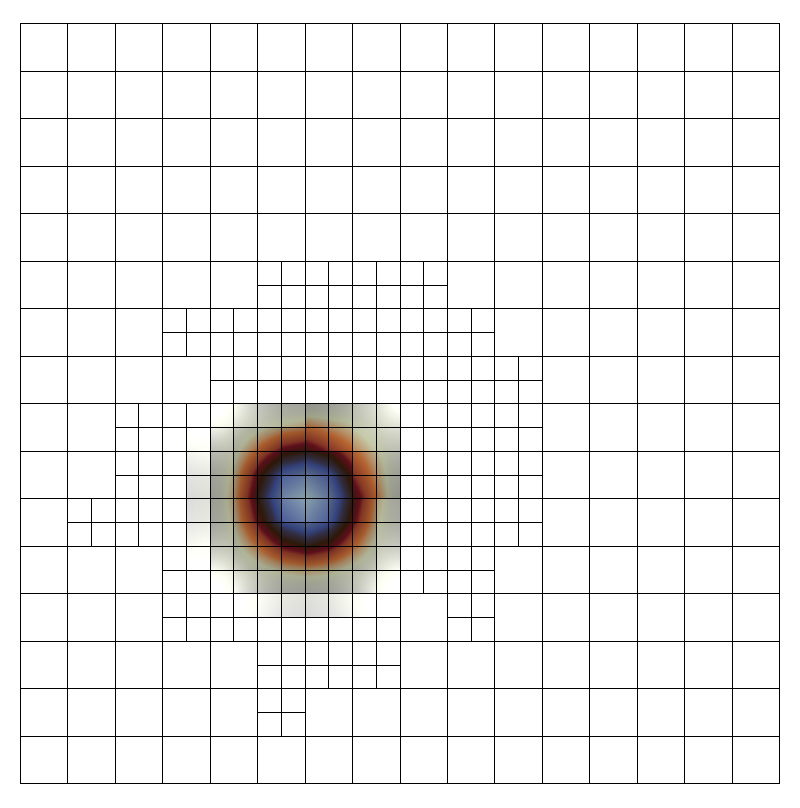}
    \includegraphics[width=0.49\linewidth]{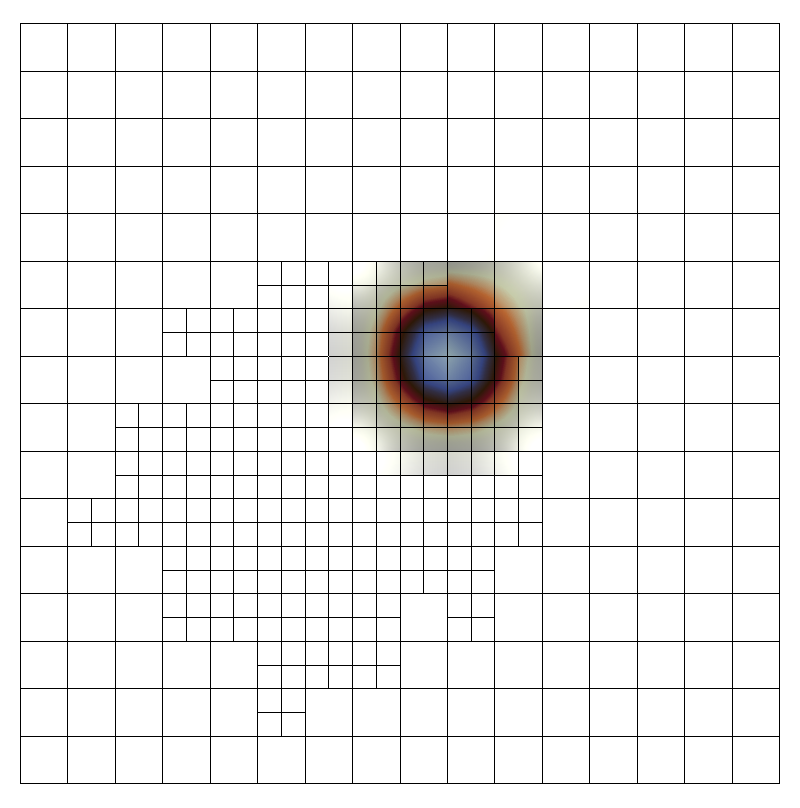}
    \captionsetup{labelformat=empty}
    \caption{$t=7$}
\end{subfigure}
\hfill
\begin{subfigure}[t]{0.33\linewidth}
    \centering
    \includegraphics[width=0.49\linewidth]{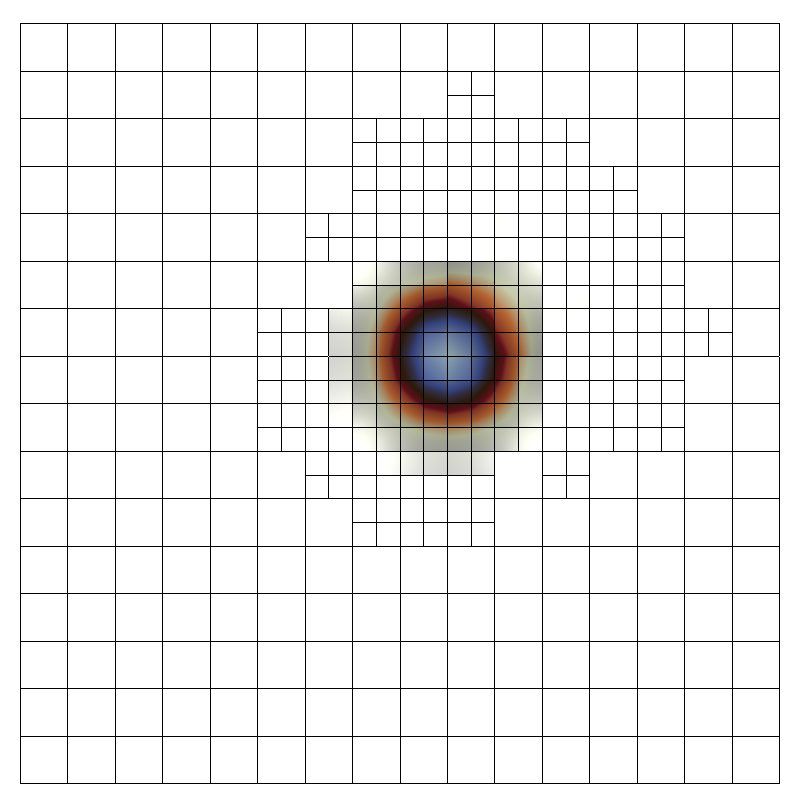}
    \includegraphics[width=0.49\linewidth]{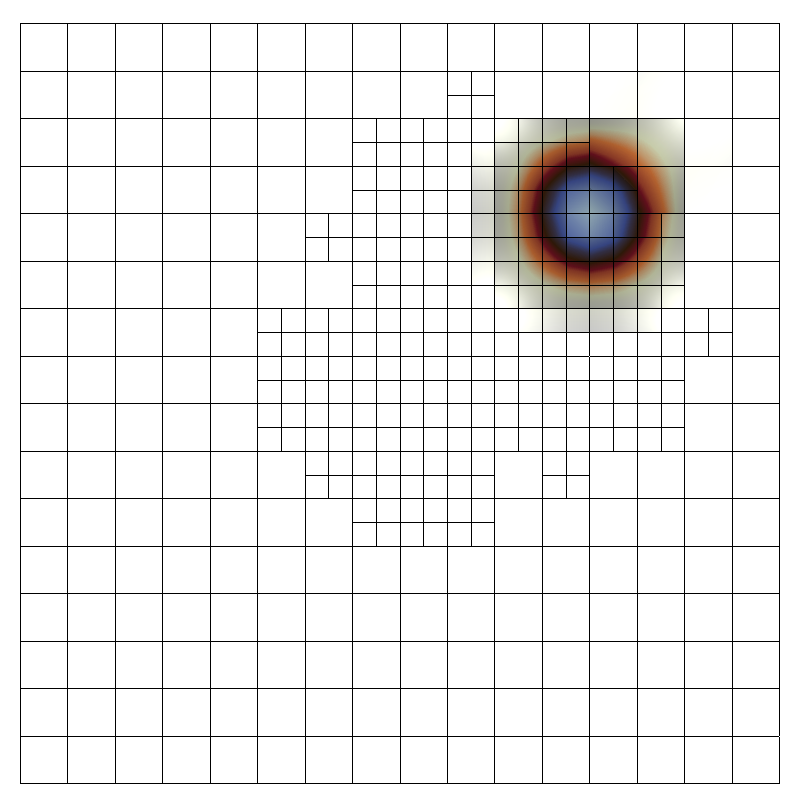}
    \captionsetup{labelformat=empty}
    \caption{$t=8$}
\end{subfigure}
\hfill
\begin{subfigure}[t]{0.33\linewidth}
    \centering
    \includegraphics[width=0.49\linewidth]{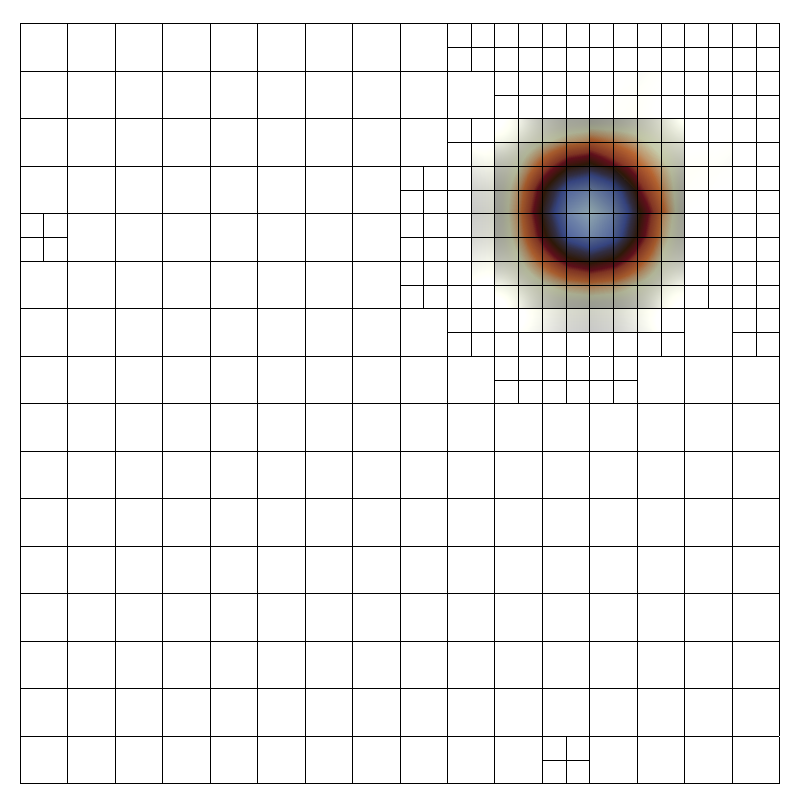}
    \includegraphics[width=0.49\linewidth]{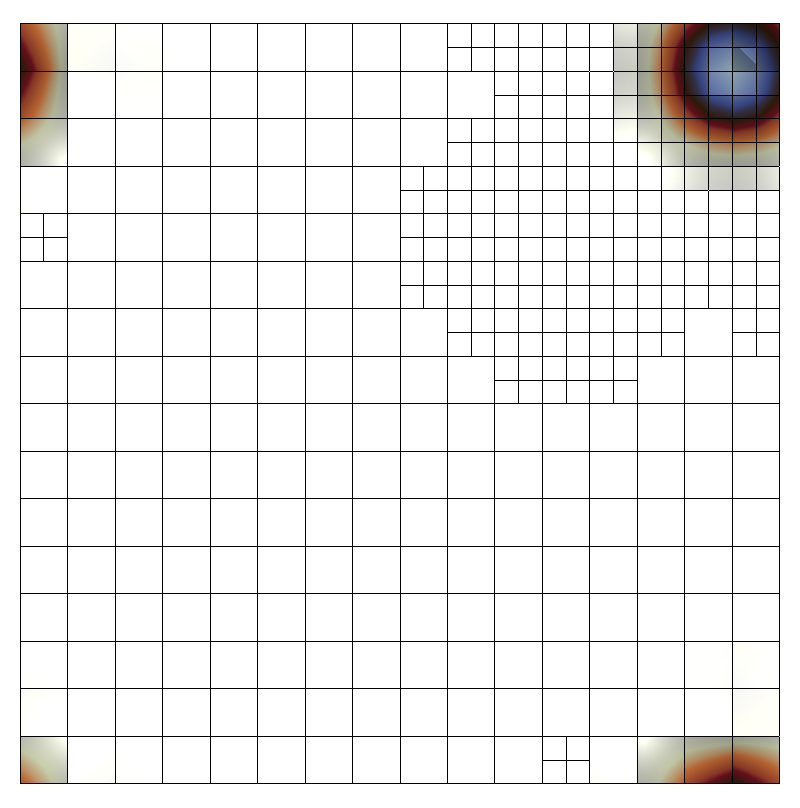}
    \captionsetup{labelformat=empty}
    \caption{$t=9$}
\end{subfigure}

\begin{subfigure}[t]{0.33\linewidth}
    \centering
    \includegraphics[width=0.49\linewidth]{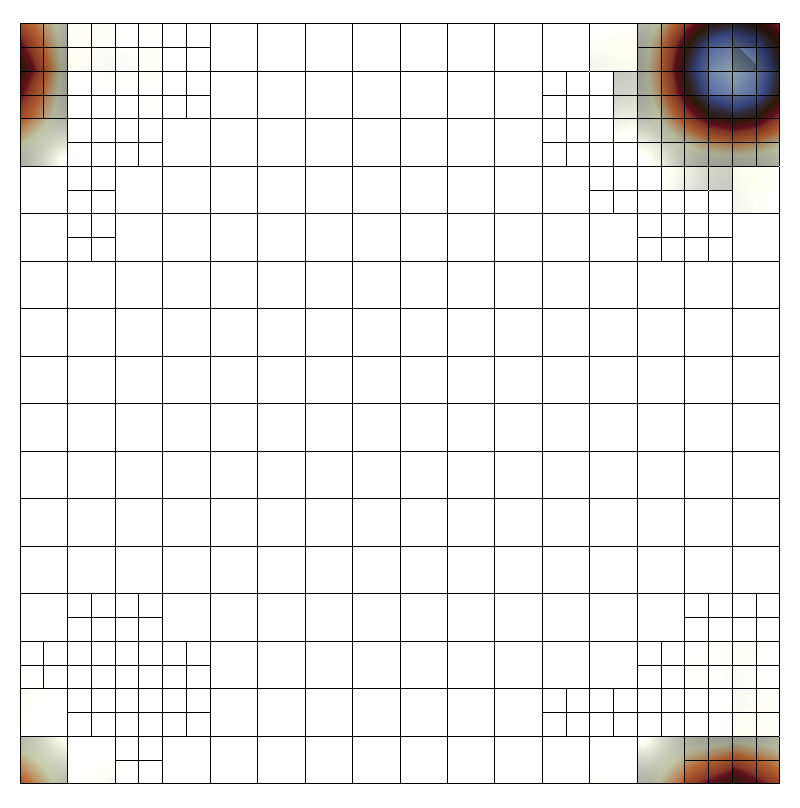}
    \includegraphics[width=0.49\linewidth]{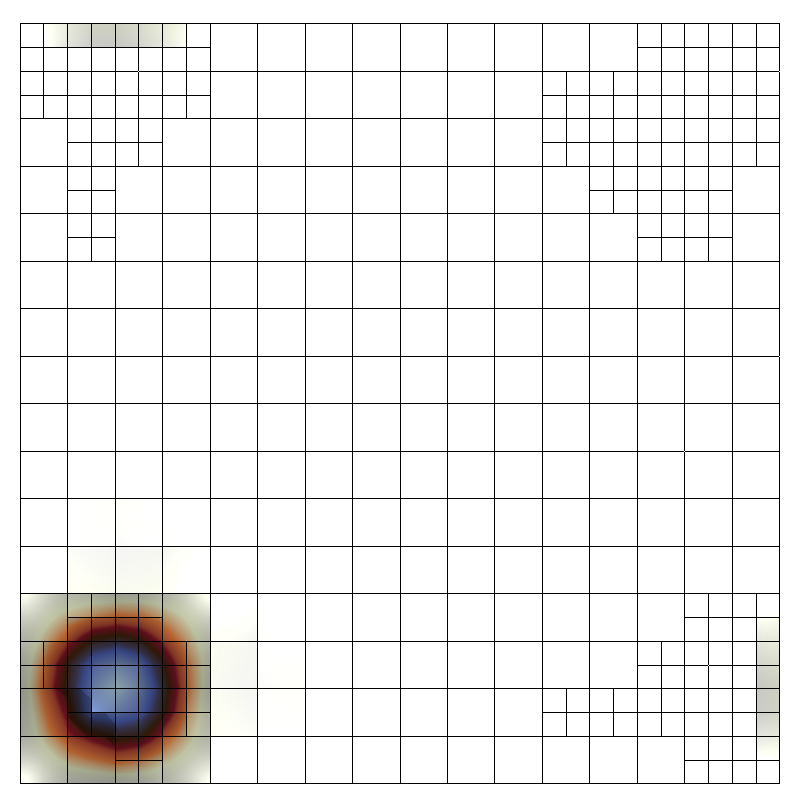}
    \captionsetup{labelformat=empty}
    \caption{$t=10$}
\end{subfigure}
\hfill
\begin{subfigure}[t]{0.33\linewidth}
    \centering
    \includegraphics[width=0.49\linewidth]{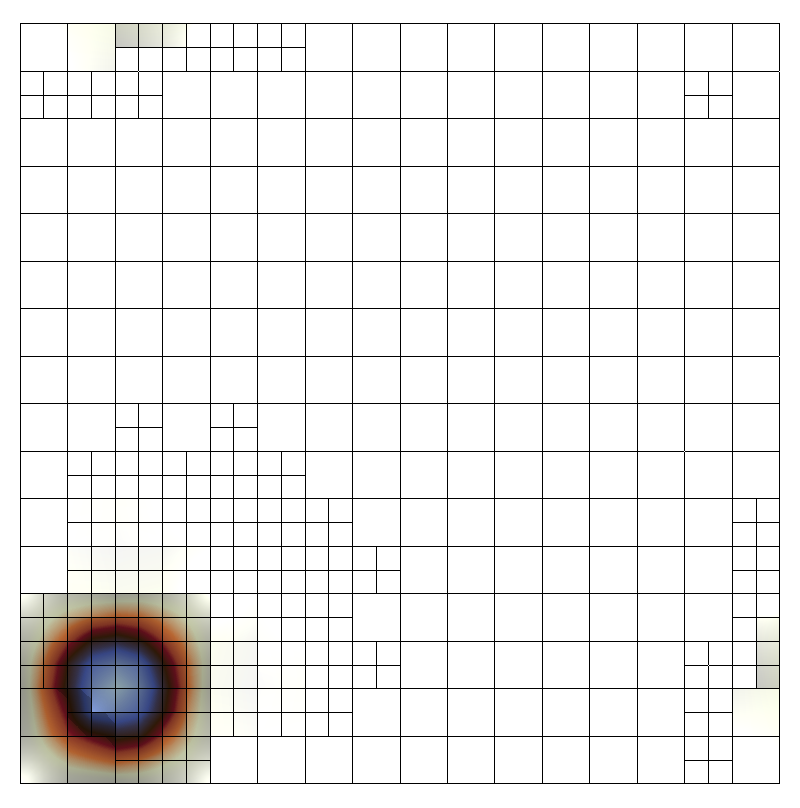}
    \includegraphics[width=0.49\linewidth]{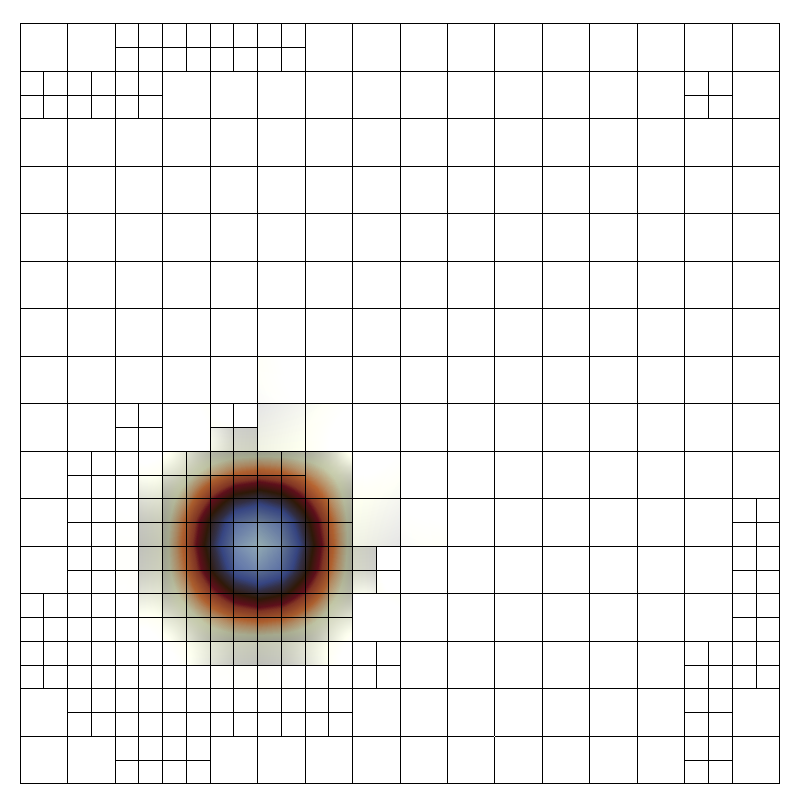}
    \captionsetup{labelformat=empty}
    \caption{$t=11$}
\end{subfigure}
\hfill
\begin{subfigure}[t]{0.33\linewidth}
    \centering
    \includegraphics[width=0.49\linewidth]{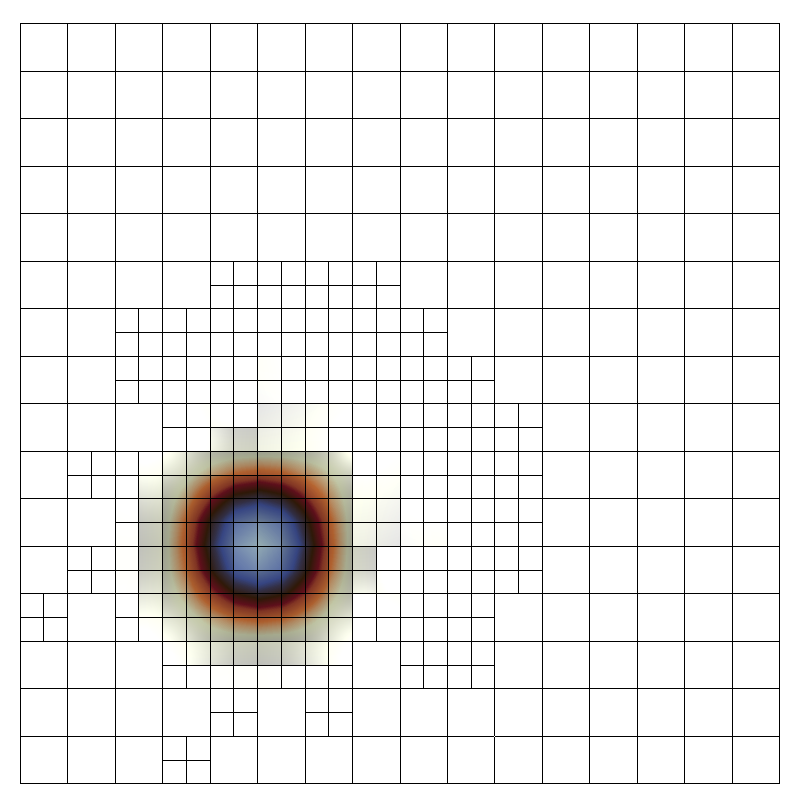}
    \includegraphics[width=0.49\linewidth]{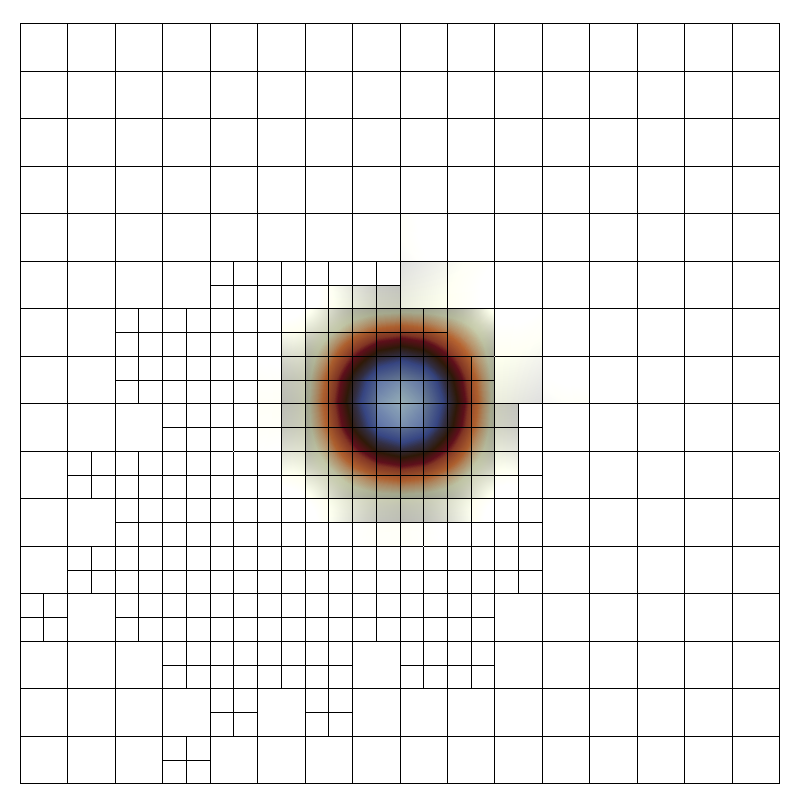}
    \captionsetup{labelformat=empty}
    \caption{$t=12$}
\end{subfigure}

\begin{subfigure}[t]{0.33\linewidth}
    \centering
    \includegraphics[width=0.49\linewidth]{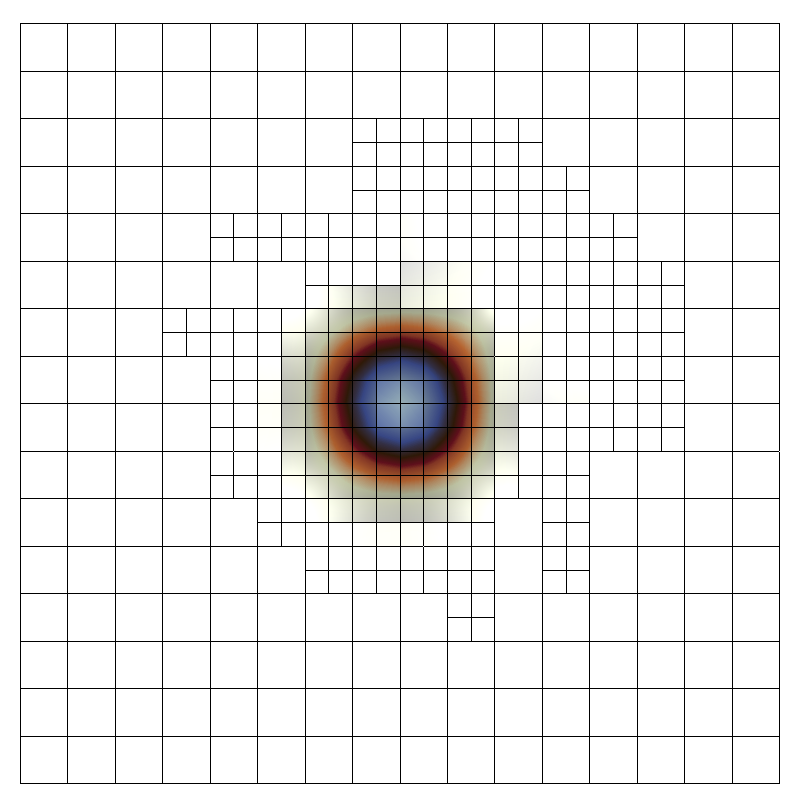}
    \includegraphics[width=0.49\linewidth]{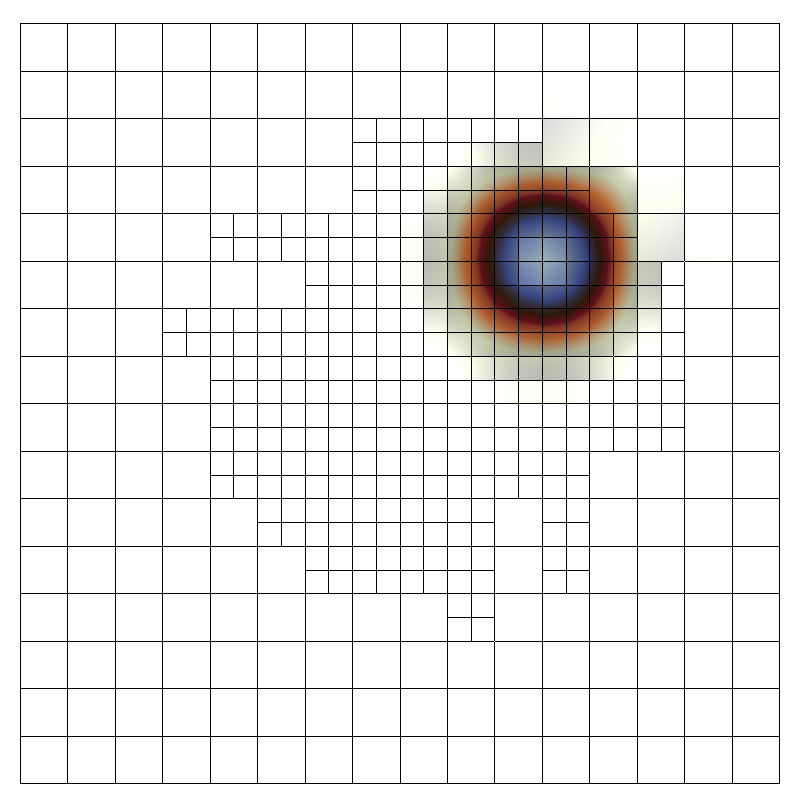}
    \captionsetup{labelformat=empty}
    \caption{$t=13$}
\end{subfigure}
\hfill
\begin{subfigure}[t]{0.33\linewidth}
    \centering
    \includegraphics[width=0.49\linewidth]{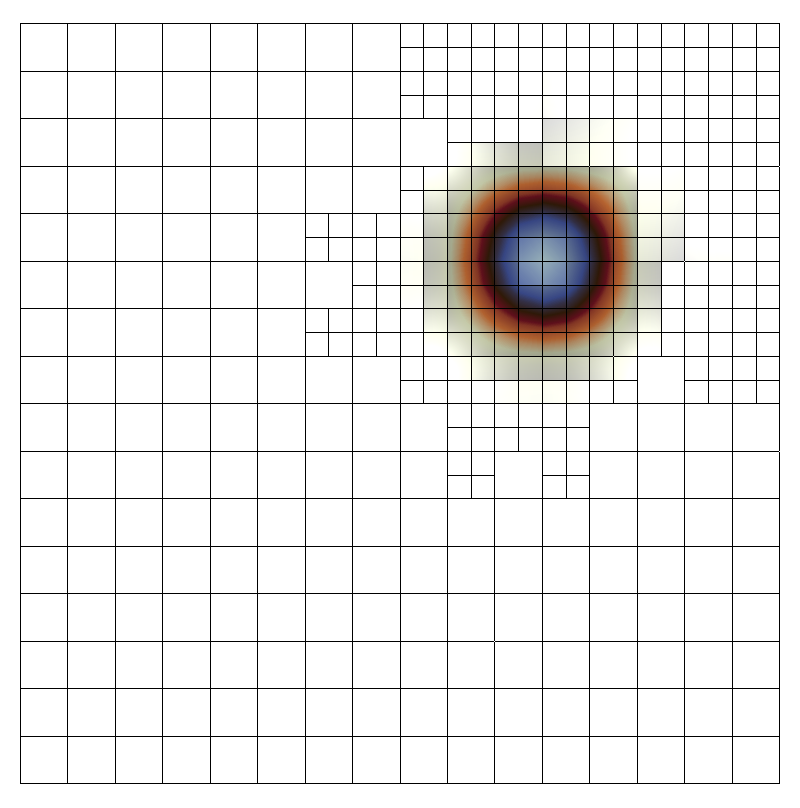}
    \includegraphics[width=0.49\linewidth]{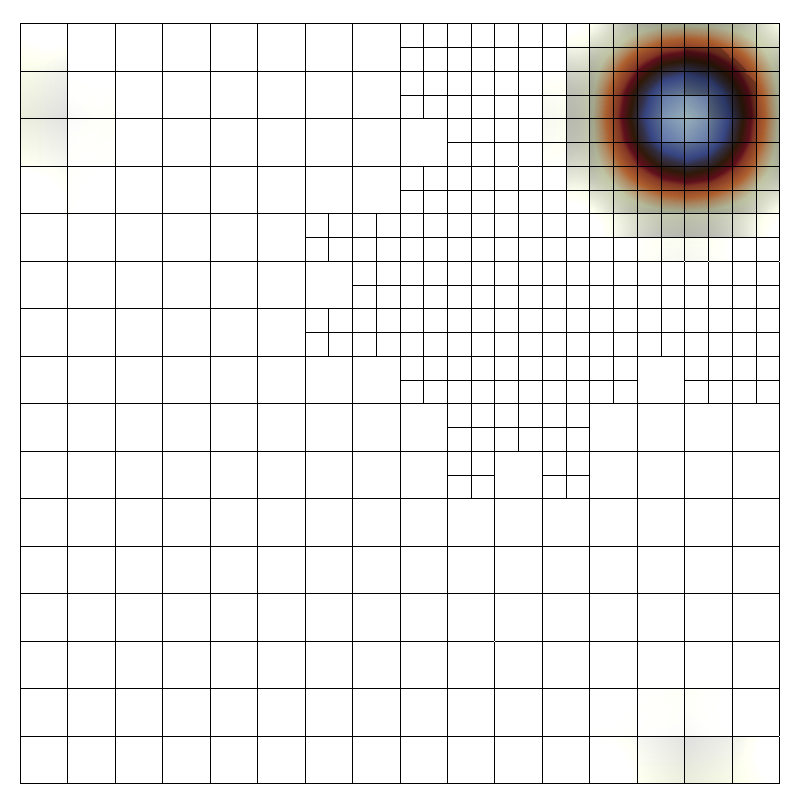}
    \captionsetup{labelformat=empty}
    \caption{$t=14$}
\end{subfigure}
\hfill
\begin{subfigure}[t]{0.33\linewidth}
    \centering
    \includegraphics[width=0.49\linewidth]{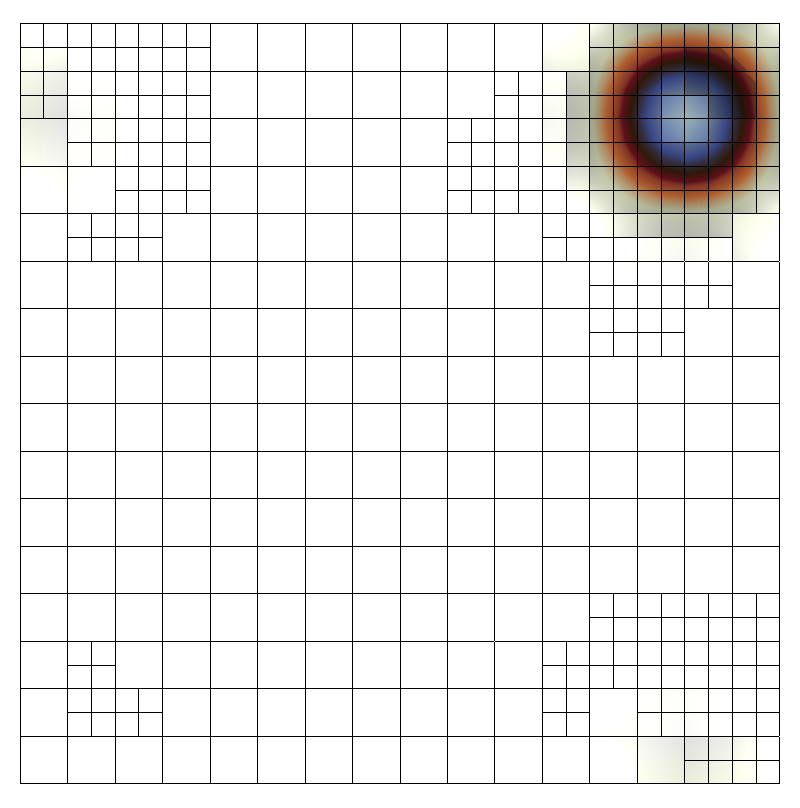}
    \includegraphics[width=0.49\linewidth]{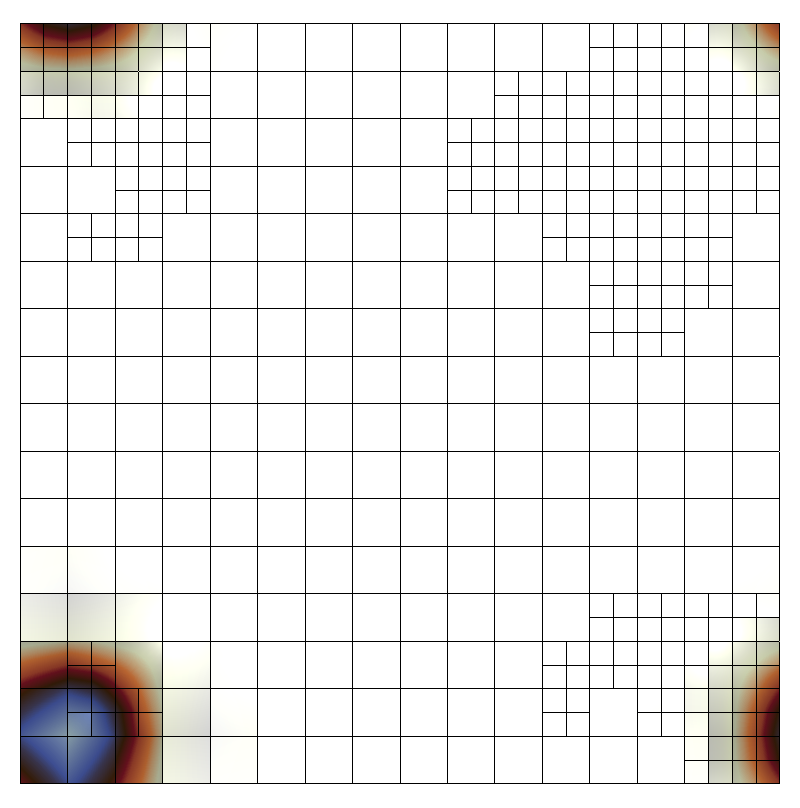}
    \captionsetup{labelformat=empty}
    \caption{$t=15$}
\end{subfigure}

\begin{subfigure}[t]{0.33\linewidth}
    \centering
    \includegraphics[width=0.49\linewidth]{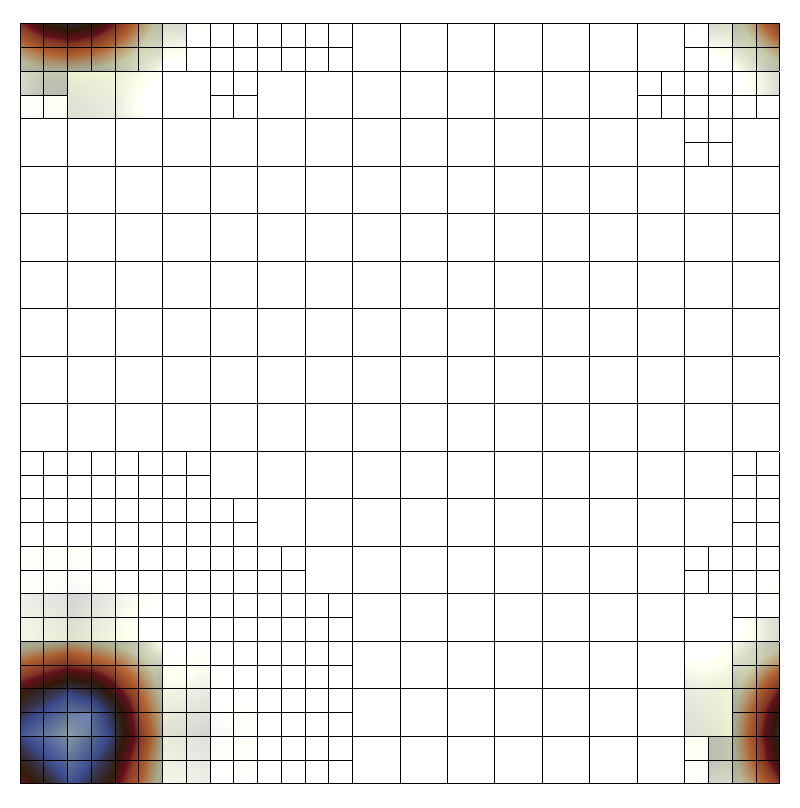}
    \includegraphics[width=0.49\linewidth]{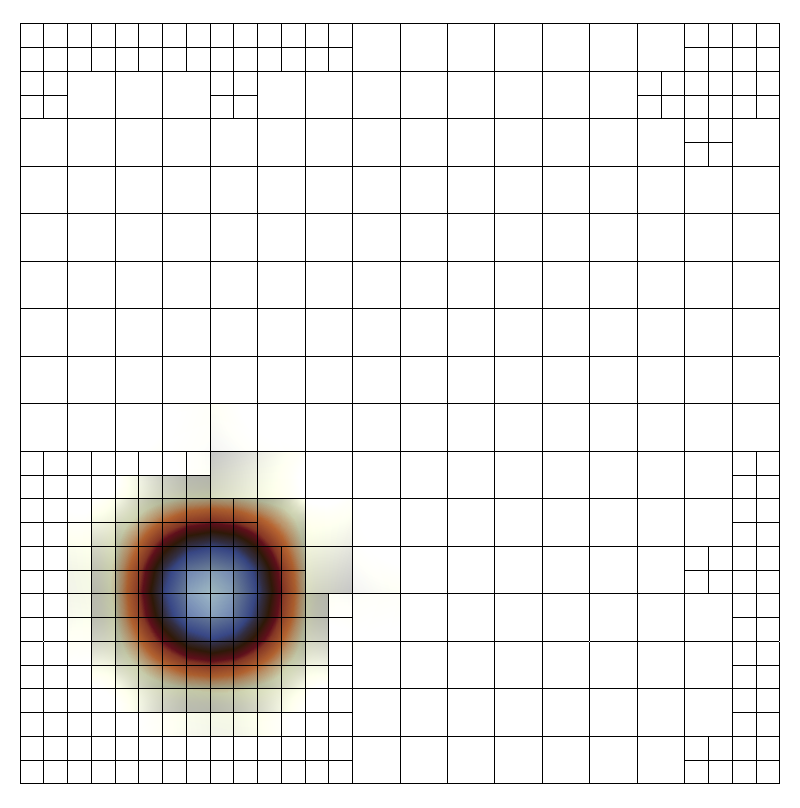}
    \captionsetup{labelformat=empty}
    \caption{$t=16$}
\end{subfigure}
\hfill
\begin{subfigure}[t]{0.33\linewidth}
    \centering
    \includegraphics[width=0.49\linewidth]{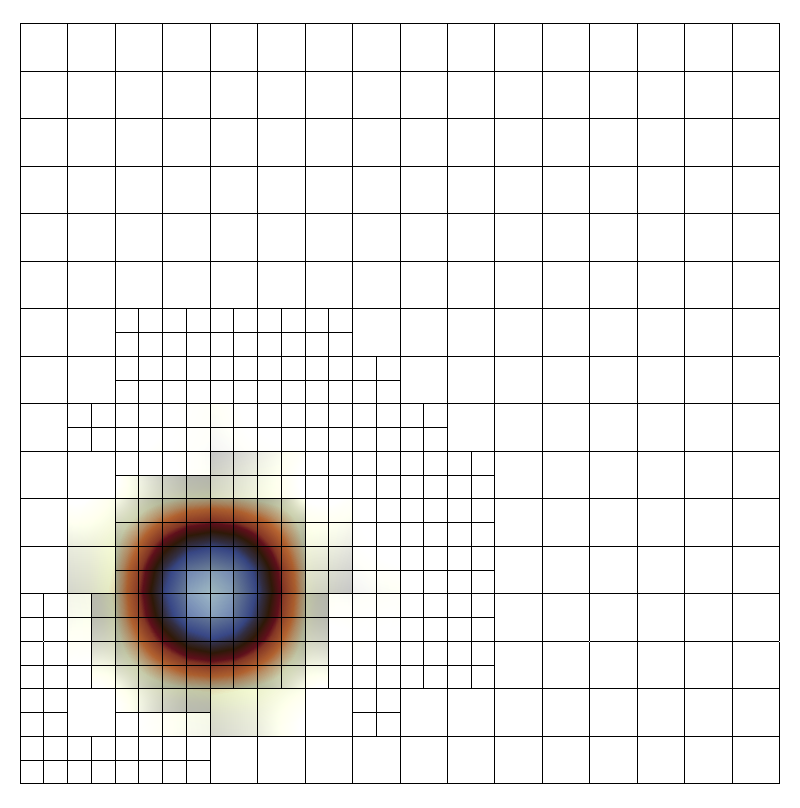}
    \includegraphics[width=0.49\linewidth]{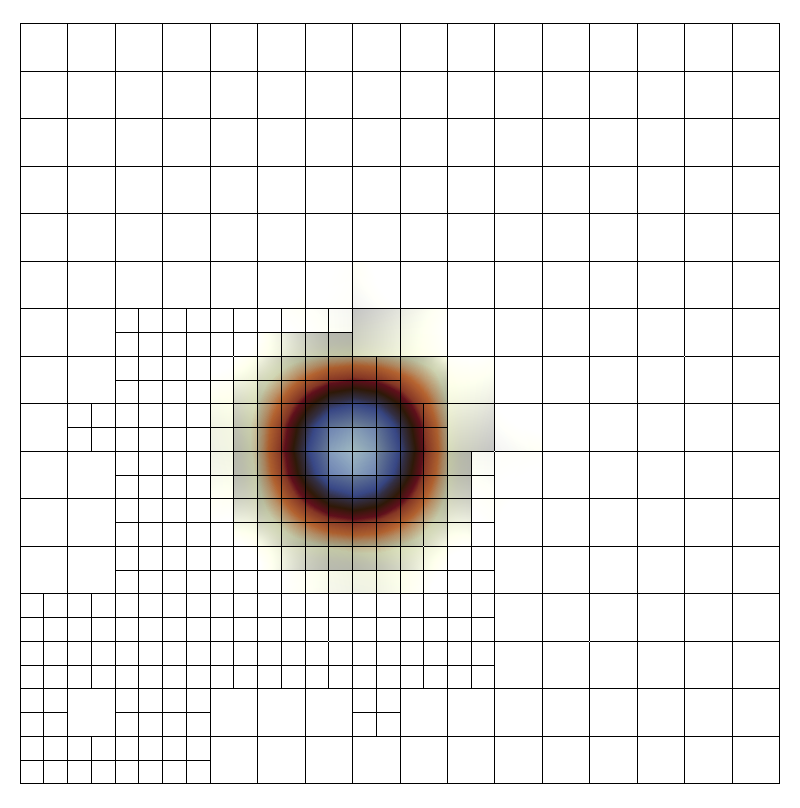}
    \captionsetup{labelformat=empty}
    \caption{$t=17$}
\end{subfigure}
\hfill
\begin{subfigure}[t]{0.33\linewidth}
    \centering
    \includegraphics[width=0.49\linewidth]{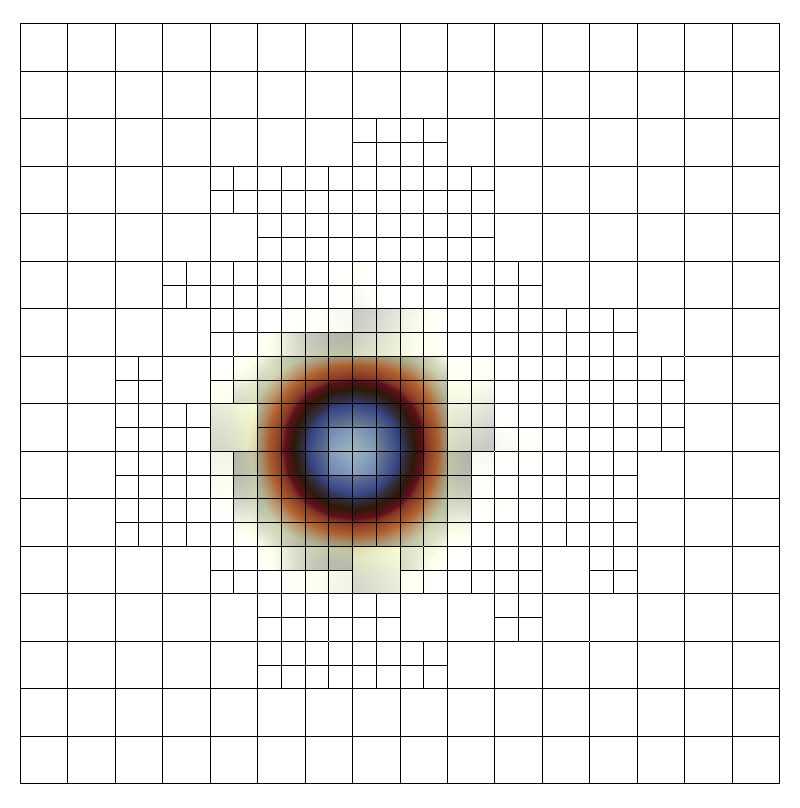}
    \includegraphics[width=0.49\linewidth]{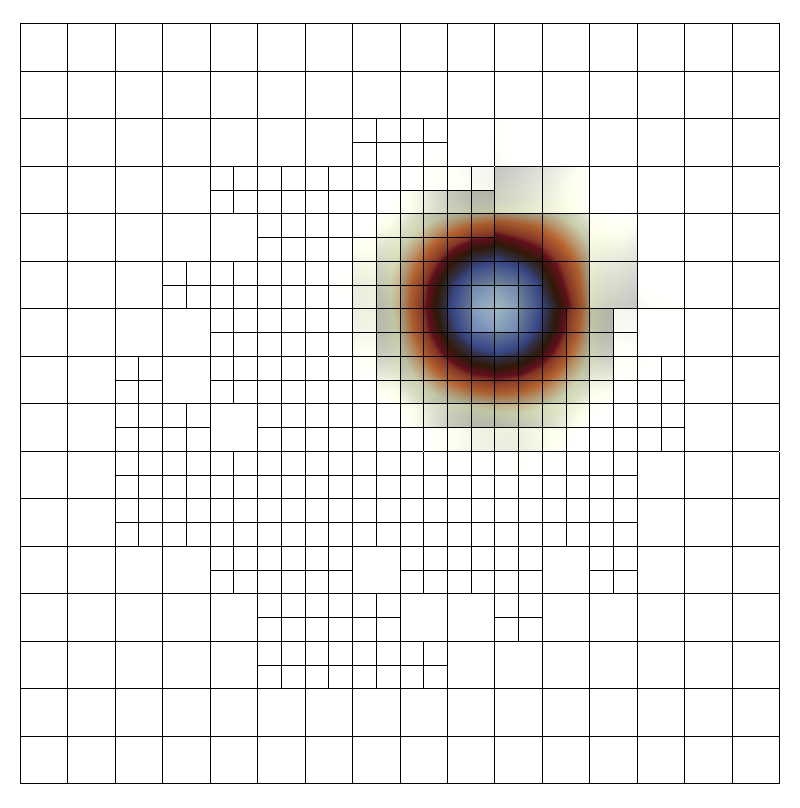}
    \captionsetup{labelformat=empty}
    \caption{$t=18$}
\end{subfigure}
\caption{VDGN generalizes to longer simulation durations, despite seeing only up to $t=3$ during training episodes.}
\label{fig:iso_periodic_velunif_nx16_ny16_depth1_tstep0p25_vdn_graphnet_nodoftime_3_ep210800_tfinal5}
\end{figure*}

\newpage
\begin{figure*}[ht]
\centering
\begin{subfigure}[t]{1.0\linewidth}
    \centering
    \includegraphics[width=0.49\linewidth]{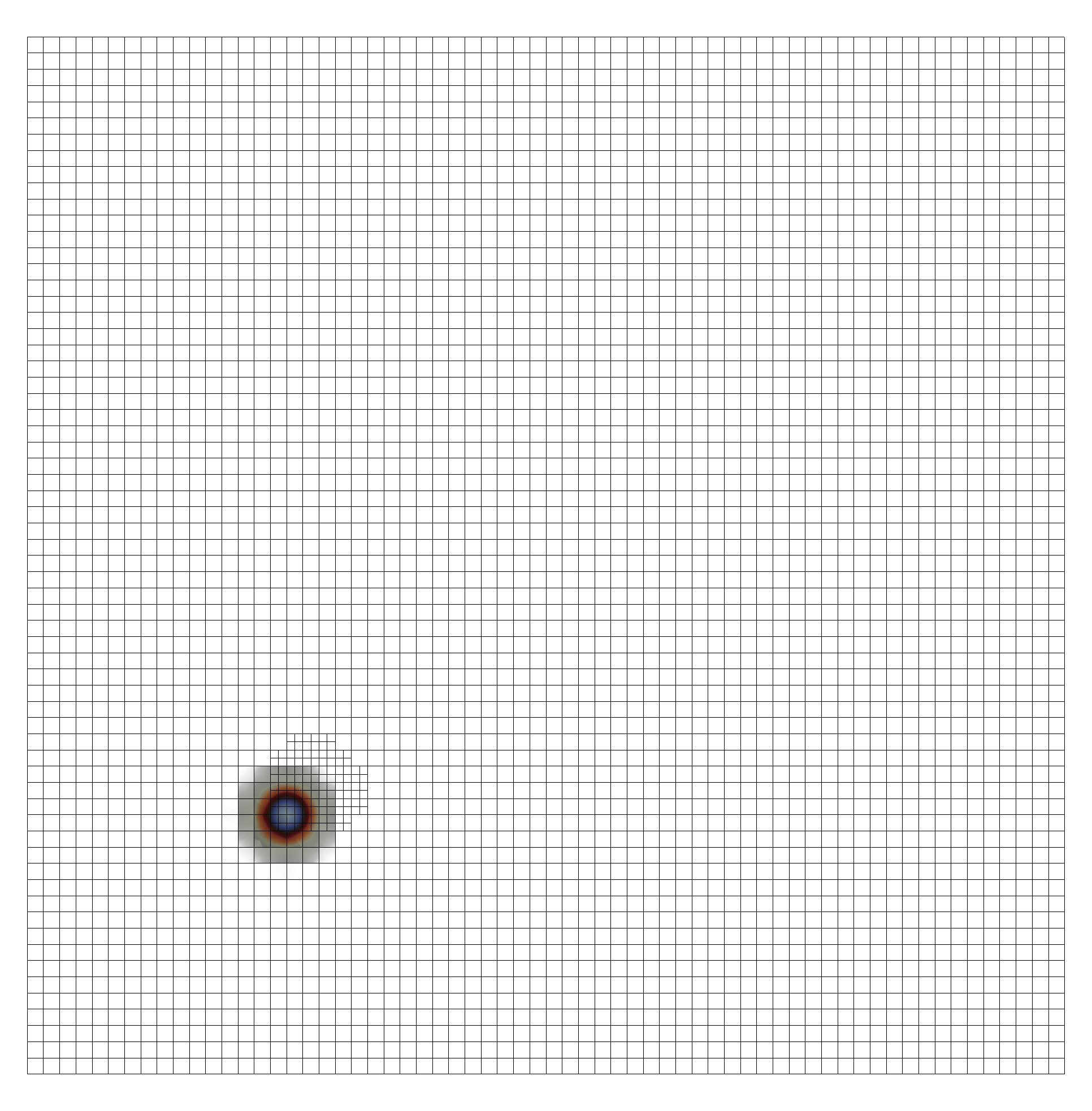}
    \includegraphics[width=0.49\linewidth]{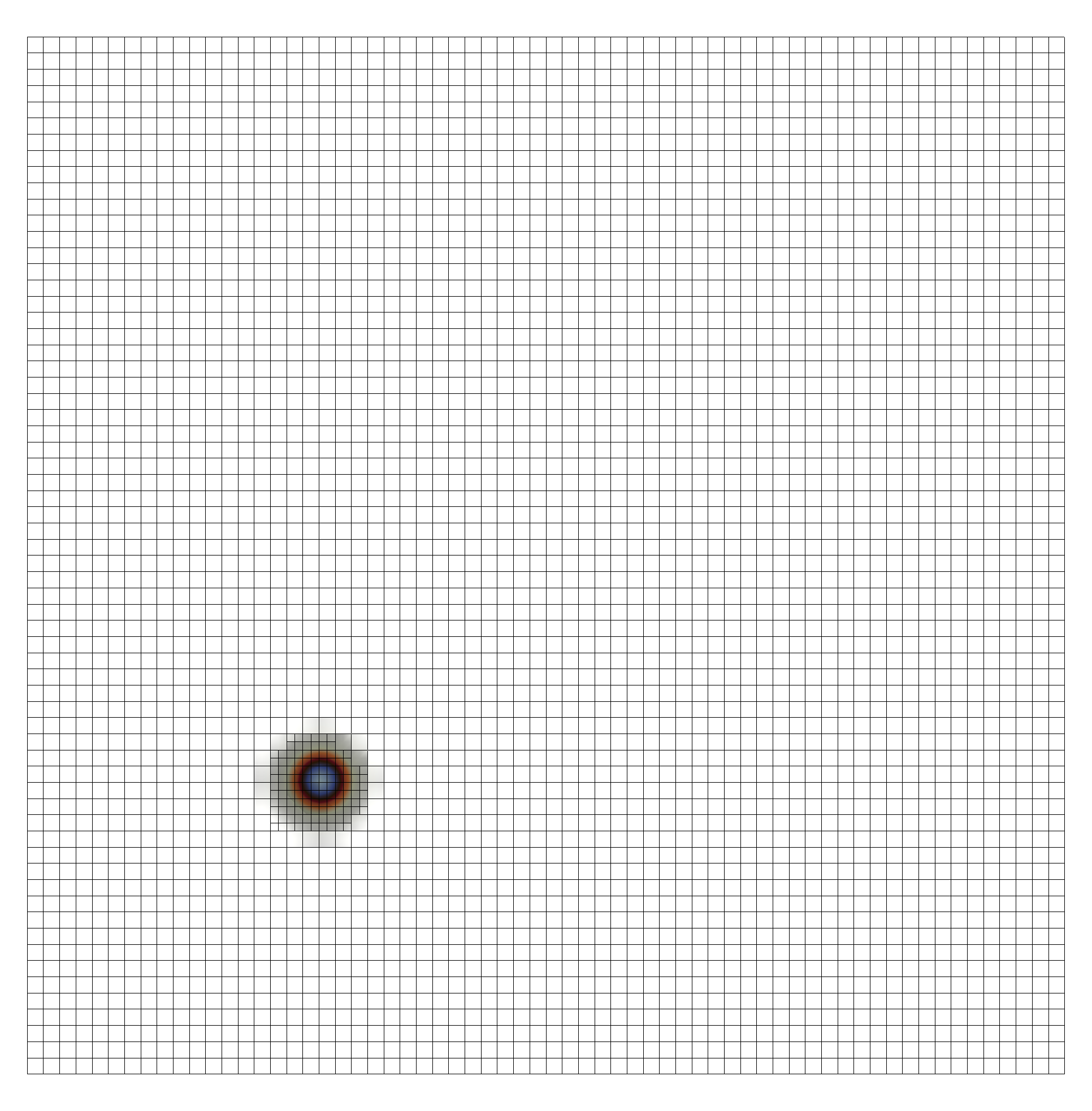}
    \captionsetup{labelformat=empty}
    \caption{$t=1$}
\end{subfigure}
\begin{subfigure}[t]{1.0\linewidth}
    \centering
    \includegraphics[width=0.49\linewidth]{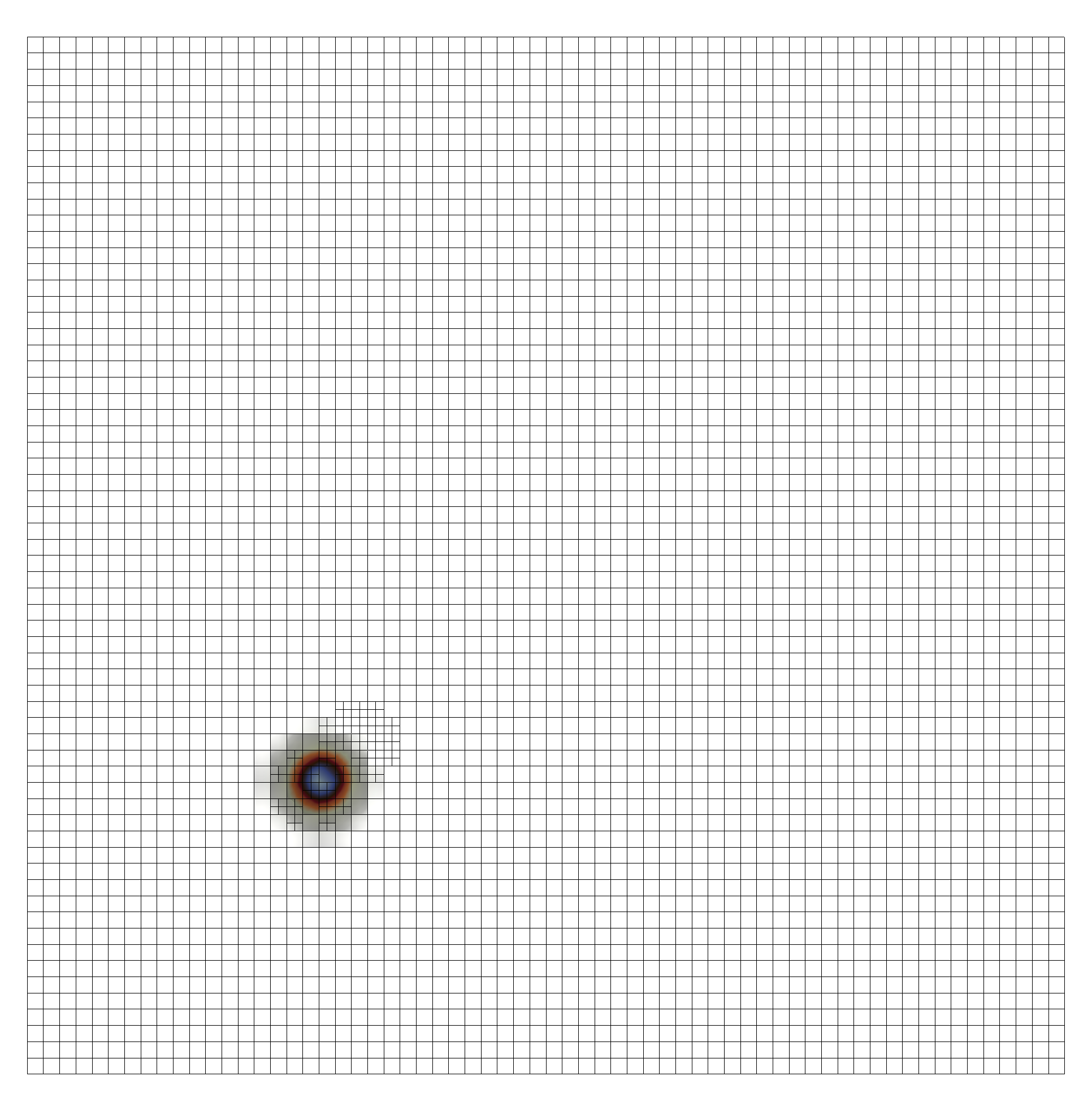}
    \includegraphics[width=0.49\linewidth]{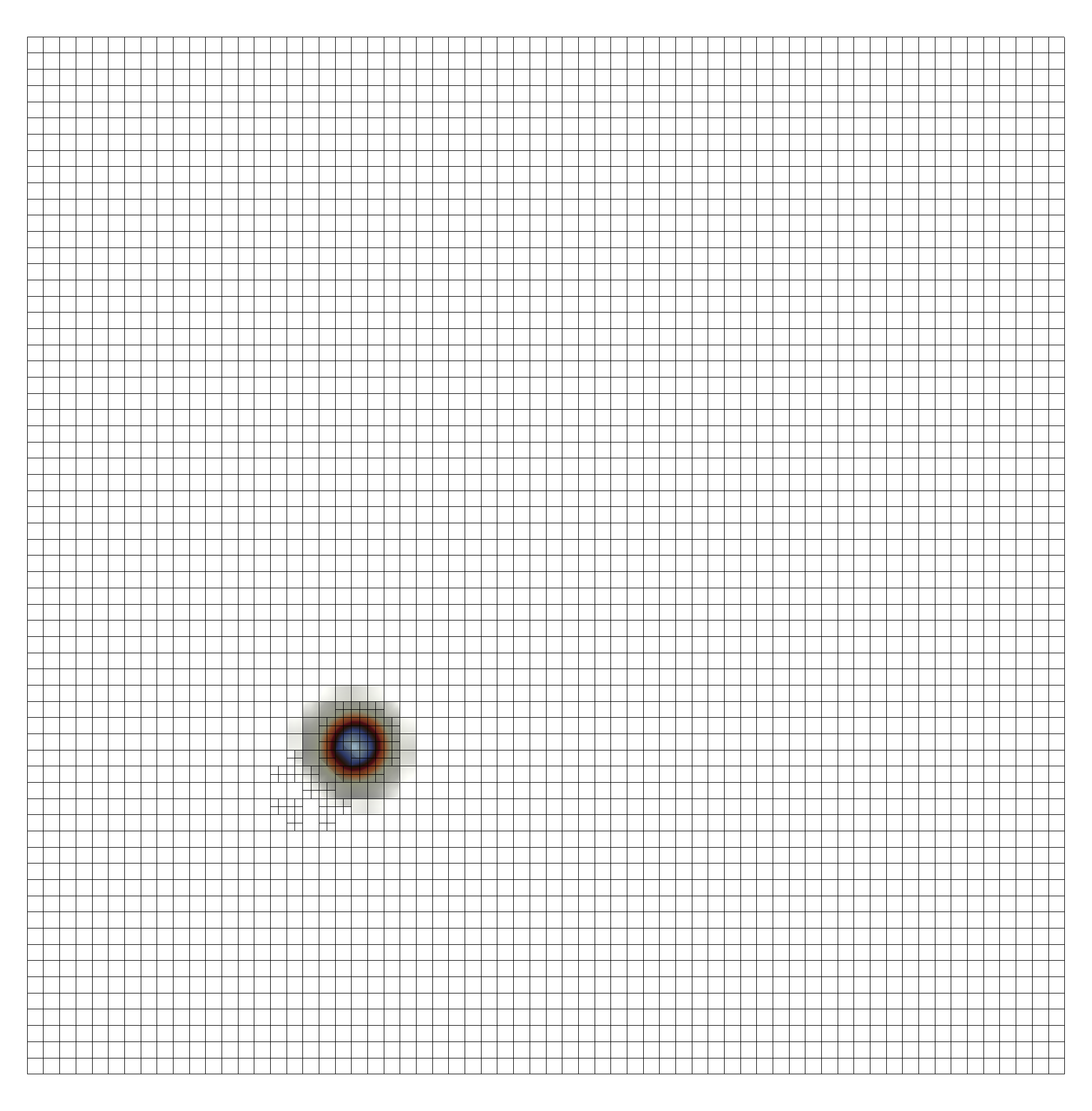}
    \captionsetup{labelformat=empty}
    \caption{$t=2$}
\end{subfigure}
\caption{VDGN generalizes to larger meshes not seen in training. See \Cref{fig:iso_periodic_velunif_nx16_ny16_depth1_tstep0p25_vdn_graphnet_nodoftime_3_ep210800_nx64_ny64_continued} for subsequent time steps.
}
\label{fig:iso_periodic_velunif_nx16_ny16_depth1_tstep0p25_vdn_graphnet_nodoftime_3_ep210800_nx64_ny64}
\end{figure*}

\begin{figure*}[ht]
\centering
\begin{subfigure}[t]{0.49\linewidth}
    \centering
    \includegraphics[width=0.49\linewidth]{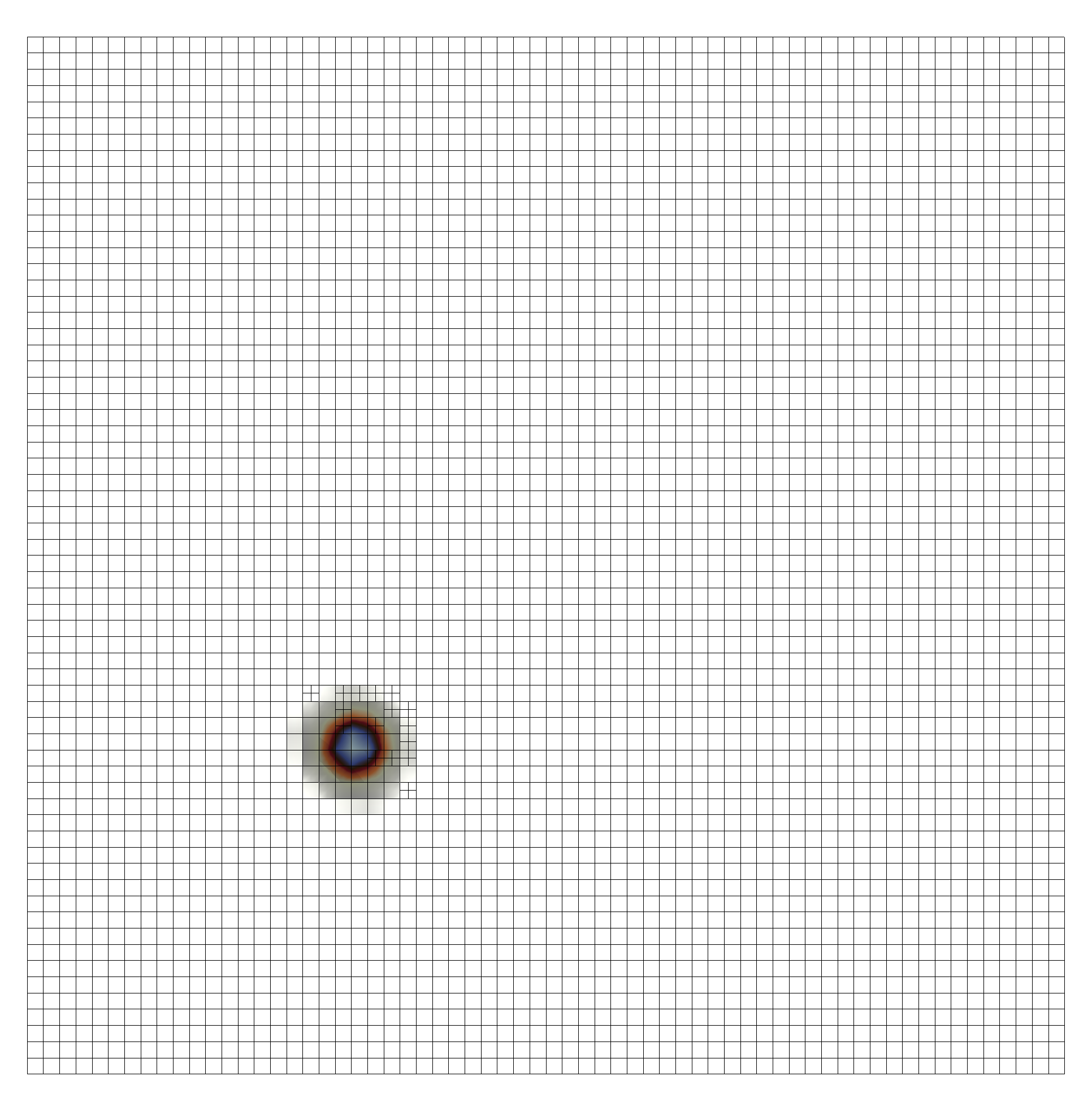}
    \includegraphics[width=0.49\linewidth]{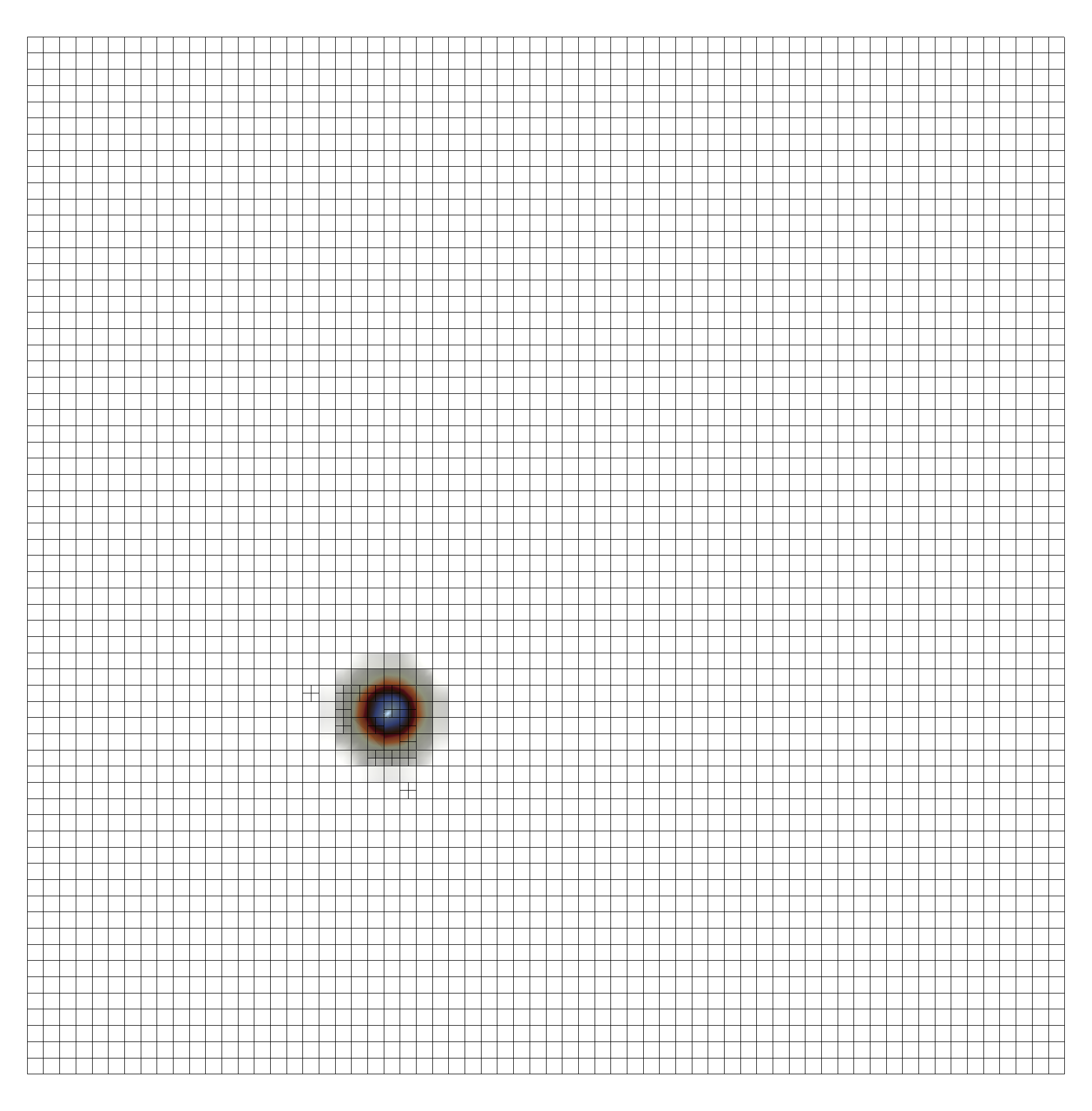}
    \captionsetup{labelformat=empty}
    \caption{$t=3$}
\end{subfigure}
\hfill
\begin{subfigure}[t]{0.49\linewidth}
    \centering
    \includegraphics[width=0.49\linewidth]{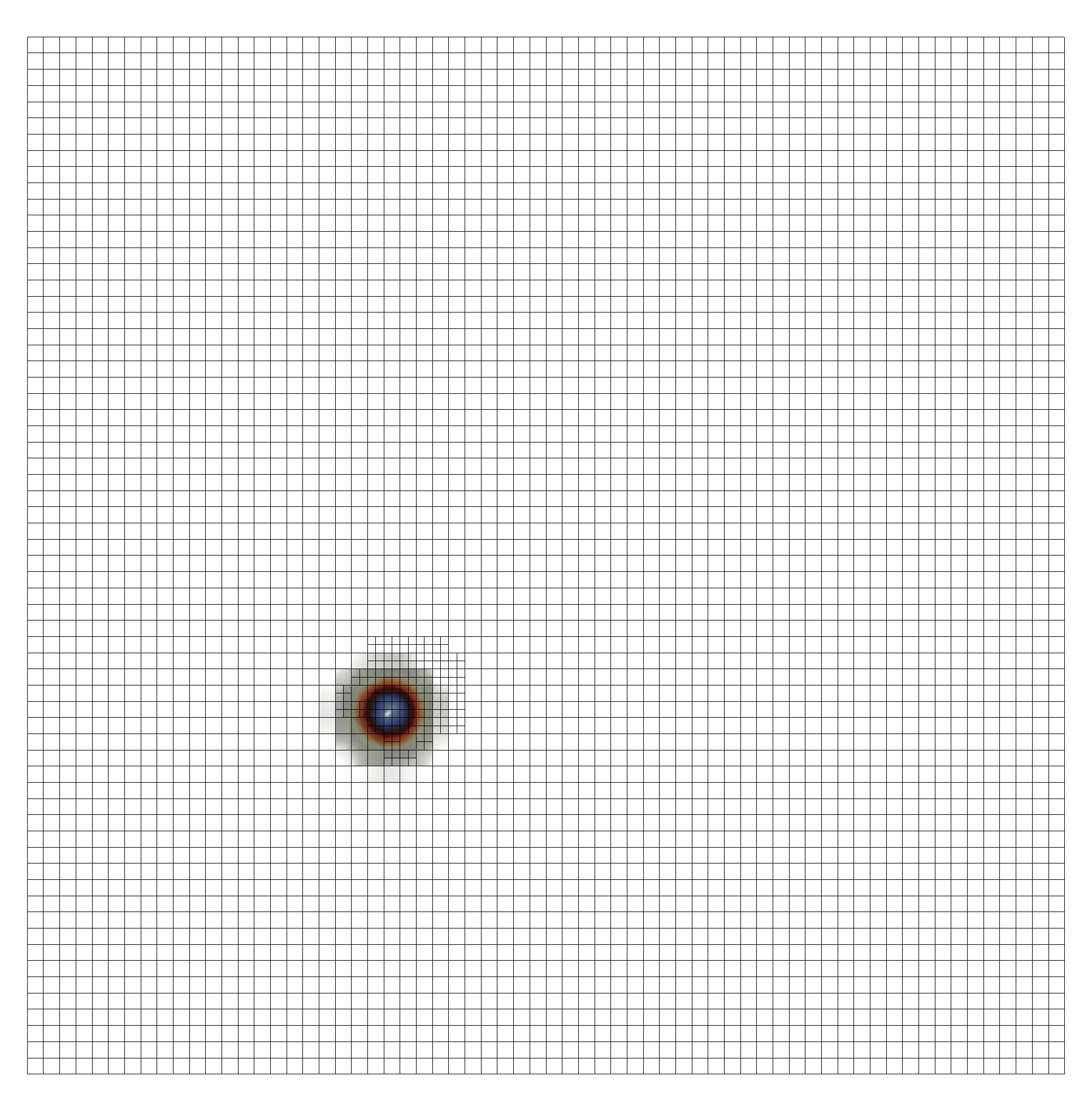}
    \includegraphics[width=0.49\linewidth]{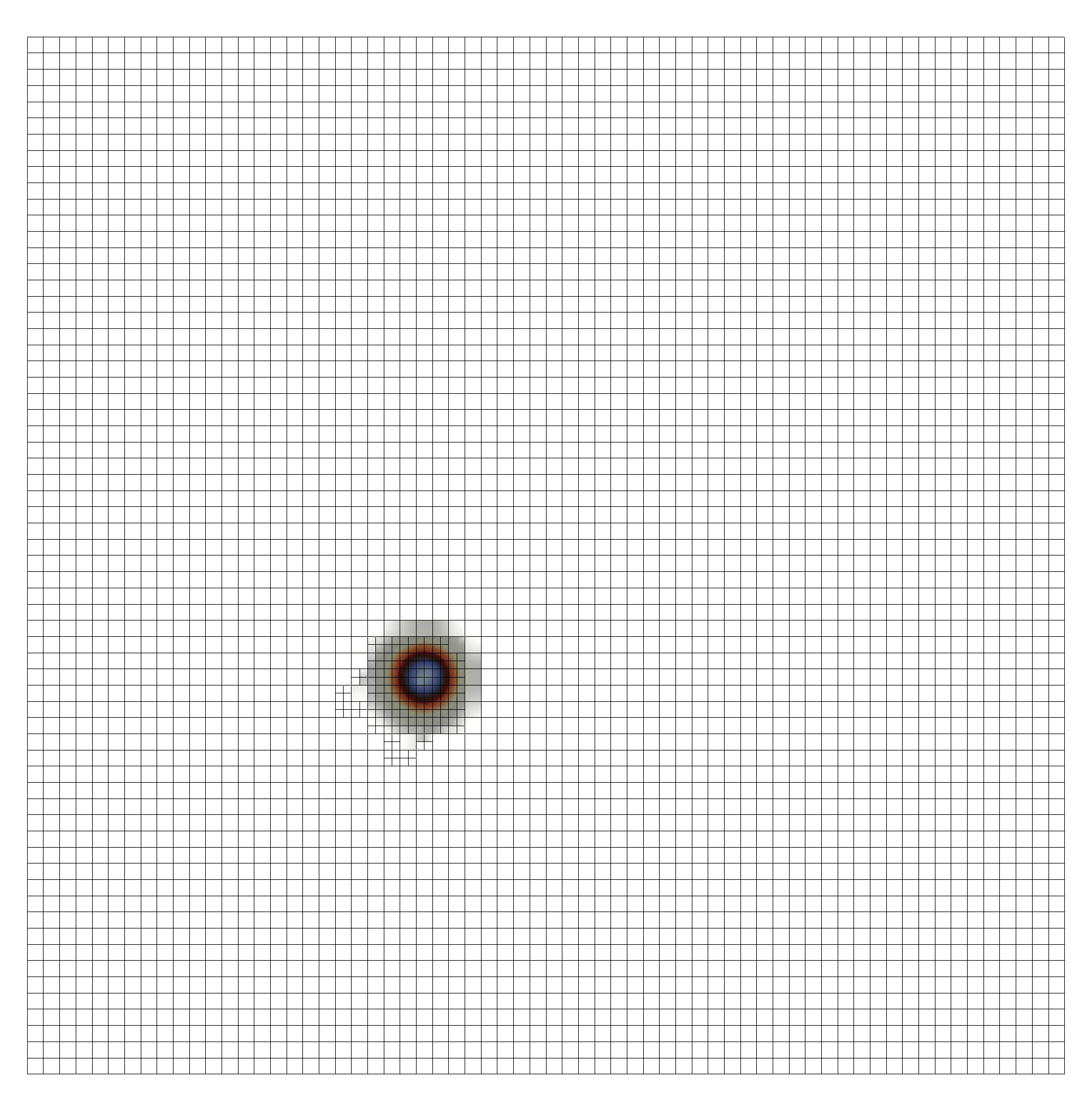}
    \captionsetup{labelformat=empty}
    \caption{$t=4$}
\end{subfigure}

\begin{subfigure}[t]{0.49\linewidth}
    \centering
    \includegraphics[width=0.49\linewidth]{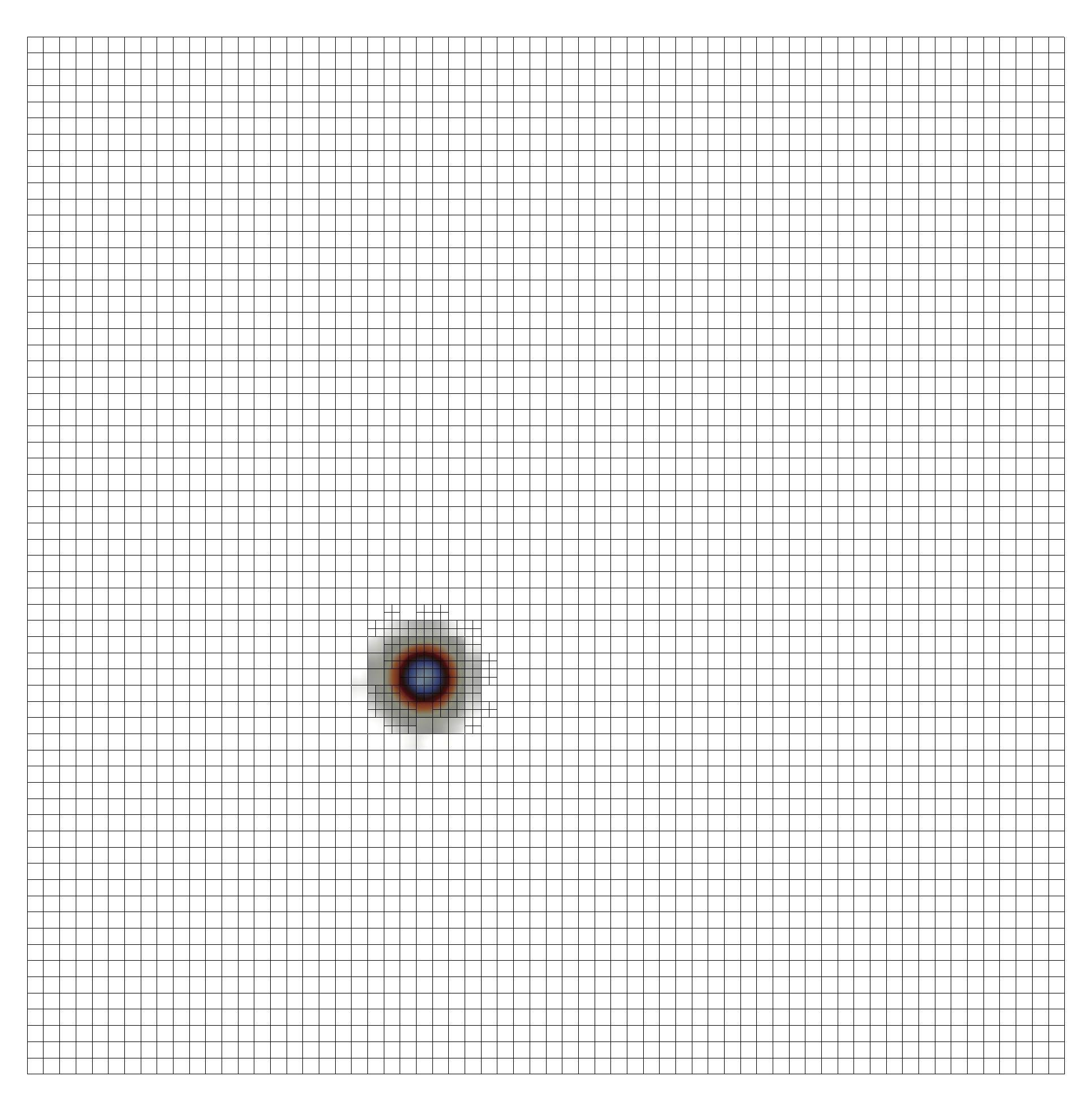}
    \includegraphics[width=0.49\linewidth]{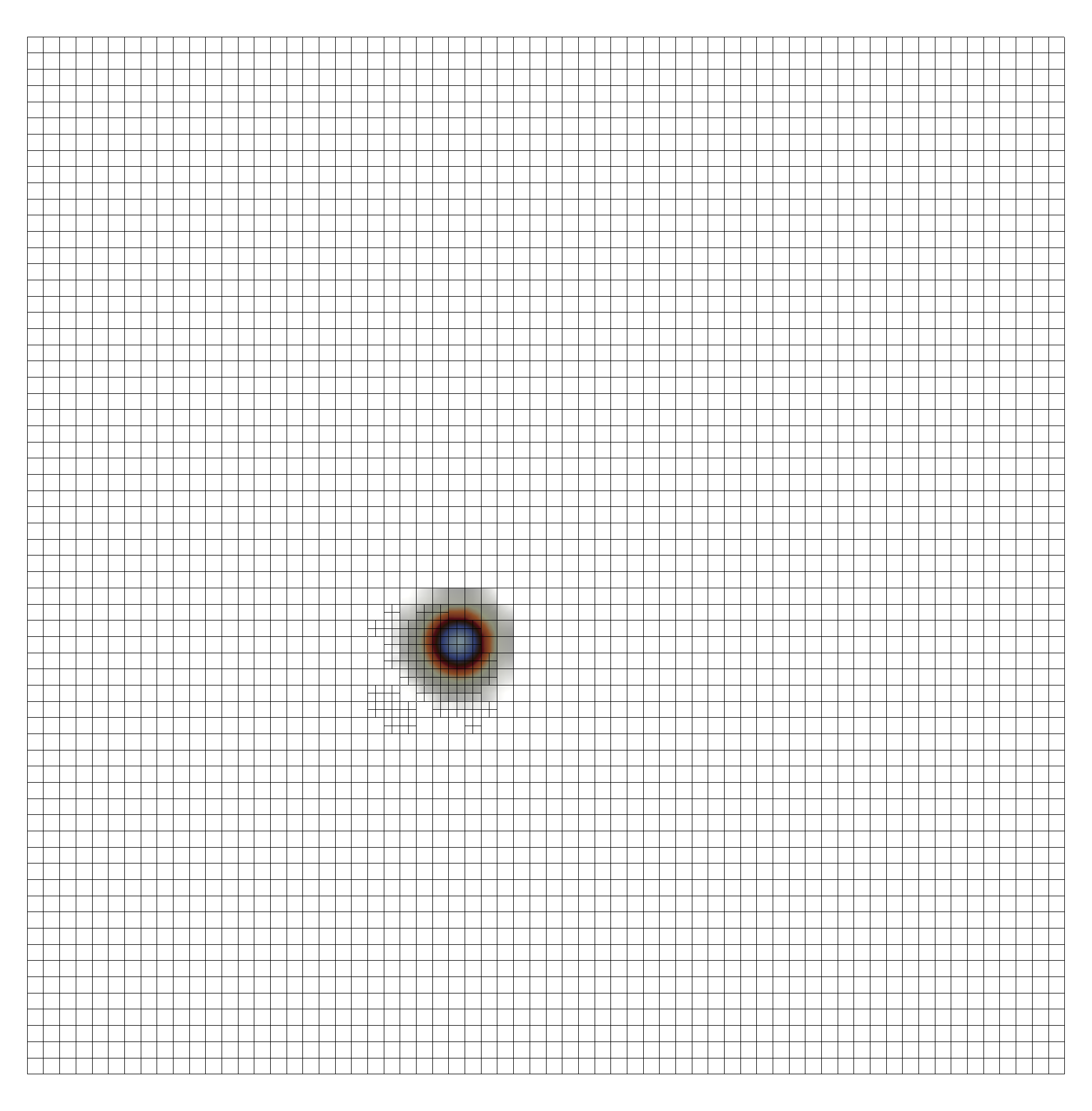}
    \captionsetup{labelformat=empty}
    \caption{$t=5$}
\end{subfigure}
\hfill
\begin{subfigure}[t]{0.49\linewidth}
    \centering
    \includegraphics[width=0.49\linewidth]{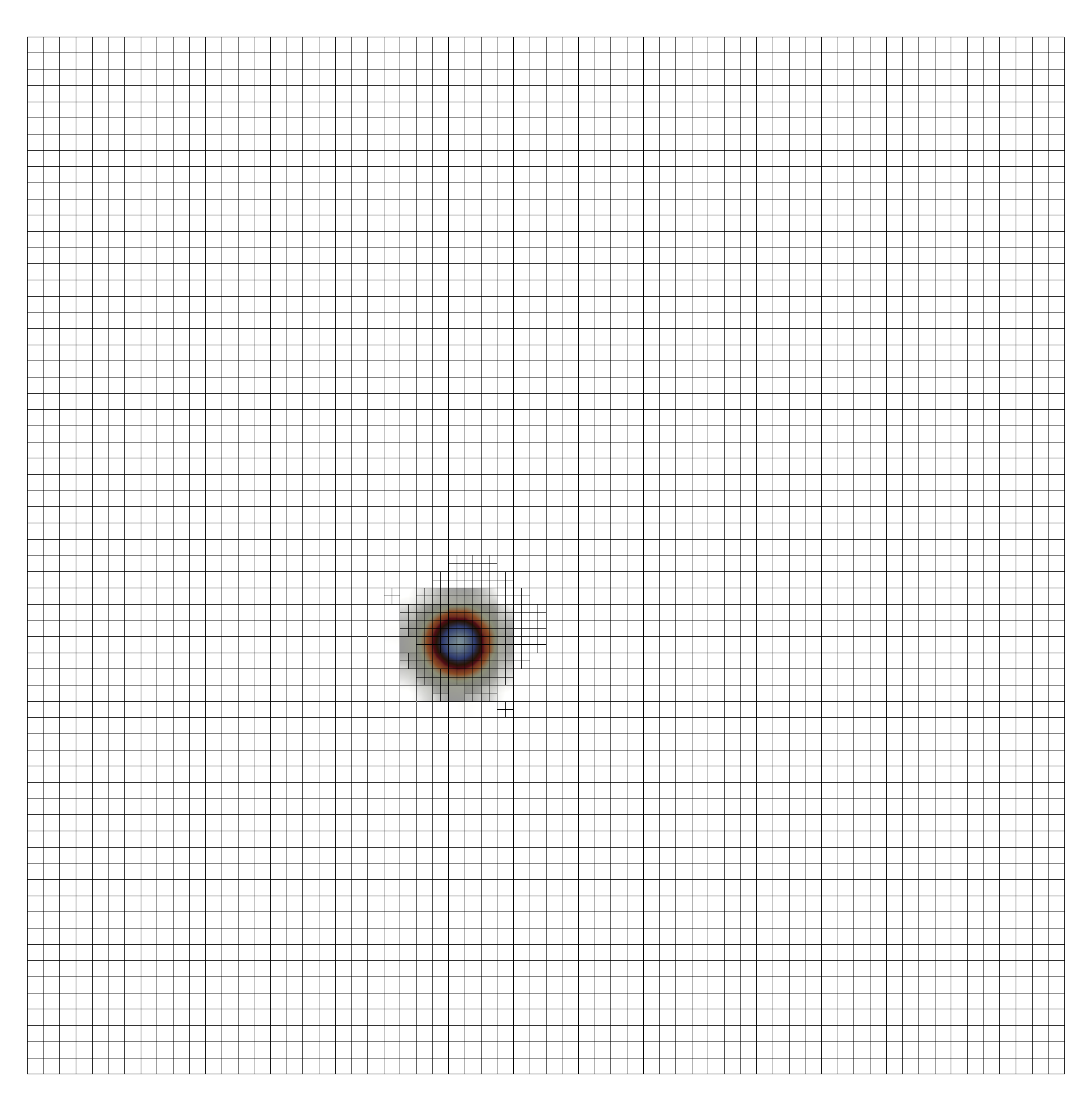}
    \includegraphics[width=0.49\linewidth]{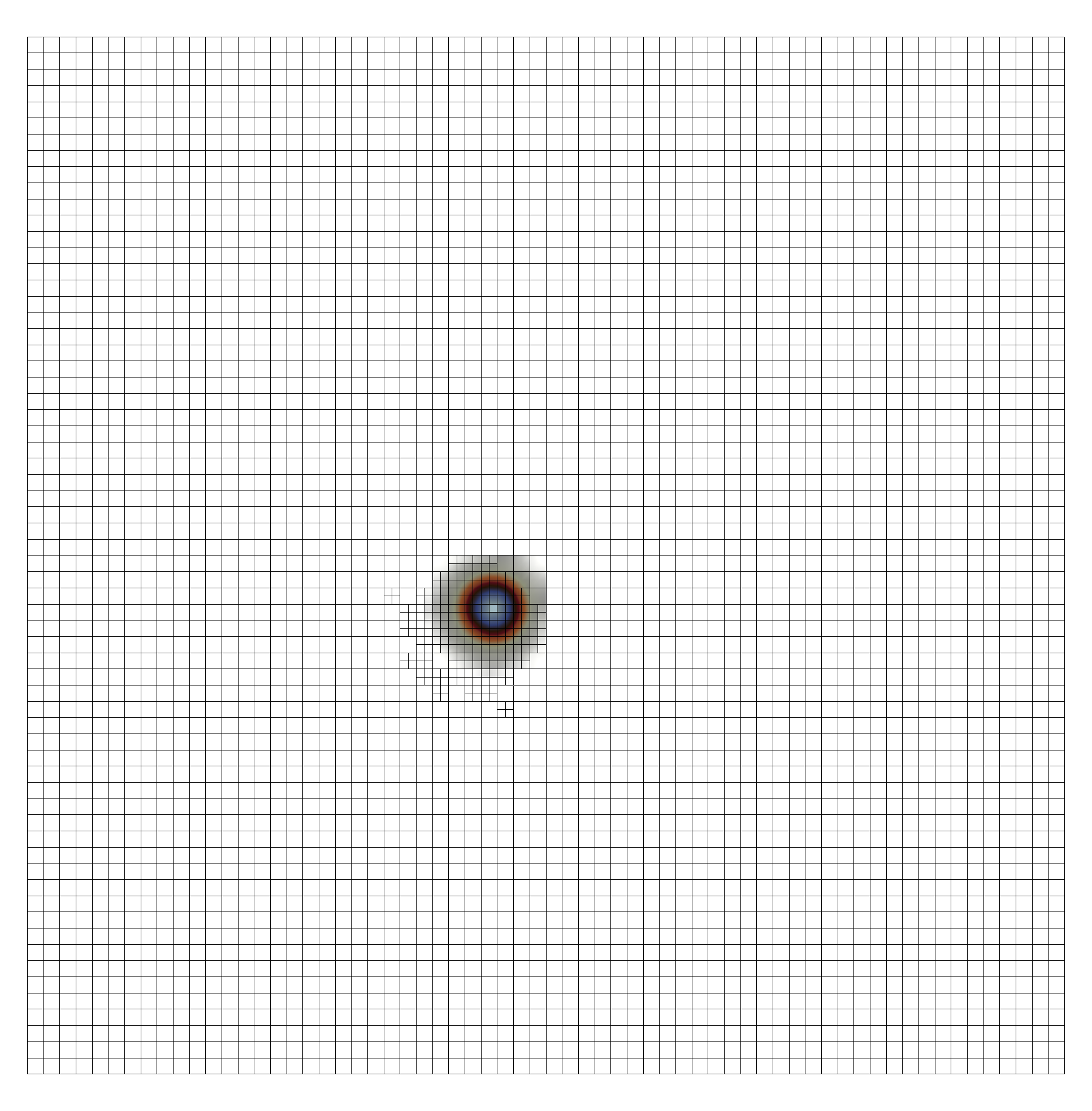}
    \captionsetup{labelformat=empty}
    \caption{$t=6$}
\end{subfigure}

\begin{subfigure}[t]{0.49\linewidth}
    \centering
    \includegraphics[width=0.49\linewidth]{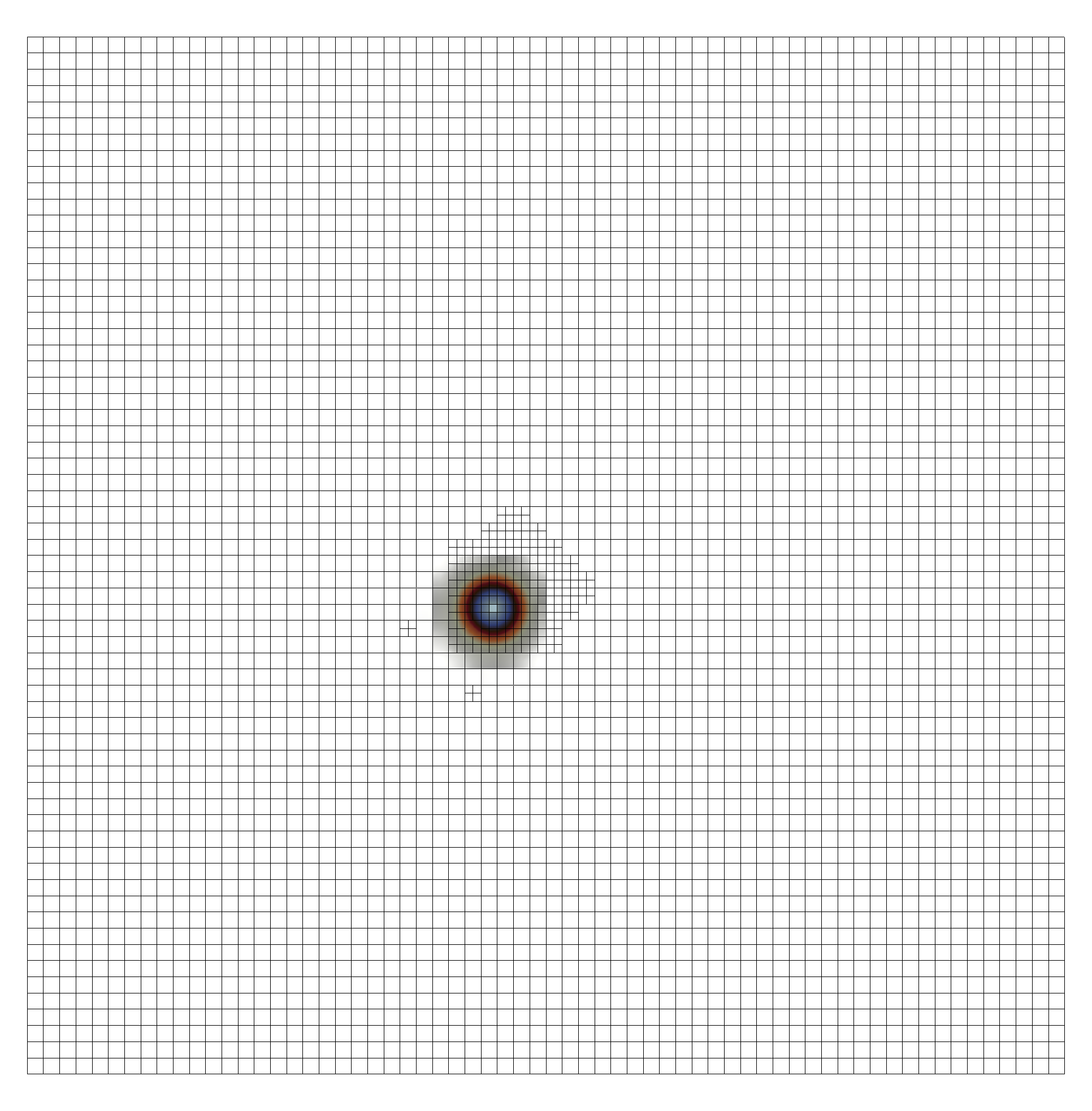}
    \includegraphics[width=0.49\linewidth]{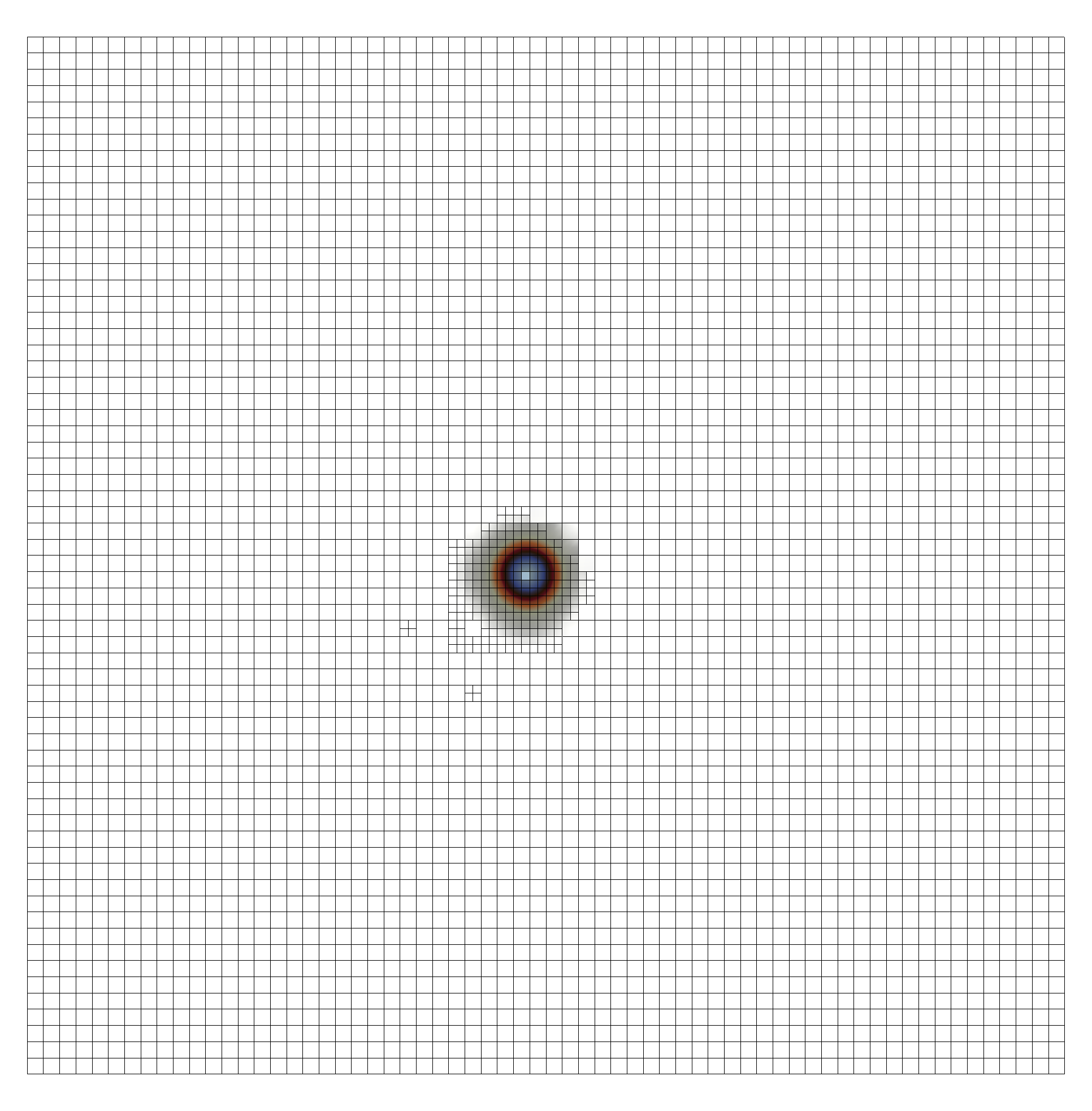}
    \captionsetup{labelformat=empty}
    \caption{$t=7$}
\end{subfigure}
\hfill
\begin{subfigure}[t]{0.49\linewidth}
    \centering
    \includegraphics[width=0.49\linewidth]{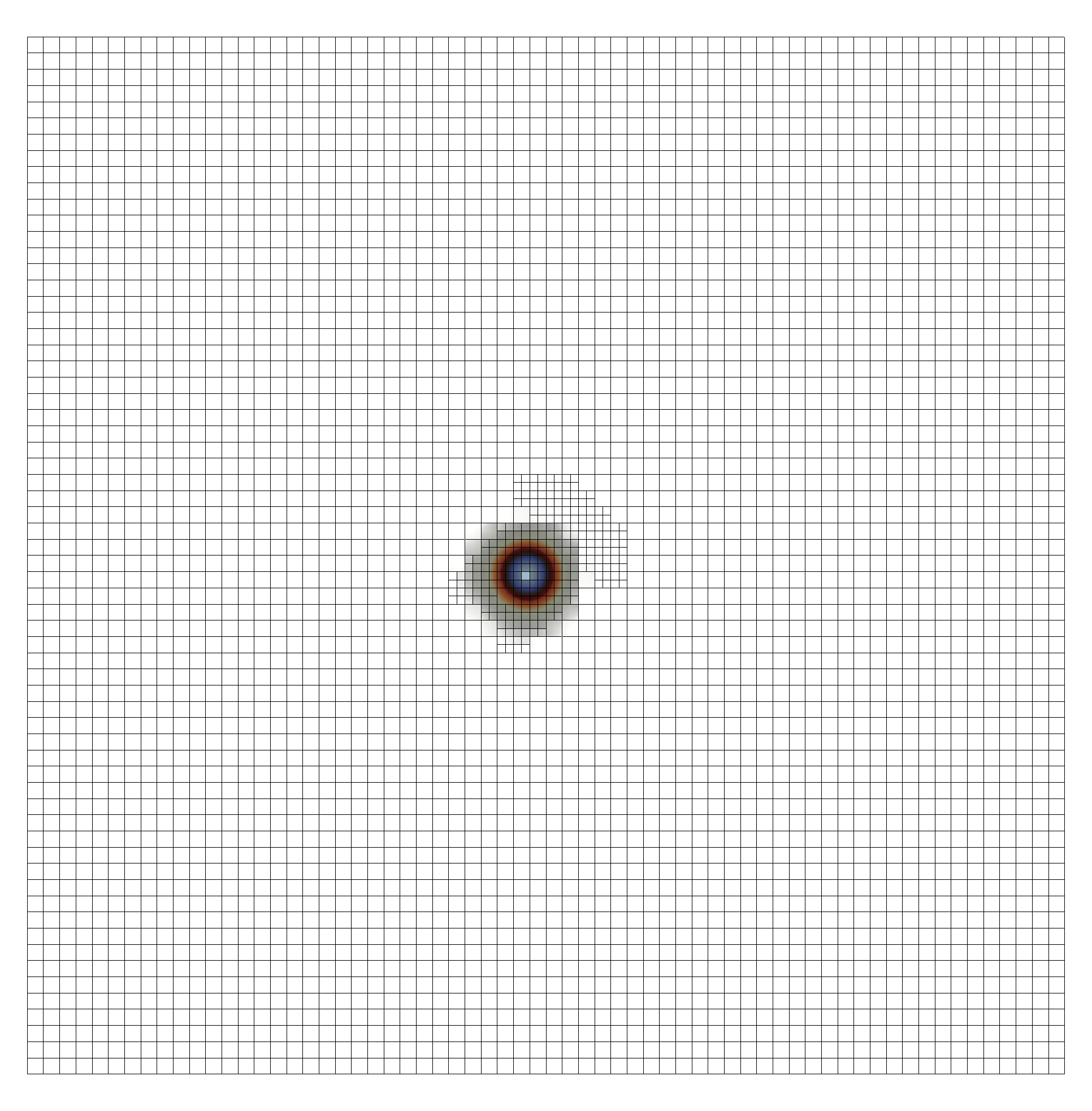}
    \includegraphics[width=0.49\linewidth]{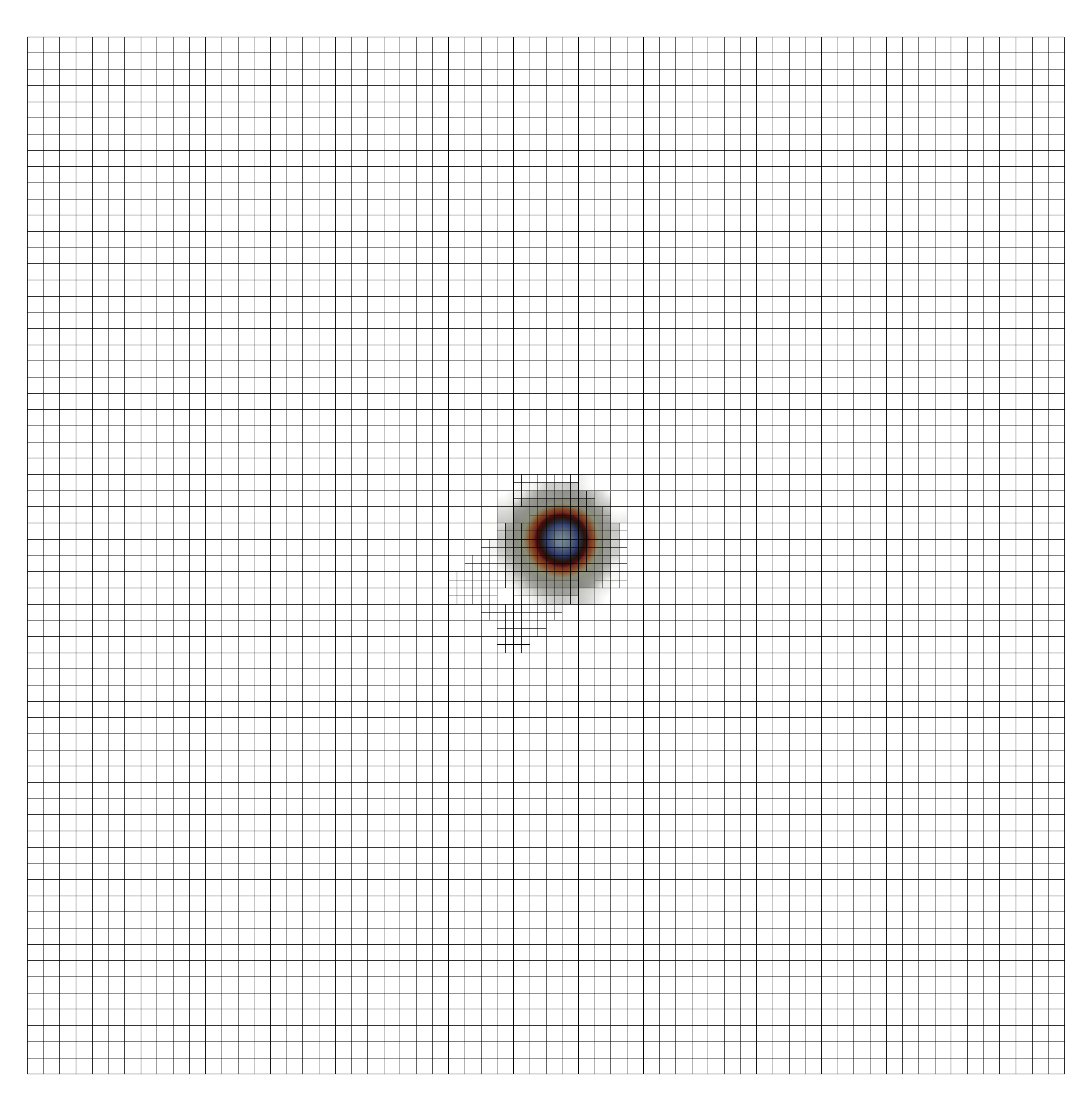}
    \captionsetup{labelformat=empty}
    \caption{$t=8$}
\end{subfigure}

\begin{subfigure}[t]{0.49\linewidth}
    \centering
    \includegraphics[width=0.49\linewidth]{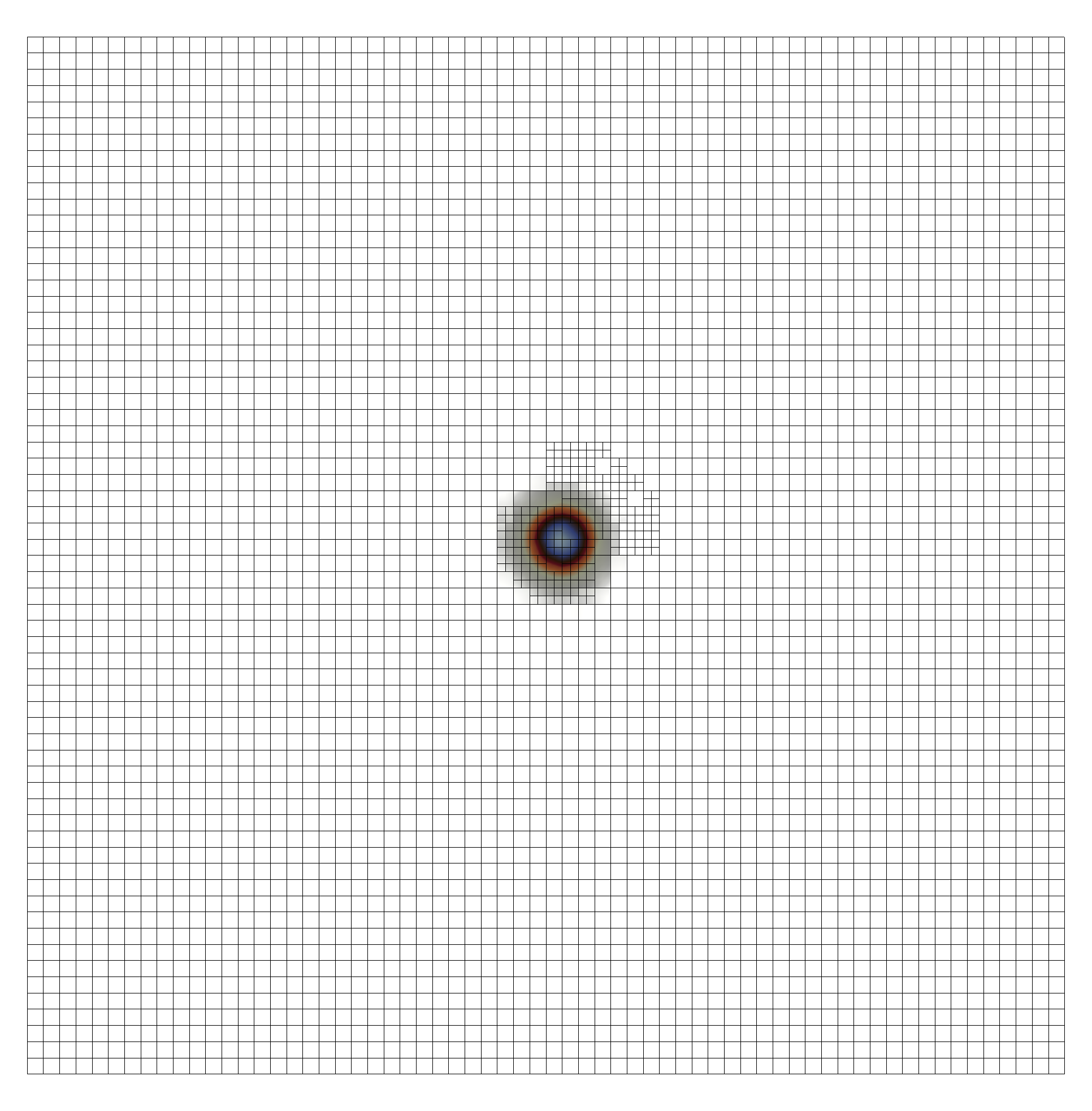}
    \includegraphics[width=0.49\linewidth]{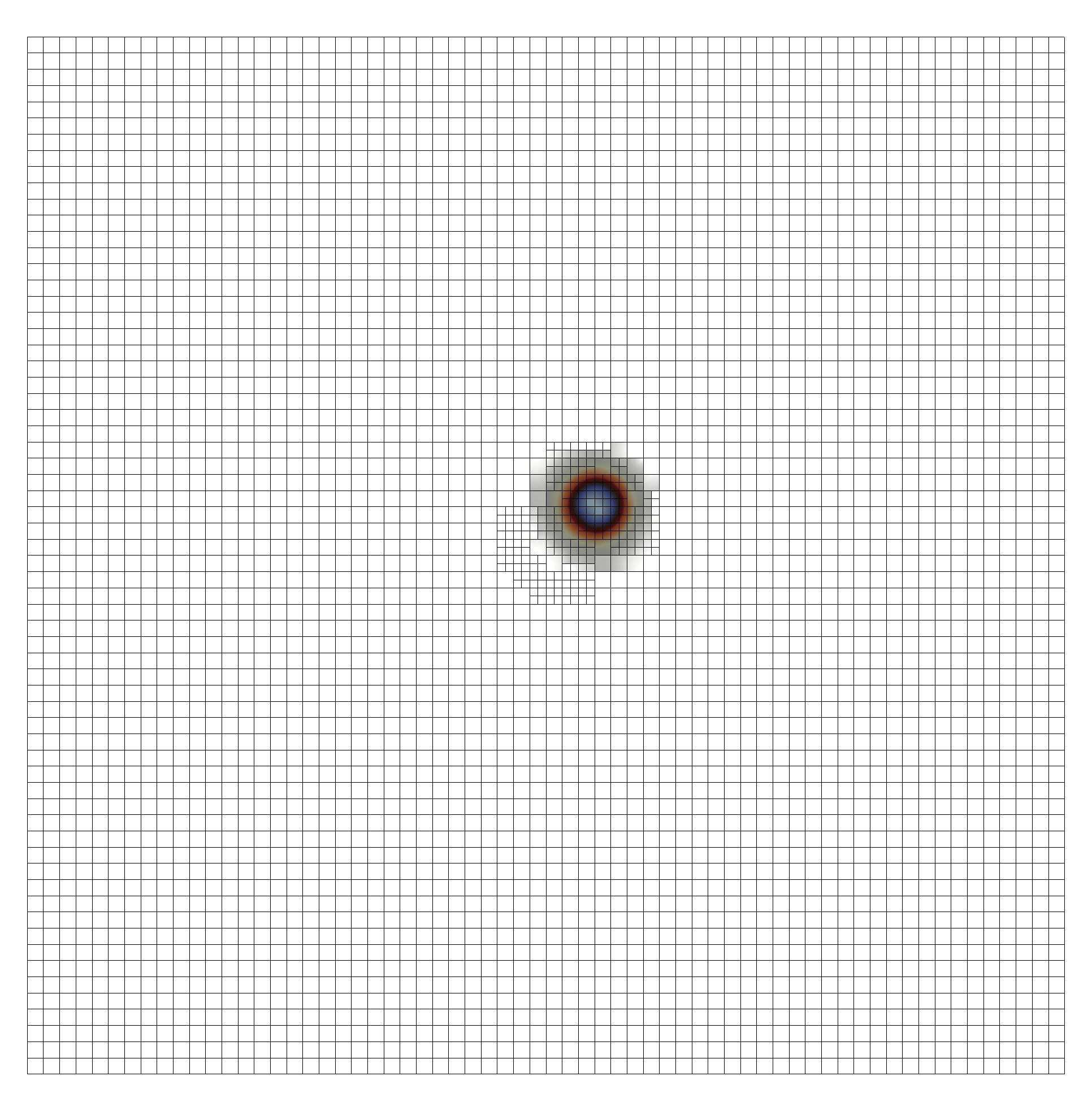}
    \captionsetup{labelformat=empty}
    \caption{$t=9$}
\end{subfigure}
\hfill
\begin{subfigure}[t]{0.49\linewidth}
    \centering
    \includegraphics[width=0.49\linewidth]{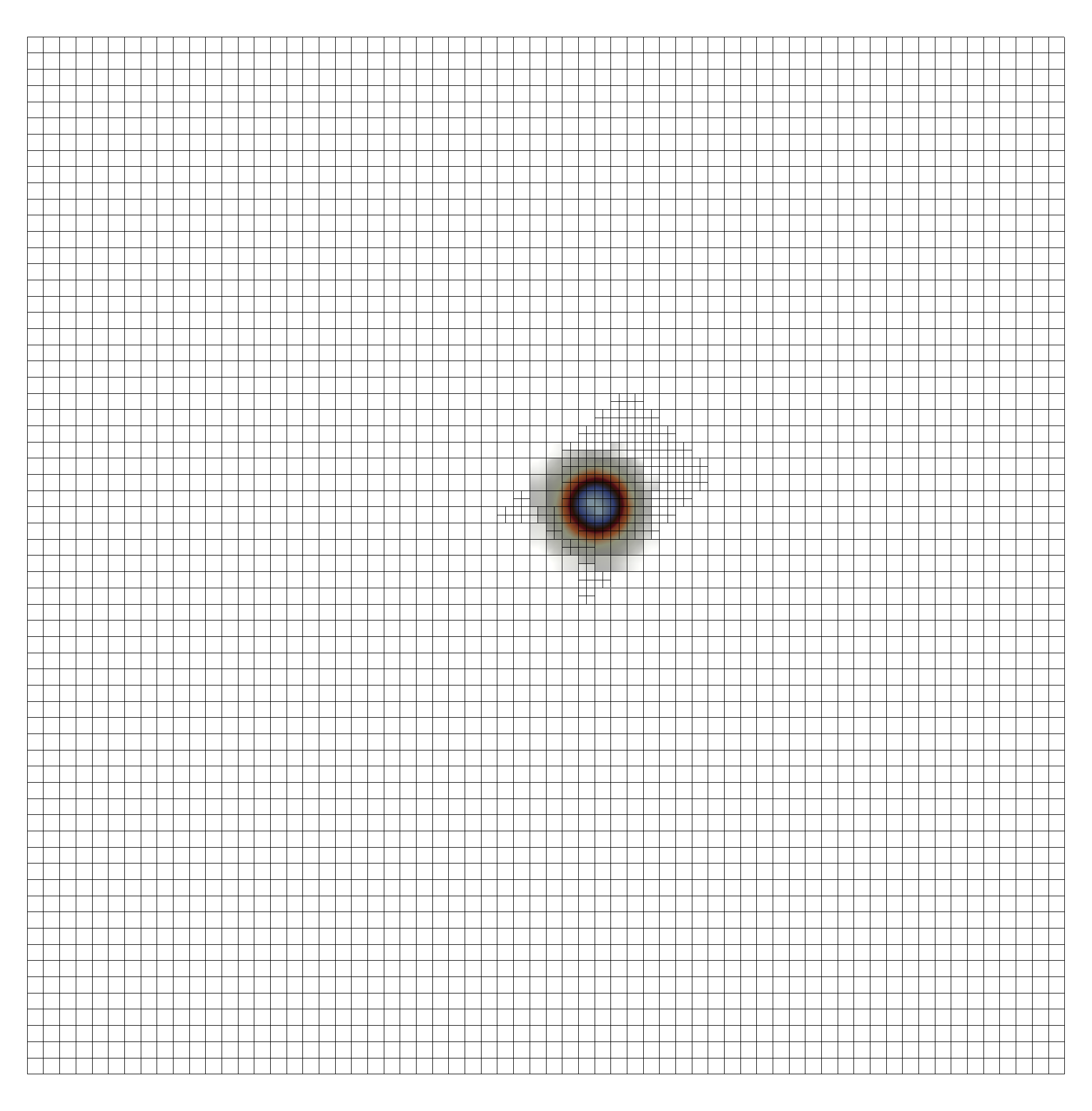}
    \includegraphics[width=0.49\linewidth]{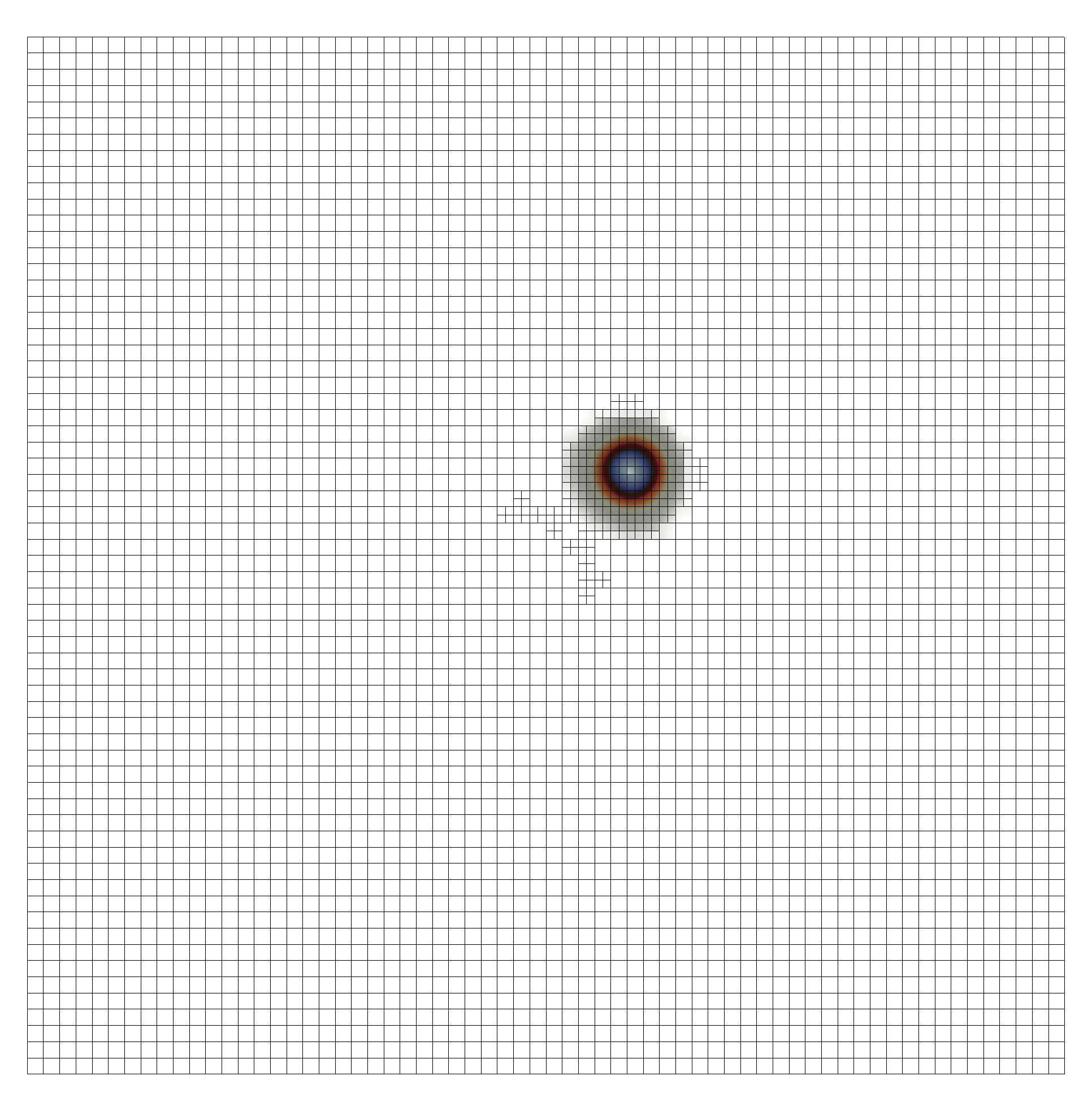}
    \captionsetup{labelformat=empty}
    \caption{$t=10$}
\end{subfigure}
\caption{Continued from \Cref{fig:iso_periodic_velunif_nx16_ny16_depth1_tstep0p25_vdn_graphnet_nodoftime_3_ep210800_nx64_ny64}. VDGN generalizes to larger meshes not seen in training.}
\label{fig:iso_periodic_velunif_nx16_ny16_depth1_tstep0p25_vdn_graphnet_nodoftime_3_ep210800_nx64_ny64_continued}
\end{figure*}

\end{document}